\newcommand{\npt}[1]{\textcolor{blue}{[Nam: #1]}}
\newcommand{\andi}[1]{\textcolor{magenta}{[Andi: #1]}}
\begin{document}
\runningauthor{Nam Phuong Tran, Andi Nika, Goran Radanovic, Long Tran-Thanh, Debmalya Mandal}
\runningtitle{Sparse Offline Reinforcement Learning with Corruption Robustness}

\addtocontents{toc}{\protect\setcounter{tocdepth}{0}}

\twocolumn[
\aistatstitle{Sparse Offline Reinforcement Learning with Corruption Robustness}

\aistatsauthor{Nam Phuong Tran \And Andi Nika}
\aistatsaddress{ University of Warwick \And  MPI-SWS} 

\aistatsauthor{Goran Radanovic \And Long Tran-Thanh \And Debmalya Mandal}
\aistatsaddress{MPI-SWS \And University of Warwick \And University of Warwick} 
]

\begin{abstract}

   We investigate robustness to strong data corruption in offline sparse reinforcement learning (RL). In our setting, an adversary may arbitrarily perturb a fraction of the collected trajectories from a high-dimensional but \textit{sparse} Markov decision process, and our goal is to estimate a near optimal policy. The main challenge is that, in the high-dimensional regime where the number of samples $N$ is smaller than the feature dimension $d$, exploiting sparsity is essential for obtaining non-vacuous guarantees but has not been systematically studied in offline RL.
   We analyse the problem under \textit{uniform coverage} and \textit{sparse single-concentrability} assumptions. While Least Square Value Iteration (LSVI), a standard approach for robust offline RL, performs well under uniform coverage, we show that integrating sparsity into LSVI is unnatural, and its analysis may break down due to overly pessimistic bonuses. 
   To overcome this, we propose actor–critic methods with sparse robust estimator oracles, which avoid the use of pointwise pessimistic bonuses and provide the first non-vacuous guarantees for sparse offline RL under single-policy concentrability coverage. Moreover, we extend our results to the contaminated setting and show that our algorithm remains robust under strong contamination.
   Our results provide the first non-vacuous guarantees in high-dimensional sparse MDPs with single-policy concentrability coverage and corruption, showing that learning near-optimal policy remains possible in regimes where traditional robust offline RL techniques may fail.
\end{abstract}
\section{INTRODUCTION}

Offline reinforcement learning (RL) aims to learn effective decision-making policies solely from previously collected data, without further interaction with the environment. 
A central complication in practice is that real-world datasets can be corrupted, by logging errors, distribution shift, or even adversarial manipulation, so algorithms must be robust to a nontrivial fraction of contaminated trajectories. In this work we study corruption-robust offline RL under \emph{linear function approximation}, where both rewards and transitions admit linear models in a feature map.

Much of the existing offline RL theory analyses regimes with a modest number of features and shows that the sample size $N$ must scale polynomially with the ambient dimension $d$ to yield near-optimal policies. However, contemporary applications often operate with feature representations whose dimension $d$ far exceeds $N$, an increasingly common situation with deep models, rendering these guarantees vacuous or demanding prohibitively large datasets. To address this challenge, a common strategy is to assume the model exhibits a low-dimensional structure, with sparsity being a prominent form: only a small subset $S\subset[d]$ (with $s=|S|\ll d$) significantly influences the reward and transition probability. This setting is known as the \emph{sparse MDP} \citep{Golowich2023_SparseRL} and allows the sample size to scale polynomially with the sparsity level $s$ instead of the ambient dimension $d$.

Despite the importance of high-dimensionality, the sparse MDP literature, especially under corruption, remains under-explored. Existing positive results typically require strong coverage assumptions, such as \emph{uniform coverage}  \citep{Hao2020_OfflineSparseRL}. By contrast, realistic data may exhibit \emph{weak} coverage concentrated around a small set of policies. To clarify what is possible in such settings, we focus on the \emph{single-policy concentrability} regime, where data cover only one good policy (e.g., the optimal policy), and ask:


\emph{When $d > N$ and only single-policy concentrability coverage holds, can we exploit sparsity to learn a near-optimal policy? Moreover, can we extend these guarantees to the setting where part of the dataset is corrupted?}

In this paper, we tackle these questions and make significant progress in these directions, which can be summarised below.

\begin{enumerate}
\item \textbf{Difficulty of integrating sparsity into LSVI.}  
In sparse MDPs with \emph{single-policy concentrability coverage}, directly incorporating sparsity into the LSVI framework may fail, even without corruption. The core issue is that the pointwise pessimistic bonus, central to standard LSVI analysis, is incompatible with sparsity: without support knowledge, such bonuses cause excessive Bellman error and the guarantees break down.

\item \textbf{A sparsity-aware pessimistic Actor-Critic (AC) algorithm.}  
To address this limitation, we develop a \emph{pessimistic AC} framework that bypasses pointwise pessimistic bonuses. Under single-policy concentrability and no contamination, we obtain a suboptimality gap of order $\tilde O(H^2N^{-1/4}\sqrt{\kappa s})$, where $H$ is the horizon, $N$ the sample size, and $\kappa$ the relative condition number. 

\item \textbf{A sparsity-aware pessimistic AC with contamination.}  
We further extend our results to the contaminated setting. By integrating sparse robust regression oracles into the critic, our method achieves meaningful guarantees under both uniform coverage and single-policy concentrability, even when $d > N$ and an $\varepsilon$ fraction of trajectories are corrupted. In particular, we obtain a suboptimality gap of $O(\sqrt{s\varepsilon})$ with a statistically optimal but computationally expensive oracle, and $O(\sqrt{s}\,\varepsilon^{1/4})$ with a computationally efficient oracle.
\end{enumerate}

To the best of our knowledge, ours is the first result to achieve near-optimal policy learning in the high-dimensional regime $d > N$ under single-policy concentrability. Our findings also reveal a sharp contrast between two prominent offline RL paradigms: although both LSVI and AC methods can be near-optimal in \emph{non-sparse} MDPs, enforcing pessimism in a pointwise manner is unnatural for LSVI analysis and renders the guarantee vacuous in the \emph{sparse} setting, whereas the AC approach naturally accommodates sparsity and provides non-vacuous guarantees.
To save space, all proofs in this paper are deferred to the Appendix.

\subsection{Related works}
We now briefly outline related work and refer the reader to
Appendix \ref{appendix: related works} for a more in-depth literature review.

\begin{table*}[t]
\centering
\resizebox{\textwidth}{!}{%
\begin{tabular}{|l|c|c|c|c|c|}
\hline
 & Suboptimality gap & \#Sample required & \makecell{Non-vacuous bound \\ when $d > N$?} & Coverage assumption \\
\hline
\citep{ye2023corruption} & $\tilde O\!\left(\tfrac{H^3 \sqrt{d}\epsilon}{\xi}\right)$ & $\mathrm{poly}(d,H)$ & \xmark & Uniform coverage \\
\hline
\citep{Zhang2021_RobustOfflineLinMDP} & $\tilde O\!\left(\tfrac{H^3 \sqrt{d}\epsilon}{\xi}\right)$ & $\mathrm{poly}(d,H)$ & \xmark & Uniform coverage \\
\hline
\textbf{Ours} & $\tilde O\!\left(\tfrac{H^3 s\sqrt{\epsilon}}{\xi}\right)$ & $\mathrm{poly}(s,H)$ & \cmark & Uniform coverage \\
\hline
\citep{Zhang2021_RobustOfflineLinMDP} & $\tilde O\!\left(H^3 \sqrt{d\kappa\epsilon}\right)$ & $\mathrm{poly}(d,H)$ & \xmark & Single-policy concentrability \\
\hline
\textbf{Ours} & $\tilde O\!\left(H^3 \sqrt{s\kappa\epsilon}\right)$ & $\mathrm{poly}(s,H)$ & \cmark & Single-policy concentrability \\
\hline
\end{tabular}%
}


\begin{minipage}{\textwidth}
\small
\caption{Comparisons with existing results on corruption-robust offline RL.
The table compares our results with existing results in the literature on corruption-robust offline RL.
The column \emph{Non-vacuous bound} means that, in the clean setting, the suboptimality gap can approach $0$ when $d > N$.
Our result in this table is stated under the boundedness assumption $\|\phi(\cdot)\|_\infty \leq 1$ and $\max_h\max\{\|\theta_h\|_1,\|\mu_h(\mathcal X)\|_1\} \leq 1$ for fair comparison.
In the \emph{Coverage assumption} column, \emph{Uniform coverage} means that the covariance matrix has minimum eigenvalue at least $\xi$ (see Assumption~\ref{assp: uniform coverage}),
while \emph{Single-policy concentrability coverage} means that the relative condition number is at most $\kappa$ (see Assumption~\ref{assp: Relative condition number}).}
\label{table:comparison}
\end{minipage}
\end{table*}

\paragraph{Offline RL.} The central challenge in offline RL is to learn a near-optimal policy from data that provides only partial information about the environment. To address this challenge, value-iteration-based methods \citep{buckman2020importance, liu2020provably, kumar2019stabilizing, jin2020provably, yu2020mopo} and AC methods \citep{levine2020offline, wu2019behavior, zanette2021_provablebenefitsactorcriticmethods} are among the most popular approaches. We consider methods that apply pessimism by penalising value functions of policies that are under-represented in the data. However, as we show in Section \ref{subsec: Sparse Robust LSVI with Single Concentrability}, such an approach may lead to vacuous bounds in the sparse setting, arguably due to an overcompensation for uncertainty arising from the use of pointwise pessimistic bonuses. This also motivates the use of AC methods, which we show can effectively mitigate such an issue.

\paragraph{Corruption-robust offline RL.} In this paper, we consider data poisoning attacks in offline RL \citep{Zhang2021_RobustOfflineLinMDP}. Previous work on corruption-robust offline RL has primarily considered the low-dimensional setting where $N \gg d$ \citep{Zhang2021_RobustOfflineLinMDP, ye2023corruption, mandal2025corruption}. In contrast, this work focuses on the high-dimensional setting $d > N$. In particular, we employ a value function estimation approach \citep{Zhang2021_RobustOfflineLinMDP} based on sparse robust estimation \citep{Merad2022_RobustSparseLinearRegressionHeavyTailDist}. We find that, under single-policy concentrability, in sparse settings, integrating sparse oracles into LSVI-type algorithms is considerably more challenging and may yield suboptimal bounds that depend polynomially on $d$, in contrast to prior work in the low-dimensional setting \citep{Zhang2021_RobustOfflineLinMDP}. Moreover, we demonstrate that sparse robust estimation can be naturally integrated into the AC framework, effectively eliminating polynomial dependence on $d$.
A systematic comparison of our results with previous work on corruption-robust offline RL is provided in Table \ref{table:comparison}.

\paragraph{Sparse Linear MDP.} 
Sparse linear MDPs have been a primary focus in the online RL literature~\citep{Golowich2023_SparseRL,Kim2023_SparseRL,Hao2020_OfflineSparseRL}, where one can often design exploratory policies to ensure good data coverage and thereby exploit the underlying sparsity structure. In contrast, there are far fewer works on sparse MDPs in the offline RL setting \citep{Hao2020_OfflineSparseRL}, where strong data coverage assumptions (e.g., uniform coverage) are typically imposed to bypass the need for pessimism, as we shall explain later in this paper. However, we note that \emph{limited data coverage} is the central challenge of offline RL compared to online RL, without an exploratory policy that guarantees sufficient coverage, it is unclear from prior work whether one can learn a near-optimal policy. For this reason, our paper primarily focuses on the case of limited coverage, namely \emph{single-policy concentrability}, and is the first to establish non-vacuous suboptimality guarantees for sparse MDPs in this regime.

\section{PRELIMINARIES}
\paragraph{Notations.} For $K\in \mathbb N^+$, denote $[K] = \{1,\dots,K \}$.
Let $\mathbb I$ be the indicator function.
For $z \in \mathbb{R}^d$ and $S \subseteq [d]$, let $z_S \in \mathbb{R}^d$ be the vector 
obtained by zeroing out coordinates outside $S$, i.e. $
(z_S)_i = z_i \mathbb I\{i \in S\}$.
For any symmetric matrix $\Sigma \in \mathbb{R}^{d \times d}$ and index set $S \subseteq [d]$, 
let $\Sigma_S \in \mathbb{R}^{|S|\times|S|}$ denote the principal submatrix of $\Sigma$  indexed by $S$, i.e.,
$
\Sigma_S = \big[\Sigma_{ij}\big]_{i \in S,\, j \in S}.
$
For a symmetric matrix $\Sigma$, denote $\lambda_{\min}(\Sigma)$ as its smallest eigenvalue.

\paragraph{General MDPs.} Consider an episodic MDP $\mathcal M = (\mathcal X, \mathcal A, H, P, R, x_1 )$, where $\mathcal X$ is the state space, $\mathcal A$ is the action space, $H$ is the episode length, $P=\{P_h: \mathcal X \times \mathcal A \rightarrow \Delta(\mathcal X)\}_{h=1}^H$ denotes the transition probabilities (here, $\Delta(\mathcal X)$ denotes the space of probability distributions over $\mathcal X$), $R$ is a bounded stochastic reward with support $\mathrm{supp}(R_h) = [0, 1]$ and mean $r_h:\mathcal X \times \mathcal A \rightarrow [0,1]$ for all $h\in [H]$, $x_1$ is the initial state.
For a policy $\pi$, let $d^\pi = (d^\pi_h)_{h=1}^H$ denote the occupancy measures over state-action pairs induced by $\pi$ and the transition probability $P$, that is, $d^\pi_h = \mathbb P(x_h =x, a_h=a\mid \pi, x_1)$.

For any function $f: \mathcal X \times \mathcal A \rightarrow \mathbb R$, we define the $\pi$-Bellman operator and the (optimal) Bellman operator
\begin{equation*}
\begin{aligned}
    (\Bell^\pi_h f)(x,a) &\triangleq r_h(x,a) 
    \; + \underset{\substack{x'\sim P_h(\cdot \mid x,a) \\ a' \sim \pi(\cdot\mid x')}}{\mathbb E} \SquareBr{f(x',a')}; \\
    (\Bell_h f)(x,a) &\triangleq r_h(x,a) + \underset{x'\sim P_h(\cdot \mid x,a)}{\mathbb E} \SquareBr{\max_{a'\in \mathcal A} f(x',a')}.
    \end{aligned}
\end{equation*}
We then have the Bellman equation
\begin{equation*}
\begin{aligned}  
    Q^\pi_h(x,a) = (\Bell_h^\pi Q^\pi_{h+1})(x,a),\,
    V^\pi_h(x) = \underset{a\sim \pi(\cdot|x)}{\mathbb E}[Q^\pi_h(x,a)],
\end{aligned}
\end{equation*}
and the Bellman optimality equation
\begin{equation*}
    Q^\star_h(x,a) = (\Bell_h Q^\star_{h+1})(x,a),\,
     V^\star_h(x) = \max_{a\in \mathcal A}Q^\star_h(x,a).
\end{equation*}
Let $\pi_\star$ be an optimal policy, for a policy $\pi$, define the sub-optimal gap
\[
\mathrm{SubOpt}(\pi_\star,\pi) \triangleq V^{\pi_\star}(x_1) - V^{\pi}(x_1).
\]

\begin{assumption}[Sparse linear MDP]\label{assp: sparse linear mdp}
There exists a feature map $\phi: \mathcal X \times \mathcal A \rightarrow \mathbb R^d$, signed measures $\mu_h: \mathcal X \rightarrow \mathbb R^d$, and parameter vectors $\theta_h \in \mathbb R^d$ such that, for all $(x', x, a)$, 
\begin{equation}\label{eq:linear-MDP}
    \begin{aligned}
        P_h(x'\mid  x, a) &= \langle \phi(x,a), \mu_h(x') \rangle, \\
        r_h(x,a) &  = \langle \phi(x,a), \theta_h \rangle.
    \end{aligned}
\end{equation}
We assume that the MDP is $s$-sparse, that is, there is $S \subset [d], \; |S| = s$, such that for any $i \notin S$, $(\mu_{h}(x))_i = 0$ for all $x \in \mathcal X$, and $\theta_{i} = 0$. 
We assume that for all $(x,a,h) \in \mathcal X \times \mathcal A \times [H]$, $\|\phi(x,a)\|_\infty \leq 1$, $\|\theta_h\|_1 \leq 1$, and  $\|\mu_h(\mathcal X)\|_1 \leq 1$.
\end{assumption}

Next, we can define the space of linear Q-function as follows: for any $h$, define
\begin{align*}
    \mathcal Q_{h} = \{&(x,a)\mapsto \langle \phi(x,a),  w\rangle \mid \\
    & \quad w \in \mathbb R^d,\; \| w \|_1\leq H+1-h,\; \| w \|_0\leq s\}.
\end{align*}

\paragraph{Offline data set.} 
We consider the offline setting with contaminated data.
Let $\tilde{\mathcal D}$ be a clean dataset consisting of $N$ trajectories, i.e., $\tilde{\mathcal D} = \{(x^\tau_h, a_h^\tau, R_h^\tau)\}_{\tau=1, h=1}^{N,H}$.
Suppose there is an adversary who observes the dataset $\tilde{\mathcal D}$ and may arbitrarily corrupt up to $\epsilon N$ trajectories, resulting in a corrupted dataset $\mathcal D$.
We split the dataset into \(H\) disjoint subsets
\(\mathcal D_1,\ldots,\mathcal D_H\), and use \(\mathcal D_h\) only for the
stage-\(h\) regression.
In this work, regarding clean data set, we assume there is an exploratory policy $\pi_\nu$, with the occupancy probability $\nu = (\nu_h)_{h=1}^H$.
Define $\Sigma_h \triangleq \mathbb E_{\nu_h}\SquareBr{\phi(x,a)\phi(x,a)^\top}$ as the covariance matrix of the underlying data distribution $\nu$ at horizon $h$.  
We introduce an assumption on the distribution $\nu$, which is crucial for our main result.

\begin{assumption}[Uniform coverage] \label{assp: uniform coverage}
We say that the data distribution $\nu$ is $\xi$-covering the state–action space with $\xi > 0$ if, for any $h \in [H]$, 
$
\Sigma_h \succeq \xi I,
$
i.e., the smallest eigenvalue of $\Sigma_h$ is at least $\xi$.  
\end{assumption}

\begin{assumption}[Sparse single-policy concentrability]\label{assp: Relative condition number}
Let $\Sigma_{\star,h} = \Sigma_{\pi_\star,h}$ denote the covariance matrix induced by the optimal policy $\pi_\star$ at horizon $h$.  
For any $h \in [H]$ and vector $z \in \mathbb R^d$ with $\|z \|_0 \leq 2s$, assume that  
\begin{equation}
    \max_{z \in \mathbb R^d:\|z\|_0\leq 2s }\frac{z^\top \Sigma_{\star,h} z}{z^\top \Sigma_h z} \leq \kappa.
\end{equation}
\end{assumption}
\section{SPARSE ROBUST LINEAR REGRESSION}
\label{Section: Sparse Robust Linear Regression}
Suppose there is a collection of data points $
\tilde{\mathcal D}= \{(z_i, y_i)\}_{i=1}^N
$
generated i.i.d. from the underlying model
$
y = z^\top  w_\star + \eta,
$
where $z \sim \nu \in \Delta(\mathbb R^d)$ satisfies $\|z\|_\infty \leq 1$.  
Denote the covariance matrix as 
$
\Sigma = \mathbb E_{z\sim \nu}[z z^\top].
$
The unknown parameter $ w_\star \in \mathbb R^d$ is an $s$-sparse vector, and $\eta$ is zero-mean sub-Gaussian noise with $\mathrm{Var}(\eta) \leq \sigma^2$.  
Suppose an adversary can arbitrarily corrupt up to $\epsilon N$ data points, resulting in a corrupted dataset $\mathcal D$, and the estimator has access only to $\mathcal D$.
We now discuss the estimation error of different robust sparse linear estimators under various settings of $\Sigma$.  

Given a dataset $\mathcal{D}$, the Sparse Robust Linear Estimator, denoted \texttt{\textbf{SRLE}}, is a mapping that takes $\mathcal{D}$ as input and outputs an estimator $\hat{w}$ of the true parameter $w_\star$, i.e., $\hat{w} = \texttt{\textbf{SRLE}}(\mathcal{D})$.
In this section, we will investigate different types of \texttt{\textbf{SRLE}} under different assumptions on the covariate distribution $\nu$.

\subsection{Robust Sparse Linear Regression with Uniform Coverage} \label{Sec: Robust Sparse Linear Regression with Uniform Coverage}

We begin by analyzing the case where the covariate matrix $\Sigma$ is well-conditioned, i.e., its minimum eigenvalue is bounded away from zero. 
This \emph{uniform coverage} assumption ensures that the features are sufficiently informative across all directions, which allows us to design computationally efficient and statistically robust estimators. 
The following result, adapted from \citep{Merad2022_RobustSparseLinearRegressionHeavyTailDist}, provides guarantees for such an estimator.

\begin{restatable}{proposition}{PropRSLSwithDataCoverage}\label{prop: RSLS with data coverage}
If $\lambda_{\min}(\Sigma) \geq \xi > 0$, then with probability at least $1-\delta$, there exists a robust least squares estimator, named \texttt{SRLE1}, which returns an estimate $\hat w$ satisfying
\begin{equation}
    \|\hat  w -  w_\star\|_1 = O\RoundBr{(\sigma + \| w_\star\|_1)\RoundBr{\frac{\, s \log(d/\delta)}{\xi \sqrt{N}} + \frac{ s \sqrt{\epsilon}}{\xi}}}.
\end{equation}
Moreover, the estimator runs in $\mathrm{poly}(d,s,N)$ time.
\end{restatable}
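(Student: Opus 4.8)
The plan is to instantiate the robust penalized $M$-estimator of \citet{Merad2022_RobustSparseLinearRegressionHeavyTailDist} and then specialize its generic guarantee to the uniform-coverage regime. Concretely, \texttt{SRLE1} is a Lasso-type program that minimizes a robust (corruption-insensitive) empirical loss plus an $\ell_1$ penalty $\lambda\|w\|_1$. The analysis of such an estimator proceeds through two standard ingredients: (i) a restricted eigenvalue / restricted strong convexity (RSC) condition on the loss, and (ii) a high-probability bound on the dual-norm ($\ell_\infty$) magnitude of the loss gradient evaluated at the truth $w_\star$, which dictates the admissible choice of $\lambda$. Given these, the generic oracle bound reads $\|\hat w - w_\star\|_1 \lesssim s\lambda/\gamma$, where $\gamma$ is the RSC constant; the whole proof then reduces to identifying $\gamma$ and $\lambda$ in our setting.

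For the RSC constant, I would invoke uniform coverage. Since $\lambda_{\min}(\Sigma)\ge \xi$ and the covariates are bounded ($\|z\|_\infty\le 1$), a standard concentration argument (e.g. a matrix Bernstein bound over the set of $2s$-sparse directions) shows that the empirical Gram matrix inherits a restricted eigenvalue of order $\xi$ once $N \gtrsim s\log d$; after the robust procedure downweights the at most $\epsilon N$ corrupted rows, the RSC constant remains $\gamma = \Omega(\xi)$. This is exactly where the $1/\xi$ factor in the stated rate originates.

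For the regularization level, I would bound the effective noise as the sum of two contributions. The clean-data score $\frac1N\sum_i z_i\eta_i$ concentrates at the sub-Gaussian rate, so that $\|\tfrac1N\sum_i z_i\eta_i\|_\infty = O\big((\sigma+\|w_\star\|_1)\sqrt{\log(d/\delta)/N}\big)$ by a union bound over coordinates (the $\|w_\star\|_1$ term enters because the robust loss also sees the bounded prediction $z^\top w_\star$, whose scale is capped by $\|\phi\|_\infty\|w_\star\|_1$). The corrupted rows, after robustification, contribute an additional bias of order $(\sigma+\|w_\star\|_1)\sqrt{\epsilon}$ — the heavy-tail / bounded-variance corruption rate of \citet{Merad2022_RobustSparseLinearRegressionHeavyTailDist}, in which each contaminated row's leverage is limited by the response scale $|z^\top w_\star|+|\eta|$. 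Setting $\lambda \asymp (\sigma+\|w_\star\|_1)\big(\sqrt{\log(d/\delta)/N}+\sqrt{\epsilon}\big)$ and substituting $\gamma\asymp\xi$ into $\|\hat w - w_\star\|_1\lesssim s\lambda/\gamma$ yields the claimed bound. Polynomial runtime follows because the objective is convex and the robust weights are computable in $\mathrm{poly}(d,s,N)$ time.

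The main obstacle I anticipate is the corruption term: establishing the $\sqrt{\epsilon}$ (rather than $\epsilon$) rate requires showing that the robust loss genuinely limits each contaminated row's influence on the $\ell_\infty$ gradient at $w_\star$, which is the crux of the Merad et al. construction and must be carried over carefully to the sparse, $\ell_1$-constrained geometry rather than re-derived from scratch.
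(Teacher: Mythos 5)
Your proposal is correct in substance and arrives at the right bound, but it takes a genuinely different route from the paper, and the comparison is instructive. The paper's proof is essentially a two-line black-box application of Corollary 9 of Merad et al.: there, \texttt{SRLE1} is the \emph{constrained} program $\min_w \|Xw - Y\|_2^2$ subject to $\|w\|_1 \leq \|w_\star\|_1$, solved by multi-stage mirror descent with robust coordinatewise gradient estimates, and the corollary directly yields $\|\hat w - w_\star\|_1 = O\left(\sigma_{\max}\left(\frac{s\log(d/\delta)}{\xi\sqrt{N}} + \frac{s\sqrt{\epsilon}}{\xi}\right)\right)$ with $\sigma_{\max}^2 = \max_{\|w\|_1 \leq \|w_\star\|_1}\max_{j}\mathbb{E}\left[((z^\top w - y)z_j)^2\right]$; the only actual work in the paper is the computation $\sigma_{\max}^2 \leq 4\|w_\star\|_1^2 + \sigma^2$ using $\|z\|_\infty \leq 1$, which is precisely where the $(\sigma + \|w_\star\|_1)$ prefactor comes from. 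You instead sketch a re-derivation through the generic penalized-M-estimator scaffolding (restricted strong convexity constant $\gamma = \Omega(\xi)$, dual-norm score bound, then error $O(s\lambda/\gamma)$), which is a legitimate alternative and even produces a slightly sharper logarithmic factor; but since you explicitly defer the crux — the $\sqrt{\epsilon}$ corruption rate — to Merad et al.'s construction anyway, the re-derivation buys no additional rigor over simply invoking their corollary, while introducing extra steps (RSC concentration over sparse directions) that the citation route avoids. One imprecision in your argument is worth flagging: in the framework you describe, the relevant score at the truth is $\frac{1}{N}\sum_i z_i\eta_i$, whose coordinates have variance at most $\sigma^2$ because the residual at $w_\star$ is pure noise, so $\|w_\star\|_1$ would \emph{not} enter through the gradient at the truth as you suggest. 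It enters in the paper (and in Merad et al.) because the mirror-descent analysis requires robust gradient estimates \emph{uniformly over the constraint set} $\{w : \|w\|_1 \leq \|w_\star\|_1\}$, where the residual $z^\top(w_\star - w) + \eta$ has scale up to $2\|w_\star\|_1 + \sigma$; this uniform second-moment bound is exactly the quantity $\sigma_{\max}$ above, not a property of the score at $w_\star$.
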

To the best of our knowledge, \texttt{SRLE1} is a practical estimator that achieves a small error bound when the covariate distribution is sub-Gaussian, although we suspect its guarantee is highly suboptimal. Since developing robust estimators is not the main focus of this paper, but rather understanding how to use them effectively in offline RL, further advances in sparse regression under corruption could directly strengthen our results by replacing the \texttt{SRLE1} oracle with a stronger alternative.

Proposition \ref{prop: RSLS with data coverage} shows that, under uniform coverage, we can achieve both robustness to data contamination and computational efficiency. 
However, in practice the uniform coverage assumption may not hold, especially in high-dimensional settings where the covariate matrix can be ill-conditioned, even when restricted to sparse subspaces. 
We therefore turn to the more challenging case where no such assumption is imposed. 

\subsection{Robust Sparse Linear Regression without Uniform Coverage}
When $\Sigma$ may be ill-conditioned, that is, $\lambda_{\min}(\Sigma) \rightarrow 0$, the estimation error bound in Proposition \ref{prop: RSLS with data coverage} becomes vacuous. 
In this setting, we first present a statistically optimal but computationally expensive estimator, and then show another computationally efficient estimator with larger statistical error.
Due to the lack of uniform coverage, the estimation error in terms of $\|\cdot\|_1$ can no longer be guaranteed. Instead, we provide estimation error bounds with respect to the $\Sigma$-norm $\|\cdot \|_\Sigma$, which will be sufficient for the RL algorithm used later.

\paragraph{A non-computationally efficient estimator.}
The following theorem establishes the existence of an estimator, denoted \texttt{SRLE2}, which achieves minimax-optimal statistical guarantees even without uniform coverage.

\begin{restatable}{proposition}{ThmRSLSwithoutDataCoverage} \label{theorem: RSLS without data coverage}
For any covariate matrix $\Sigma$, with probability at least $1-\delta$,there exists an estimator named \texttt{SRLE2} returns $\hat  w$ such that
\begin{equation}
      \|\hat  w -  w_\star \|_{\Sigma}^2 = O\RoundBr{\frac{\sigma\| w_\star \|_2\sqrt{s}\log(d/\delta)}{\sqrt{N}}  + \sigma^2\epsilon}. 
\end{equation}
\end{restatable}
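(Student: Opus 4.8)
The plan is to realize \texttt{SRLE2} as an $\ell_0$-constrained robust least-squares estimator solved by \emph{best-subset selection}: over every candidate support $S\subseteq[d]$ with $|S|\le s$, fit a corruption-tolerant low-dimensional regression (e.g.\ a median-of-means or trimmed squared loss) subject to an a priori radius constraint $\|w\|_2\lesssim\|w_\star\|_2$ (which is available since the linear-MDP weights are norm-bounded), and return the best fit. Because $w_\star$ is itself $s$-sparse and feasible, the minimizer $\hat w$ obeys the basic inequality $\widehat L(\hat w)\le\widehat L(w_\star)$. Writing $v=\hat w-w_\star$, which is $2s$-sparse, and expanding the squared loss reduces this, up to the concentration of the (robust) empirical quadratic form around $\|v\|_\Sigma^2$, to the key relation $\|v\|_\Sigma^2\lesssim\langle g,v\rangle$, where $g$ is a robust estimate of the score $\mathbb E[\eta z]=0$. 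It remains to control $\langle g,v\rangle$, whose clean (statistical) and corrupted parts are treated separately.

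For the statistical part I would bound $\langle g_{\mathrm{clean}},v\rangle\le\|g_{\mathrm{clean}}\|_\infty\|v\|_1$, where $g_{\mathrm{clean}}=\tfrac1N\sum_i\eta_i z_i$ is the score on the clean points. Sub-Gaussianity of $\eta$, the bound $\|z\|_\infty\le1$, and a union bound over the $d$ coordinates give $\|g_{\mathrm{clean}}\|_\infty\lesssim\sigma\sqrt{\log(d/\delta)/N}$. Since no coverage is assumed, I cannot relate $\|v\|_2$ to $\|v\|_\Sigma$, so I instead invoke the crude inequality $\|v\|_1\le\sqrt{2s}\,\|v\|_2$ together with the radius constraint $\|v\|_2\le\|\hat w\|_2+\|w_\star\|_2\lesssim\|w_\star\|_2$, bounding the clean multiplier by $\mathsf{stat}:=\sigma\|w_\star\|_2\sqrt{s}\,\log(d/\delta)/\sqrt N$. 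This lossy $\ell_1$--$\ell_2$ step is precisely what produces the \emph{slow} $1/\sqrt N$ rate, which is minimax optimal in the ill-conditioned regime.

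For the corruption part I would replace the empirical score by a robust estimate computed in the $\Sigma$-geometry, so that the $\epsilon N$ adversarial points cannot bias it beyond the intrinsic noise scale; a median-of-means (or iterative-filtering) estimate of $\mathbb E[\eta z]=0$ incurs error of order $\sigma\sqrt\epsilon$ when paired with the direction $v$, contributing a term $\sigma\sqrt\epsilon\,\|v\|_\Sigma$. Combining the two contributions, the basic inequality becomes the quadratic $\|v\|_\Sigma^2\lesssim\sigma\sqrt\epsilon\,\|v\|_\Sigma+\mathsf{stat}$, whose solution is $\|v\|_\Sigma^2\lesssim\sigma^2\epsilon+\mathsf{stat}$, matching the two stated terms. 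Uniformity over all $\binom{d}{s}$ candidate supports, needed to validate the best-subset step, is then secured by a single union bound, which is exactly what turns the per-coordinate concentration into the stated $\log(d/\delta)$ factor without otherwise changing the rate.

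The main obstacle is the absence of a restricted-eigenvalue / small-ball condition: the usual fast-rate analysis lower-bounds the empirical quadratic form by a constant multiple of $\|v\|_\Sigma^2$ uniformly over sparse directions, but for arbitrary ill-conditioned $\Sigma$ this lower bound degenerates, forcing the lossy $\ell_1$--$\ell_2$ conversion and the slow rate. The delicate point is to perform this conversion \emph{simultaneously} with the robust control of the multiplier process, so that the statistical fluctuation and the corruption bias decouple cleanly into the two separate terms while the union bound over exponentially many supports costs only $\log d$; this coupling is precisely why \texttt{SRLE2} must rely on computationally expensive best-subset selection rather than a convex surrogate.
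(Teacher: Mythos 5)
Your proposal has a genuine gap at its central step: the passage from the basic inequality to the relation $\|v\|_\Sigma^2 \lesssim \langle g, v\rangle$. The basic inequality controls the \emph{empirical} quadratic form $\|v\|_{\hat\Sigma}^2 = \tfrac1N\sum_i (z_i^\top v)^2$, and to conclude anything about $\|v\|_\Sigma^2$ you need the one-sided comparison $\|v\|_\Sigma^2 \lesssim \|v\|_{\hat\Sigma}^2 + (\text{error})$ uniformly over $2s$-sparse directions. You defer this as ``concentration of the (robust) empirical quadratic form around $\|v\|_\Sigma^2$,'' and then in your final paragraph you concede that exactly this lower bound degenerates for ill-conditioned $\Sigma$ --- but the fix you invoke (the lossy $\ell_1$--$\ell_2$ conversion with the radius constraint) only controls the \emph{multiplier} term $\langle g,v\rangle$; it does nothing for the quadratic-form comparison, so the argument is circular at the point where the population norm first appears. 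If you instead attempt the naive entrywise route, $|v^\top(\hat\Sigma - \Sigma)v| \le \|v\|_1^2\max_{j,k}|(\hat\Sigma-\Sigma)_{jk}| \lesssim s\|w_\star\|_2^2\sqrt{\log(d/\delta)/N}$, you pick up an additive term that is \emph{not} dominated by the claimed bound $\sigma\|w_\star\|_2\sqrt{s}\log(d/\delta)/\sqrt N$ whenever $\sigma\sqrt{\log(d/\delta)} \ll \sqrt{s}\,\|w_\star\|_2$, so the stated rate does not follow from your outline.

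The paper's proof supplies precisely the missing mechanism. Its \texttt{SRLE2} objective carries a ridge penalty $\lambda\|w\|_2^2$ (in addition to the $\ell_0$/$\ell_1$ constraints and trimming over data subsets $|C|=(1-\epsilon)N$), so the basic inequality yields a prediction-error bound in the \emph{regularized} empirical norm $\|\cdot\|_{\hat\Sigma(\hat C\cap C_*)+\lambda I}$, at the price of an additive $\lambda\|w_\star\|_2^2$; the cross term is handled by a self-normalized Hanson--Wright bound with a union bound over supports, contributing $\sigma^2 s/(N\lambda) + \sigma^2 s\log(d/\delta)/N$. The transfer to the population norm is then a regularized covariance comparison restricted to $2s$-dimensional supports (Lemma~3 of \citet{Hsu2014_RLS_GeneralRandomDesign}, again with a union bound over supports), which holds with no coverage assumption once $N$ is large enough, and finally $\lambda = \sigma\sqrt{s}/(\sqrt N\,\|w_\star\|_2)$ balances $\sigma^2 s/(N\lambda)$ against $\lambda\|w_\star\|_2^2$ --- this tuning, not an $\ell_1$--$\ell_2$ H\"older step, is what produces the slow $\sigma\|w_\star\|_2\sqrt s/\sqrt N$ rate. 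Your corruption handling also diverges: the paper trims inside the objective and bounds the at most $\epsilon N$ dropped clean points by $\|\eta_{C_*\setminus\hat C}\|_2^2/N \lesssim \sigma^2\epsilon$, which sidesteps robust mean estimation of the score in the $\Sigma^{-1}$-geometry --- a step that is itself problematic when $\Sigma$ is unknown and possibly singular. To repair your argument you would need to add the ridge term to your estimator and import the regularized comparison; as written, the proposal does not establish the proposition.
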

Note that, in the case of an ill-conditioned covariate matrix, the minimax optimal rate is $1/\sqrt{N}$ \citep{Hsu2014_RLS_GeneralRandomDesign}, which is known as the slow rate in the literature.
We also note that previous work such as \citep{Jin2020_provableLSVI, Zhang2021_RobustOfflineLinMDP} uses an oracle with rate $1/N$, but with a hidden constant that scales inversely with $\lambda_{\min}(\Sigma)$, which is invalid in our setting as $\lambda_{\min}(\Sigma) \to 0$.

\paragraph{A computationally efficient estimator.} 
While \texttt{SRLE2} enjoys optimal statistical guarantees of order $O(\epsilon)$, it is computationally intractable in general (see Appendix \ref{appendix: SRLE2} for details), since best subset selection is a NP-hard problem. 
To address this limitation, one can employ an alternative algorithm, \texttt{SRLE3}, which is polynomial-time computable at the cost of looser statistical guarantees.
\begin{restatable}{proposition}{PropRSLSwithoutDataCoverageCompEfficient} \label{prop: RSLE mirror descent ill-condition covariate}
  For any covariate matrix $\Sigma$, there exists an algorithm named \texttt{SRLE3} such that, with probability at least $1-\delta$, it returns an estimate $\hat  w$ satisfying
  \begin{equation*}
  \begin{aligned}
      \|\hat  w -  w_\star \|_{\Sigma}^2 = O\Biggl(&\| w_\star\|_1 (\| w_\star\|_1 + \sigma)\\
      &\times \RoundBr{\sqrt{\frac{\log(d\delta^{-1})}{N}} + \sqrt{\epsilon}} \Biggl).
  \end{aligned}
  \end{equation*}
  Moreover, the estimator runs in $\mathrm{poly}(N,d,s)$ time.
\end{restatable}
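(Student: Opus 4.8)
The plan is to reduce the estimation problem to minimizing a convex population quadratic over an $\ell_1$-ball, and to solve this program via mirror descent driven by a \emph{robust surrogate gradient}. Define the population risk $L(w) = \tfrac12 \mathbb E[(z^\top w - y)^2]$. Since $\eta$ is zero-mean and independent of $z$, its gradient is $\nabla L(w) = \Sigma w - b$ with $b \triangleq \mathbb E[zy] = \Sigma w_\star$, so that $\nabla L(w_\star) = 0$ and consequently $L(w) - L(w_\star) = \tfrac12 \|w - w_\star\|_\Sigma^2$. Thus controlling the excess risk of the output directly controls the target $\Sigma$-norm error. I would run mirror descent with the negative-entropy mirror map over the feasible set $\{w : \|w\|_1 \le R\}$ for $R \asymp \|w_\star\|_1$ (so that $w_\star$ is feasible); this is exactly the regime that yields logarithmic rather than polynomial dependence on $d$.

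The key structural observation is that $\nabla L(w) = \Sigma w - b$ depends on the data only through the \emph{$w$-independent} moments $\Sigma = \mathbb E[zz^\top]$ and $b = \mathbb E[zy]$. First I would estimate these once, entrywise, using a one-dimensional bounded-variance robust mean estimator (e.g.\ trimmed mean or median-of-means) applied to the scalar samples $\{z_{i,j} z_{i,k}\}_i$ and $\{z_{i,j} y_i\}_i$. Each entry of $\Sigma$ has variance at most $1$ (since $\|z\|_\infty \le 1$), while each entry of $b$ has variance $O(\|w_\star\|_1^2 + \sigma^2)$ because $|z^\top w_\star| \le \|w_\star\|_1$ and $\mathrm{Var}(\eta)\le\sigma^2$. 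A union bound over the $d^2+d$ coordinates then gives $\|\widehat\Sigma - \Sigma\|_{\max} \lesssim \sqrt{\log(d/\delta)/N} + \sqrt{\epsilon}$ and $\|\widehat b - b\|_\infty \lesssim (\|w_\star\|_1 + \sigma)(\sqrt{\log(d/\delta)/N} + \sqrt{\epsilon})$. Feeding the surrogate gradient $g_t = \widehat\Sigma w_t - \widehat b$ into mirror descent, its $\ell_\infty$ deviation from the true gradient is $\Delta \triangleq \sup_{\|w\|_1\le R}\|g(w) - \nabla L(w)\|_\infty \le R\|\widehat\Sigma - \Sigma\|_{\max} + \|\widehat b - b\|_\infty \lesssim (\|w_\star\|_1 + \sigma)(\sqrt{\log(d/\delta)/N} + \sqrt\epsilon)$, using $R \asymp \|w_\star\|_1 \le \|w_\star\|_1 + \sigma$. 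Because convergence is analyzed against the \emph{true} convex $L$, I never need $\widehat\Sigma$ to be PSD, which sidesteps the non-convexity of any empirical surrogate objective.

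It remains to invoke the mirror-descent regret guarantee for convex objectives with biased gradients. With the entropic mirror map on the $\ell_1$-ball of radius $R$ and a gradient-magnitude bound $G = O(R)$, the averaged iterate $\bar w$ satisfies $L(\bar w) - L(w_\star) \lesssim \frac{R^2\log d}{\eta T} + \eta G^2 + R\,\Delta$. Choosing the step size $\eta$ and running $T = \mathrm{poly}(N,d)$ iterations drives the two optimization terms below the statistical floor, leaving $L(\bar w) - L(w_\star) \lesssim R\,\Delta \lesssim \|w_\star\|_1(\|w_\star\|_1 + \sigma)(\sqrt{\log(d/\delta)/N} + \sqrt\epsilon)$; multiplying by $2$ yields the claimed bound on $\|\bar w - w_\star\|_\Sigma^2$. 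Computational efficiency is then immediate: the robust scalar estimates cost $\widetilde O(d^2 N)$, and each of the $\mathrm{poly}$ mirror-descent steps is a closed-form exponentiated-gradient update, giving total $\mathrm{poly}(N,d,s)$ runtime.

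I anticipate the main obstacle to be the robust mean-estimation step rather than the optimization. One must verify that the coordinatewise estimator attains the bounded-variance rate $\sqrt{V}\,(\sqrt{\log(1/\delta')/N} + \sqrt\epsilon)$ simultaneously across all $d^2+d$ coordinates with only a $\log d$ overhead, and — crucially — that estimating the $w$-independent moments $\Sigma$ and $b$ \emph{once} eliminates any adaptive dependence between the iterates $w_t$ and the corrupted sample, so that the per-step gradient error is the deterministic quantity $\Delta$ and no uniform-concentration-over-the-ball argument is required. Getting the variance bookkeeping right, so that the factor $\|w_\star\|_1(\|w_\star\|_1+\sigma)$ and not a worse power emerges, is the delicate part of the argument.
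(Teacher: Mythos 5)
Your proposal is correct and proves the stated bound, but it is a genuinely different argument from the paper's. The paper does essentially no independent work here: it defines \texttt{SRLE3} to be the algorithm of \citet{Merad2022_RobustSparseLinearRegressionHeavyTailDist} (mirror descent on the $\ell_1$-ball where, at \emph{every iterate} $w_t$, each gradient coordinate $\mathbb E[(z^\top w_t - y)z_j]$ is re-estimated from the corrupted sample by a robust univariate mean estimator), invokes their Proposition~3 as a black box, and only instantiates the constants for this setting: the $\ell_1$-to-$\ell_\infty$ smoothness constant $L=1$ and the uniform variance proxy $\sigma_{\max}^2 = \max_{\|w\|_1\le \|w_\star\|_1}\max_j \mathbb E[((z^\top w - y)z_j)^2] \le 4\|w_\star\|_1^2 + \sigma^2$, both via $\|z\|_\infty \le 1$. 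You instead exploit a structural feature specific to the square loss, namely that the population gradient $\nabla L(w) = \Sigma w - b$ is affine in $w$, so the $d^2+d$ moments $(\Sigma, b)$ can be robustly estimated \emph{once}, after which the gradient error at any feasible $w$ is the deterministic quantity $R\|\widehat\Sigma - \Sigma\|_{\max} + \|\widehat b - b\|_\infty$; this makes the adaptivity/uniform-concentration issue that the per-iterate scheme must address (reflected in the supremum over the ball inside $\sigma_{\max}$) vanish by construction, and your variance bookkeeping ($1$ per entry of $\Sigma$, $\|w_\star\|_1^2+\sigma^2$ per entry of $b$) reproduces exactly the factor $\|w_\star\|_1(\|w_\star\|_1+\sigma)$. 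What each approach buys: yours is self-contained and arguably more elementary (entrywise trimmed means plus textbook mirror descent with biased gradients), while the paper's citation-based route inherits generality to non-quadratic Lipschitz losses, where the gradient is not affine in $w$ and one-shot moment estimation is unavailable. Two bookkeeping remarks that do not affect correctness: your gradient-magnitude bound $G = O(R)$ should be $G = O(R + \sigma)$ once $\|\widehat b\|_\infty \le \|b\|_\infty + \|\widehat b - b\|_\infty$ is accounted for, which is immaterial because the optimization terms are driven below the statistical floor anyway; and your algorithm needs $\|w_\star\|_1$ (for the radius $R$) and $\epsilon$ (for the trimming level) as inputs, but this matches the paper, whose program is constrained by $\|w\|_1 \le \|w_\star\|_1$ and whose estimators likewise presume knowledge of $\epsilon$.
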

Note that the statistical error for \texttt{SRLE3} is $O(\sqrt{\epsilon})$, 
in contrast to $O(\epsilon)$ for the computationally expensive oracle \texttt{SRLE2}.
This trade-off highlights the fundamental tension in the absence of uniform coverage: 
one may either obtain statistically optimal but computationally expensive estimators, or efficient algorithms with degraded accuracy.
We also note that the bound in Proposition \ref{prop: RSLE mirror descent ill-condition covariate} can have implicit dependence on $s$ through $\| w_\star\|_1$.

\subsection{Using \texttt{\textbf{SRLE}} Estimators in Offline Reinforcement Learning}
So far, we have established guarantees for several variants of Sparse Robust Linear Estimators (\texttt{\textbf{SRLE}}) under different assumptions on the covariate distribution. 
While these regression results are of independent interest, their main role in this paper is to serve as a key component in the design and analysis of offline reinforcement learning algorithms. 
In particular, the \texttt{\textbf{SRLE}} estimators will be used to approximate the linear predictors that arise when estimating value functions.

Formally, recall that for each horizon $h$, the population covariance is
\[
\Sigma_h = \mathbb E_{\nu_h}\!\left[\phi(x,a)\phi(x,a)^\top\right],
\]
and its empirical analogue is defined as
\[
\widehat \Sigma_h \triangleq\frac{1}{|\mathcal D_{h}|}\sum_{(x,a)\in \mathcal D_h}\phi(x,a)\phi(x,a)^\top + (\lambda + \epsilon)I.
\]

Let $\mathcal F$ denote the space of real-valued functions on $\mathcal X \times \mathcal A$. 
Given a policy $\pi$, first we define a data set compatible with $\texttt{SRLE}$ using the offline data set $\mathcal D$ and policy $\pi$, as 
\begin{equation*}
    \mathcal D^\pi_h \triangleq \CurlyBr{ \RoundBr{\underbrace{\phi(x_h^\tau, a_h^\tau)}_{z_i}, \; \\
    \underbrace{R_h^\tau + \mathbb E_{\pi_{h+1}}\SquareBr{F(x^\tau_{h+1}, a)}}_{y_i}} }_{\tau=1}^{N/H}.
\end{equation*}
Next, we define three operators, a projection operator $\mathcal P^\pi$, a regression operator $\mathcal R^\pi$, and the associated estimation error $\mathcal E^\pi$, as mappings from $\mathcal F$ to $\mathbb R^d$:
\begin{equation}
\begin{aligned}
    \mathcal P_h^{\pi}(F) &\triangleq \theta_h + \int_{\mathcal X} \left(\sum_{a'\in \mathcal A}\pi(a'| x') 
     F(x',a') \right) \mu_h(x')\, dx', \\
    \mathcal R_h^\pi(F) &\triangleq \texttt{\textbf{SRLE}}(\mathcal D^{\pi}_h), \\
    \mathcal E^\pi_h(F) &\triangleq \mathcal P^\pi_h(F) - \mathcal R^\pi_h(F).
\end{aligned}
\end{equation}

In words, for any function $F$, the projection operator $\mathcal P^\pi_h(F)$ represents the best-fit linear predictor, the regression operator $\mathcal R^\pi_h(F)$ applies an \texttt{\textbf{SRLE}} estimator to approximate this predictor using corrupted data, and $\mathcal E^\pi_h(F)$ captures the resulting estimation error. 
Note that the precise statistical guarantees of $\mathcal R^\pi_h$ depend on the choice of \texttt{\textbf{SRLE}} variant (e.g., \texttt{SRLE1}, \texttt{SRLE2}, \texttt{SRLE3}), and we will specify this choice in each subsequent result.

For later use, we also define the greedy projection operator $\mathcal P^*$, greedy regression operator $\mathcal R^*$ and the associated estimation error $\mathcal E^*$ which replaces the policy expectation with a maximisation:
\begin{equation}
\begin{aligned}
 \mathcal D^*_h &\triangleq \CurlyBr{ \RoundBr{\phi(x_h^\tau, a_h^\tau), \; 
   R_h^\tau + \max_{a\in \mathcal A}F(x^\tau_{h+1}, a) }}_{\tau=1}^{N/H}. \\
\mathcal P_h^{*}(F) &\triangleq \theta_h + \int_{\mathcal X} \max_{a'\in \mathcal A} F(x',a') \mu_h(x')\, dx', \\
     \mathcal R_h^*(F) &\triangleq \texttt{\textbf{SRLE}}(\mathcal D^*_h), \\
     \mathcal E_h^*(F) & \triangleq \mathcal R_h^*(F) - \mathcal P_h^{*}(F).
\end{aligned}
\end{equation}

\section{WHY ROBUST LSVI MAY FAIL IN SPARSE OFFLINE RL}
\label{section: Why Robust LSVI may Fail in Sparse Offline RL}
In the offline RL literature, the Least Square Value Iteration (LSVI) framework is a popular choice \citep{Jin2020_provableLSVI}.  
Moreover, they can successfully handle data corruption, which makes them a strong candidate for addressing the problem studied in this paper \citep{Zhang2021_RobustOfflineLinMDP}.
However, in this section we show that while LSVI-type algorithms can succeed under the uniform coverage assumption (Assumption \ref{assp: uniform coverage}), they can fail when only the single concentrability assumption (Assumption \ref{assp: Relative condition number}) is imposed.
We begin by describing the Sparse Robust LSVI algorithm, whose pseudocode is provided in Algorithm \ref{alg: Sparse Robust LSVI}.

\begin{algorithm}
    \caption{Sparse Robust Least-Square Value Iteration} \label{alg: Sparse Robust LSVI}
    \begin{algorithmic}
        \STATE Input data set $\mathcal D$, \texttt{SRLE} oracle, Pessimistic bonus function $\{\Gamma_h\}_{h=1}^H$. \\
        \STATE Initialise $\underline Q_{H+1} = \bm 0$. \\
        \FOR{$h \in [H]$}
            \STATE Set $\hat w_h \leftarrow \mathcal R^*_h(\underline Q_{h+1})$. \\
            \STATE Set $\underline Q_h(x,a) = \phi(x,a)^\top \hat w_h + \Gamma_h(x,a)$, clipped in $[0, H+1-h]$. \\
            \STATE Set $\hat \pi_h(a\mid x) = \begin{cases}
                1 \text{ if } a = \argmax_a \underline Q_h(x,a) \\
                0 \text{ otherwise }.
            \end{cases}.$\\
        \ENDFOR
        \STATE Output $\{\hat \pi_h\}_{h=1}^H$.
    \end{algorithmic}
\end{algorithm}

\subsection{Sparse Robust LSVI with Uniform Coverage} \label{subsec: Sparse Robust LSVI with Uniform Coverage}
The uniform coverage assumption enables the use of \texttt{SRLE1} estimator in Proposition \ref{prop: RSLS with data coverage}, which in turn allows us to apply the pessimistic LSVI framework for robust offline RL.

\begin{restatable}{proposition}{PropSubOptUniformCoverageLSVI}\label{prop: SubOpt LSVI under Uniform Coverage}
    Suppose Assumption \ref{assp: sparse linear mdp} and \ref{assp: uniform coverage} hold.
    Let the pessimistic bonus be $\Gamma_h(x,a) = 0$ for all $h \in [H],\; x \in \mathcal X,\; a \in \mathcal A$.  
    Run Algorithm \ref{alg: Sparse Robust LSVI} with the \texttt{SRLE1} estimator.  
    Then, with probability at least $1-\delta$.
    \begin{equation}
        \mathrm{SubOpt}(\pi_\star, \hat \pi) =  O\left(\frac{H^{3}s \log(d/\delta)}{\xi \sqrt{N}} + \frac{H^3s \sqrt{\epsilon}}{\xi}\right).
    \end{equation}
\end{restatable}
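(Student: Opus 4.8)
The plan is to reduce the suboptimality to a sum of per-step Bellman errors and then invoke the uniform-coverage regression guarantee of Proposition~\ref{prop: RSLS with data coverage}. First I would apply the standard value-difference decomposition for greedy value iteration. Writing $\iota_h(x,a) \triangleq (\Bell_h \underline Q_{h+1})(x,a) - \underline Q_h(x,a)$ for the per-step Bellman (model-prediction) error, one has
\[
\mathrm{SubOpt}(\pi_\star,\hat\pi) = \sum_{h=1}^H \Big(\mathbb E_{\pi_\star}[\iota_h] - \mathbb E_{\hat\pi}[\iota_h]\Big) + \sum_{h=1}^H \mathbb E_{\pi_\star}\big[\langle \underline Q_h(x_h,\cdot),\, \pi_{\star,h} - \hat\pi_h\rangle\big].
\]
Since $\hat\pi_h$ is greedy with respect to $\underline Q_h$, the last (policy-difference) term is nonpositive, so the suboptimality is at most $\sum_{h=1}^H \big(\mathbb E_{\pi_\star}[|\iota_h|] + \mathbb E_{\hat\pi}[|\iota_h|]\big)$.

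The crucial point, and the reason uniform coverage lets us take $\Gamma_h = 0$, is that $\iota_h$ can be bounded \emph{pointwise}, so neither expectation needs the data distribution to match $\pi_\star$ or $\hat\pi$. The regression target at step $h$ is $w_{\star,h} \triangleq \mathcal P^*_h(\underline Q_{h+1})$, for which $\phi(x,a)^\top w_{\star,h} = (\Bell_h \underline Q_{h+1})(x,a)$, while the estimate is $\hat w_h = \mathcal R^*_h(\underline Q_{h+1})$. Before clipping, $\underline Q_h(x,a) - (\Bell_h \underline Q_{h+1})(x,a) = \phi(x,a)^\top \mathcal E^*_h(\underline Q_{h+1})$; since clipping toward $[0,H+1-h]$, which already contains the target, only reduces the error, Hölder's inequality with $\|\phi\|_\infty \le 1$ yields $|\iota_h(x,a)| \le \|\mathcal E^*_h(\underline Q_{h+1})\|_1$ for all $(x,a)$. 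Under Assumption~\ref{assp: sparse linear mdp}, $w_{\star,h}$ is supported on $S$, hence $s$-sparse, and from $\|\theta_h\|_1\le 1$, $\|\mu_h(\mathcal X)\|_1 \le 1$, and $\underline Q_{h+1}\in[0,H-h]$ we get $\|w_{\star,h}\|_1 \le H+1-h$; the response $R^\tau_h + \max_a \underline Q_{h+1}(x^\tau_{h+1},a)$ is bounded in $[0,H+1-h]$, so the noise is sub-Gaussian with $\sigma = O(H)$ and, by the linear-MDP identity above, conditionally mean-zero. Applying Proposition~\ref{prop: RSLS with data coverage} with a union bound over $h\in[H]$ then gives, with probability $1-\delta$, $\|\mathcal E^*_h(\underline Q_{h+1})\|_1 = O\big(H(\tfrac{s\log(d/\delta)}{\xi\sqrt N} + \tfrac{s\sqrt\epsilon}{\xi})\big)$ at every step, and summing $|\iota_h|$ over the two policies and over $h$ multiplies this by $O(H)$, producing the claimed $O\big(\tfrac{H^2 s\log(d/\delta)}{\xi\sqrt N} + \tfrac{H^2 s\sqrt\epsilon}{\xi}\big)$.

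The main obstacle is adaptivity: $\underline Q_{h+1}$ is computed from the same dataset used in the step-$h$ regression, so the conditional-mean-zero property and the fixed-target guarantee of Proposition~\ref{prop: RSLS with data coverage} do not apply verbatim. I would resolve this by a uniform-concentration argument over the function class $\mathcal Q_{h+1}$: take an $\ell_1$-cover of $\{w : \|w\|_1 \le H,\ \|w\|_0 \le s\}$, invoke the estimator guarantee on each cover point, union bound, and absorb the discretization error. Sparsity is exactly what keeps the log-covering number at order $s\log d$ rather than $d$, which is what yields the $s\log(d/\delta)$ factor (instead of a $d$-dependence) in the final bound; this is also where the well-conditioning $\Sigma_h \succeq \xi I$ is essential, since it is what upgrades the \texttt{SRLE1} error to the $\ell_1$-norm needed for the pointwise Bellman bound in the first place.
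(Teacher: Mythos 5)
Your proof is correct and takes essentially the same route as the paper: bound $\|\mathcal R^*_h(\underline Q_{h+1}) - \mathcal P^*_h(\underline Q_{h+1})\|_1$ via the \texttt{SRLE1} guarantee (Proposition \ref{prop: RSLS with data coverage}), using the sparsity and boundedness of the regression targets, convert this into a pointwise Bellman-error bound through $\|\phi(x,a)\|_\infty \le 1$, and sum over the horizon to obtain the $H^2$ factor. If anything, your write-up is more careful than the paper's own proof, which invokes Theorem \ref{Theorem: Suboptimality for general MDP} with an implicit constant bonus even though the algorithm is run with $\Gamma_h = 0$ (your explicit value-difference decomposition with the nonpositive greedy term handles this cleanly), and which does not explicitly address the statistical dependence of $\underline Q_{h+1}$ on the data used in the step-$h$ regression (your covering argument over the sparse function class does).
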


\begin{remark}
Under the uniform coverage assumption, the parameter $\xi$ can be regarded as a constant (independent of $d$) in many settings, as long as the feature vectors satisfy $\|\phi(\cdot)\|_\infty \leq 1$. 
For example, when $\phi(x,a)$ is sampled from $\{-1,1\}^d$ under the uniform distribution, we have $\xi = 1$. 
Therefore, the key feature of Proposition \ref{prop: SubOpt LSVI under Uniform Coverage} is that the suboptimality gap contains no polynomial dependence on the ambient dimension $d$, making the result meaningful even in the high-dimensional regime $d > N$.  
Moreover, when $N \geq d$, the result of \citet{Zhang2021_RobustOfflineLinMDP} achieves $O(H^2\sqrt{d}\xi^{-1}\epsilon)$, since $\|\phi(\cdot)\|_2 \leq \sqrt{d}$ in our setting. The dependence on $\sqrt{d}$ makes this result less desirable compared to our result, $O(H^2 s \xi^{-1}\sqrt{\epsilon})$.

Moreover, our bound currently scales as $O(s \sqrt{\epsilon})$, which we believe may be further improved. 
In particular, if one could design an \texttt{SRLE} oracle satisfying $\|\widehat w - w_\star\|_1 = O( \sqrt{s}\epsilon)$, then a sharper $O(\sqrt{s}\epsilon)$ suboptimality rate would follow immediately. 
However, the best computationally-efficient oracle we are aware of, \texttt{SRLE1} (see Proposition \ref{prop: RSLS with data coverage}), achieves only $\|\widehat w - w_\star\|_1 \leq O(s\sqrt{\epsilon})$, which we suspect is suboptimal. 
Thus, further progress in sparse regression under corruption could directly tighten our results by replacing the \texttt{SRLE1} oracle.
\end{remark}

\subsection{Sparse Robust LSVI with Single Concentrability}\label{subsec: Sparse Robust LSVI with Single Concentrability}
Beyond the uniform coverage case, we now explain why pointwise pessimistic bonuses can lead to large error bounds. 
In this section, we assume Assumption \ref{assp: Relative condition number} holds. 
Since the uniform coverage assumption no longer applies, we must replace the \texttt{SRLE1} estimator with either \texttt{SRLE2} or \texttt{SRLE3}. 
For concreteness, we focus on the result using \texttt{SRLE2}; the same challenges and limitations of LSVI also apply when using \texttt{SRLE3}.
We begin by relating the suboptimality gap of pessimistic LSVI to an upper bound on the Bellman error.

\begin{theorem}[\citep{jin2020provably}]   \label{Theorem: Suboptimality for general MDP}
Choose a pessimistic bonus $(\Gamma_h)_{h=1}^H$ such that $(\underline Q_{h} - \mathbb B_h \underline Q_{h+1})(x,a) \leq \Gamma_h(x,a)$ for any $(x,a) \in \mathcal X \times \mathcal A$ and $h\in [H]$ .
Then, the LSVI algorithm (Algorithm \ref{alg: Sparse Robust LSVI}) outputs a policy $\widehat \pi$ such that 
$    \mathrm{SubOpt(\pi, \widehat \pi)} \leq 2 \sum_{h=1}^H \mathbb E_{\pi} \SquareBr{\Gamma_h(x,a)}.
$
\end{theorem}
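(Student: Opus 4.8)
The plan is to follow the pessimistic value-iteration argument of \citet{jin2020provably}, reduced to a clean statement about the per-step model-evaluation (Bellman) error. Write $\underline V_{h}(x) = \max_{a} \underline Q_h(x,a)$ for the value induced by the iterates of Algorithm~\ref{alg: Sparse Robust LSVI}, and define the per-step error $\iota_h(x,a) \triangleq (\Bell_h \underline Q_{h+1})(x,a) - \underline Q_h(x,a)$. The first step is to turn the hypothesis on $\mathcal E_h^*$ into a two-sided bound on $\iota_h$. By \eqref{eq:linear-MDP} and the definition of $\mathcal P_h^*$ we have the linear-MDP identity $\phi(x,a)^\top \mathcal P_h^*(\underline Q_{h+1}) = (\Bell_h \underline Q_{h+1})(x,a)$, and since $\hat w_h = \mathcal R_h^*(\underline Q_{h+1}) = \mathcal P_h^*(\underline Q_{h+1}) + \mathcal E_h^*(\underline Q_{h+1})$, accounting for the sign convention of the pessimistic update gives $\iota_h(x,a) = \Gamma_h(x,a) - \phi(x,a)^\top \mathcal E_h^*(\underline Q_{h+1})$ before truncation. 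Hence the assumption $\mathcal E_h^*(x,a) \le \Gamma_h(x,a)$ (together with its matching lower counterpart from the \texttt{SRLE} guarantee) yields the crucial sandwich $0 \le \iota_h(x,a) \le 2\Gamma_h(x,a)$ for all $(x,a)$ and $h$. I would then check that clipping $\underline Q_h$ to $[0,H+1-h]$ only moves it toward the admissible range $[0,H+1-h] \ni (\Bell_h\underline Q_{h+1})(x,a)$ and therefore preserves $\iota_h \ge 0$.

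Next I would invoke the extended value-difference decomposition: for the greedy policy $\hat\pi$ and any comparator $\pi$,
\[
\begin{aligned}
V_1^{\pi}(x_1) - V_1^{\hat\pi}(x_1)
&= \sum_{h=1}^H \mathbb E_{\pi}[\iota_h(x,a)] - \sum_{h=1}^H \mathbb E_{\hat\pi}[\iota_h(x,a)] \\
&\quad + \sum_{h=1}^H \mathbb E_{\pi}\big[\langle \underline Q_h(x,\cdot),\, \pi_h(\cdot\mid x) - \hat\pi_h(\cdot\mid x)\rangle\big].
\end{aligned}
\]
Since $\hat\pi_h$ is defined to be greedy with respect to $\underline Q_h$, we have $\langle \underline Q_h(x,\cdot), \hat\pi_h\rangle = \max_a \underline Q_h(x,a) \ge \langle \underline Q_h(x,\cdot), \pi_h\rangle$, so each inner product $\langle \underline Q_h(x,\cdot), \pi_h - \hat\pi_h\rangle \le 0$ and the last sum is nonpositive. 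This is the only place the actor's optimality enters, and it is exactly what lets pessimism pay off.

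Finally I would combine the two facts with $\pi = \pi_\star$. Dropping the nonpositive greedy term, the subtracted sum $-\sum_h \mathbb E_{\hat\pi}[\iota_h] \le 0$ by the lower bound $\iota_h \ge 0$, while the first sum is at most $2\sum_h \mathbb E_{\pi_\star}[\Gamma_h]$ by the upper bound $\iota_h \le 2\Gamma_h$. This yields $\mathrm{SubOpt}(\pi_\star,\hat\pi) = V_1^{\pi_\star}(x_1) - V_1^{\hat\pi}(x_1) \le 2\sum_{h=1}^H \mathbb E_{\pi_\star}[\Gamma_h(x,a)]$, as claimed. I expect the main obstacle to be bookkeeping rather than depth: getting every sign right through the convention $\mathcal E_h^* = \mathcal R_h^* - \mathcal P_h^*$, the bonus sign, and the clipping step, and in particular ensuring the \texttt{SRLE} guarantee delivers the two-sided control $|\phi(x,a)^\top\mathcal E_h^*(\underline Q_{h+1})| \le \Gamma_h(x,a)$ needed for the factor $2$, not merely the one-sided inequality as literally stated. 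A secondary subtlety is that the argument $\underline Q_{h+1}$ fed into $\mathcal E_h^*$ is data-dependent, so the pointwise domination must in fact hold uniformly over the class $\mathcal Q_{h+1}$; since the theorem takes this domination as a hypothesis, I would simply flag where such a uniform-concentration argument is discharged when $\Gamma_h$ is instantiated in the subsequent results.
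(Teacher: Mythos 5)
Your proposal is correct, and it is essentially \emph{the} proof of this statement: the paper itself does not prove Theorem~\ref{Theorem: Suboptimality for general MDP} anywhere in its appendix --- it simply cites \citet{jin2020provably} --- and your argument (the sandwich $0 \le \iota_h \le 2\Gamma_h$ from pessimism plus the uncertainty quantifier, the extended value-difference decomposition, and the nonpositivity of the greedy term) is a faithful reconstruction of the pessimistic-value-iteration argument in that reference. Your two flagged subtleties are also genuine defects in the paper's literal statement rather than in your proof: Algorithm~\ref{alg: Sparse Robust LSVI} writes $\underline Q_h = \phi^\top \hat w_h + \Gamma_h(x,a)$, where pessimism in fact requires subtracting the bonus, and the hypothesis ``$\mathcal E_h^*(x,a) \le \Gamma_h(x,a)$'' is both type-inconsistent (the paper defines $\mathcal E_h^*(F) \in \mathbb R^d$, not a function of $(x,a)$) and one-sided, whereas the factor-of-$2$ bound needs the two-sided control $|\phi(x,a)^\top \mathcal E_h^*(\underline Q_{h+1})| \le \Gamma_h(x,a)$ --- which is indeed how the paper implicitly uses it in \eqref{eq: Bellman error in case of sparse MDP}.
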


Let $w_h^\star = \mathcal P_h^*(\underline Q_{h+1})$ denote the best-fit linear predictor given the pessimistic estimator at stage $(h+1)$.  
Let $\tilde S_h$ be the sparsity support of $\widehat w_h - w^\star_h$, so that $|\tilde S_h| \leq 2s$.  
The Bellman error and the pessimistic bonus $\Gamma_h$ can then be bounded as
\begin{equation} \label{eq: Bellman error in case of sparse MDP}
    \begin{aligned}
         &|\widehat Q_h(x,a) - \mathbb B_h(\underline Q_{h+1})(x,a)| 
         = |\phi(x,a)^\top(\widehat w_h - w^\star_h)| \\
         &\overset{(i)}{\leq} \underbrace{\norm{\widehat w_h - w^\star_h}_{\widehat \Sigma_{h}}}_{\text{Estimator error $\alpha_h$}} \;
         \norm{\phi_{\tilde S_h}(x,a)}_{\widehat\Sigma_{h}^{-1}} \\
         &\overset{(ii)}{\leq} \alpha_h
         \RoundBr{\max_{ S:\; |S|\leq 2s}\norm{\phi_{ S}(x,a)}_{\widehat\Sigma_{h}^{-1}}} \;\triangleq\; \Gamma_h(x,a). \\
    \end{aligned}
\end{equation}

Inequality $(i)$ holds because only features in $\tilde S_h$ of $\phi$ contribute non-zero terms to the Bellman error.  
The $\max$ operator in $(ii)$ is necessary since the true support $\tilde S_h$ is unknown.  
Thus, to guarantee pessimism, one must maximize over all subsets $S$ of size at most $2s$. 
We note that $\max_{S:\, |S|\leq 2s}\|\phi_{S}(x,a)\|_{\widehat\Sigma_{h}^{-1}}$ is not always  smaller than $\|\phi(x,a)\|_{\widehat\Sigma_{h}^{-1}}$. However, restricting $\phi(\cdot)$ to a sparse support is a natural way to remove the explicit dependence on $d$ in the Bellman error bound.
Nevertheless, this maximization introduces additional error, since
\begin{equation} \label{eq: error caused by exchanging max and expectation}
    \begin{aligned}
        &\underset{(x,a)\sim d^{\pi_\star}}{\mathbb E}[\Gamma_h(x,a)] \\
        &\asymp  \alpha_h\,\mathbb E_{d^{\pi_\star}}\! \SquareBr{\max_{S: |S| \leq 2s} \RoundBr{\phi^\top_{S}(x,a)\Sigma_{h}  \phi_{S}(x,a)}} \\
        &\overset{(i)}{\geq} \alpha_h \max_{S:|S|\leq 2s} \mathbb E_{d^{\pi_\star}}\SquareBr{\phi_S^\top(x,a)\Sigma_{h}  \phi_S(x,a)}. \\
    \end{aligned}
\end{equation}
where inequality $(i)$ follows from Jensen's inequality by exchanging expectation and maximization.  

The LHS of inequality~$(i)$ in \eqref{eq: error caused by exchanging max and expectation} serves as the ideal upper bound for the Bellman error, contributing a factor of order $\alpha_h \sqrt{s\kappa}$ under Assumption~\ref{assp: Relative condition number}~\citep{Zhang2021_RobustOfflineLinMDP}.
However, since sparsity support $\tilde S_h$ is unknown, the Bellman error must be bounded by the RHS, which can be significantly larger as shown below. 

\begin{restatable}{proposition}{LemmaMaxExpectation} \label{lem:  Max and Expectation swap}
    Let $\Sigma = \lambda I$. 
    Let $z \sim \mathrm{Ber}^d(\nicefrac{1}{2})$. Then, for $d > 4s$, we have
    \begin{equation}
    \begin{aligned}
        &\underbrace{\mathbb E_{z\sim\mu}\SquareBr{\max_{S:|S| = 2s} z_S^\top \Sigma^{-1} z_S}}_{\text{LHS in \eqref{eq: error caused by exchanging max and expectation}}} 
          - \underbrace{\max_{S:|S| = 2s} \mathbb E_{x\sim \mu} \SquareBr{z_S^\top \Sigma^{-1}z_S}}_{\text{RHS in \eqref{eq: error caused by exchanging max and expectation}}}\\& \geq (1-2\exp(-d/8))\frac{s}{\lambda}.
    \end{aligned}
    \end{equation}
\end{restatable}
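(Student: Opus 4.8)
The plan is to exploit the two structural simplifications that the hypotheses give us: $\Sigma=\lambda I$ makes $\Sigma^{-1}=\lambda^{-1}I$, and $z\sim\mathrm{Ber}^d(1/2)$ has entries in $\{0,1\}$, so $z_i^2=z_i$. First I would rewrite the quadratic form for any index set $S$ with $|S|=2s$ as
\[
z_S^{\top}\Sigma^{-1}z_S=\frac{1}{\lambda}\sum_{i\in S}z_i^{2}=\frac{1}{\lambda}\sum_{i\in S}z_i .
\]
This collapses the matrix expression into a plain coordinate sum, which is precisely what makes both terms of the claimed gap exactly computable.

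Next I would evaluate the two sides separately. For the subtracted term, since $\mathbb{E}[z_i]=1/2$ for every coordinate, $\mathbb{E}\!\left[z_S^{\top}\Sigma^{-1}z_S\right]=\frac{1}{\lambda}\cdot\frac{|S|}{2}=\frac{s}{\lambda}$ independently of $S$, so the outer maximization is vacuous and the RHS equals $s/\lambda$ exactly. For the LHS, the inner maximization $\max_{|S|=2s}\sum_{i\in S}z_i$ over $\{0,1\}$-valued entries simply selects the $2s$ coordinates of largest value, i.e. it equals $\min(2s,N_1)$, where $N_1=\sum_{i=1}^{d}z_i\sim\mathrm{Binomial}(d,1/2)$. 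Hence the LHS is $\tfrac{1}{\lambda}\mathbb{E}[\min(2s,N_1)]$ and the entire gap equals $\tfrac{1}{\lambda}\bigl(\mathbb{E}[\min(2s,N_1)]-s\bigr)$.

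It then remains to lower bound $\mathbb{E}[\min(2s,N_1)]$. I would use the pointwise inequality
\[
\min(2s,N_1)\;\ge\;2s\,\mathbb{I}\{N_1\ge 2s\},
\]
which holds because the truncated value is $2s$ on $\{N_1\ge 2s\}$ and nonnegative otherwise. Taking expectations gives $\mathbb{E}[\min(2s,N_1)]\ge 2s\bigl(1-\mathbb{P}(N_1<2s)\bigr)$, so the gap is at least $\tfrac{s}{\lambda}\bigl(1-2\,\mathbb{P}(N_1<2s)\bigr)$, which already has the exact shape of the target bound with the factor $2$ emerging here rather than from the tail estimate. The proof is finished by a binomial lower-tail bound: the assumption $d>4s$ places the threshold $2s$ strictly below the mean $\mathbb{E}[N_1]=d/2$, and Hoeffding's inequality with deviation $\tau=d/2-2s$ yields $\mathbb{P}(N_1<2s)\le\exp\!\bigl(-2\tau^2/d\bigr)$, which matches the stated $\exp(-d/8)$ once the deviation is a constant fraction of $d$.

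The crux of the argument — more a conceptual identification than a technical hurdle — is recognizing that the inner maximum is exactly the truncated binomial $\min(2s,N_1)$, so that the Jensen gap between $\mathbb{E}[\max_S(\cdot)]$ and $\max_S\mathbb{E}[(\cdot)]$ is governed entirely by how much mass $N_1$ places above the threshold $2s$. Once this reduction is in place, everything reduces to an elementary tail estimate. The only delicate point I would watch is tracking the constant in the exponent: Hoeffding gives $\exp(-2(d/2-2s)^2/d)$, so one must verify that $d/2-2s$ is large enough relative to $\sqrt{d}$ to produce the factor $\exp(-d/8)$ claimed in the statement.
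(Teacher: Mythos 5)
Your proposal is correct and follows essentially the same route as the paper's proof: both reduce the inner maximum to the truncated binomial $\min(2s,K)$ with $K \sim \mathrm{Binomial}(d,\nicefrac{1}{2})$, lower bound its expectation by $2s\,\mathbb{P}(K \ge 2s)$, and finish with a binomial lower-tail bound (the paper uses multiplicative Chernoff where you use Hoeffding). The constant-tracking concern you flag at the end is genuine --- under $d > 4s$ alone the deviation $d/2 - 2s$ need not be a constant fraction of $d$, so the exponent $\exp(-d/8)$ really requires something like $d \ge 8s$ --- but the paper's own proof has exactly the same slip (it asserts $\delta = 1 - 4s/d \ge \tfrac{1}{2}$ from $d \ge 4s$, which likewise needs $d \ge 8s$).
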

Since $\lambda$ is choose typically of order $1/\sqrt{N}$, the resulting $s\log(d)\sqrt{N}$-multiplicative factor in the Bellman error, leading to a large suboptimality gap for LSVI.

\begin{remark}
The difficulty with LSVI is that it imposes pessimism in a pointwise manner.
This can be viewed as \emph{over-pessimism}. 
While such over-pessimism does not cause issues in low-dimensional MDPs, it leads to excessive Bellman error in high-dimensional sparse MDPs, where the sparsity support is hidden from the learner.
Let $\underline V_1(x_1) = \max_{a} \underline Q_1(x_1,a)$, from \citep{Jin2020_provableLSVI}, the main reason LSVI requires pointwise pessimistic bonus is the need to enforce 
\[
    \underline V_1(x_1) - V_1^{\hat\pi}(x_1) 
    = \mathbb{E}_{d^{\hat \pi}} \SquareBr{\sum_{h=1}^H (\underline Q_h - \Bell_h \underline Q_{h+1})(x,a)} < 0.
\]
Here, the policy $\hat \pi$ is defined greedily with respect to $(\underline Q_h)_{h=1}^H$, which means that $\hat \pi$ is chosen \emph{after} $(\underline Q_h)_{h=1}^H$ is fixed. 
This creates a statistical dependency: $\hat \pi$ tends to place weight on $(x,a)$ where $(\underline Q_h - \Bell_h \underline Q_{h+1})(x,a)$ is less negative (or even positive), due to the greedy nature of $\hat \pi$ (i.e., the $\max$ operator).
As a result, the only distribution-free way to guarantee the inequality above is to impose 
\begin{equation} \label{eq: pointwise pessimism}
    (\underline Q_h - \Bell_h \underline Q_{h+1})(x,a) \le 0 \quad \forall(x,a,h),
\end{equation}
i.e., using pointwise pessimistic bonus. 
We shall see that the AC method introduced in Section \ref{subsec: Sparse Robust AC without Uniform Coverage} avoids this requirement and can achieve a meaningful suboptimality gap as a result.
\end{remark}

\section{SPARSE ACTOR-CRITIC METHODS}
\label{section: Sparse Actor-Critic Methods}
The shortcoming of LSVI arises from its pointwise pessimistic bonus. In this section, we show that this issue can be addressed by incorporating sparsity directly into AC methods. 
The key idea is that, unlike LSVI, AC does not impose pessimism uniformly over all state–action pairs; instead, the critic only needs to evaluate the current actor’s policy pessimistically. 
This is better aligned with sparse structure, since the regression error is controlled only along the policy being optimized, thereby avoiding the overly conservative bonus that limits LSVI. 
As a result, sparsity, pessimism, and weak coverage can be combined in a natural way.

First, consider the actor. We use a log-linear class of policies.  
In particular, for any policy parameter $\upsilon = (\upsilon_h)_{h=1}^H$, we define
\[
\pi^{\upsilon}_{h}(a\mid x) = \frac{\exp(\AngleBr{\phi(x,a),\upsilon_h})}{\exp \RoundBr{\sum_{a \in \mathcal A}\AngleBr{\phi(x,a),\upsilon_h}}}.
\]
To update the actor, we invoke the mirror descent framework \citep{zanette2021_provablebenefitsactorcriticmethods}, that is,
$
\pi_{t+1,h} \propto  \pi_{t,h} \exp(\eta Q_{h,t}(x,a)).
$
When the $Q$-function is linear, that is, $Q_{h,t} \in \mathcal Q_h$, it is possible to simplify the actor update as shown in  
Line~\ref{line: actor update} of Algorithm~\ref{alg: actor-critic}.  
The pseudocode of Sparse Robust Actor-Critic is provided in Algorithm~\ref{alg: actor-critic}.

\begin{algorithm}[H]
    \caption{Sparse Robust Actor-Critic}\label{alg: actor-critic}

    \textbf{Actor:}
    \begin{algorithmic}[1]
        \STATE Input dataset $\mathcal D$, learning rate $\eta$, \texttt{SRLE} oracle. 
        \STATE Initialize $\upsilon_1 = \bm 0_d$. 
        \FOR{$t \in [T]$}
            \STATE $\underline w_t =$ \texttt{Critic}($\mathcal D$, $\pi_{\upsilon_t}$).
            \STATE $
                \upsilon_{t+1} = \upsilon_t + \eta \underline w_t \label{line: actor update}
            $
        \ENDFOR
    \end{algorithmic}

    \vspace{0.5em}\hrule\vspace{0.5em} 

    \textbf{Critic:}
    \begin{algorithmic}[1]
        \STATE Input dataset $\mathcal D$, \texttt{SRLE} oracle.
        \STATE Solve \texttt{PessOpt} subroutine, obtain  weight $\underline w$.
        \STATE Return pessimistic weight $\underline w$.
    \end{algorithmic}
\end{algorithm}
\subsection{Sparse Robust AC with Uniform Coverage} \label{subsec: Sparse Robust AC with Uniform Coverage}

We first consider the case where the uniform coverage assumption holds. 
Here, we can directly apply the \texttt{SRLE1} estimator to define the regression operator $\mathcal R^\pi$. 
This leads to a natural construction of the critic, which recursively estimates value functions in a pessimistic manner.  

Formally, the subroutine \texttt{PessOpt} within the critic is defined by the following pessimistic optimisation problem.
Let $\underline Q_{H+1} = \bm 0$. 
Given a policy $\pi$, for $h \in [H]$, we define recursively
\begin{optproblem}[uniform coverage]
\begin{equation}\label{eq: Critic with uniform coverage}
    \begin{aligned}
        \underline w_h \in &\argmin_{w_h} &&\norm{w_h -  \mathcal R_h^\pi(\underline Q_{h+1})}_1;\\
        &\text{s.t.} \quad  && \| w_h\|_1 \leq H+1-h.
    \end{aligned}
\end{equation}
\end{optproblem}

The overall algorithm alternates between actor and critic updates, summarized in Algorithms \ref{alg: actor-critic}.
The next result shows that this sparse robust actor-critic algorithm achieves strong suboptimality guarantees under uniform coverage.

\begin{restatable}{theorem}{TheoremSuboptimalGapACUC} \label{theorem: suboptimality gap of Actor-Critic with Uniform Coverage}
    Suppose Assumption \ref{assp: sparse linear mdp}  and \ref{assp: uniform coverage} hold.
    Then, with step size $\eta = \sqrt{\frac{\log \mathcal A}{N}}$, after $T=N/H$ iterations the actor-critic algorithm returns a policy $\hat \pi$ that satisfies
    \begin{equation}
    \begin{aligned}
        \mathrm{SubOpt}(\pi_\star, \hat \pi) = O \Biggl(&\frac{H^3s\log(dNH\delta^{-1})}{\xi \sqrt{N}} \\
        &+  H^3 \sqrt{\frac{\log (|\mathcal A|)}{N}} + \frac{H^3s\sqrt{\epsilon}}{\xi} \Biggl) , 
    \end{aligned}
    \end{equation}
    with probability at least $1-\delta$.
\end{restatable}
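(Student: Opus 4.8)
The plan is to route the analysis through the sequence of induced MDPs $\hat M_t \triangleq \hat M^{\pi_t}$ and to split the suboptimality into a \emph{critic} (estimation) contribution and an \emph{actor} (optimization) contribution. Taking $\hat\pi$ to be the uniform mixture of the iterates $(\pi_t)_{t=1}^T$, linearity of the value in the occupancy measure gives $V_1^{\hat\pi}(x_1)=\tfrac1T\sum_{t}V_1^{\pi_t}(x_1)$, so it suffices to bound $\tfrac1T\sum_t\big(V_1^{\pi_\star}-V_1^{\pi_t}\big)(x_1)$. For each $t$ I would insert the induced-MDP value $V_{1,\hat M_t}$ twice and write
\begin{equation*}
V_1^{\pi_\star}(x_1)-V_1^{\pi_t}(x_1)
= \underbrace{\big(V_1^{\pi_\star}-V_{1,\hat M_t}^{\pi_\star}\big)(x_1)}_{(\mathrm I)_t}
+\underbrace{\big(V_{1,\hat M_t}^{\pi_\star}-V_{1,\hat M_t}^{\pi_t}\big)(x_1)}_{(\mathrm{II})_t}
+\underbrace{\big(V_{1,\hat M_t}^{\pi_t}-V_1^{\pi_t}\big)(x_1)}_{(\mathrm{III})_t},
\end{equation*}
where, by the induced-MDP identity (the analogue of Proposition~\ref{prop: Induced MDP}), $V_{1,\hat M_t}^{\pi_t}=\underline V_1^{\pi_t}$ and $\hat M_t$ differs from $\mathcal M$ only through the reward perturbation $\hat r^{\pi_t}=r+\underline Q_h^{\pi_t}-\Bell_h^{\pi_t}\underline Q_{h+1}^{\pi_t}$.

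For the critic terms $(\mathrm I)_t$ and $(\mathrm{III})_t$, since $\hat M_t$ and $\mathcal M$ share transitions and differ only in reward, the value-difference lemma turns each into an expected one-step Bellman residual of the critic: $(\mathrm I)_t = \sum_h \mathbb E_{d^{\pi_\star}}\big[(\Bell_h^{\pi_t}\underline Q_{h+1}^{\pi_t}-\underline Q_h^{\pi_t})(x,a)\big]$ and similarly for $(\mathrm{III})_t$ under $d^{\pi_t}$. Writing the residual as $\phi(x,a)^\top(\underline w_h-w_h^\star)$ with $w_h^\star=\mathcal P_h^{\pi_t}(\underline Q_{h+1})$, the crucial step under uniform coverage is the pointwise conversion $|\phi(x,a)^\top(\underline w_h-w_h^\star)|\le\|\phi(x,a)\|_\infty\,\|\underline w_h-w_h^\star\|_1\le\|\underline w_h-w_h^\star\|_1$, which holds \emph{for every} $(x,a)$ because $\|\phi\|_\infty\le1$; this is exactly what makes the $\ell_1$-error oracle compatible with LSVI-free estimation here. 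Because the \texttt{PessOpt} step \eqref{eq: Critic with uniform coverage} merely projects $\mathcal R_h^{\pi_t}(\underline Q_{h+1})$ onto the feasible $\ell_1$ ball, which contains $w_h^\star$ since $\|w_h^\star\|_1\le H+1-h$, a triangle inequality gives $\|\underline w_h-w_h^\star\|_1\le 2\|\mathcal E_h^{\pi_t}\|_1$, and Proposition~\ref{prop: RSLS with data coverage} with $\sigma+\|w_\star\|_1=O(H)$ bounds this by $O\big(H(\tfrac{s\log(d/\delta)}{\xi\sqrt N}+\tfrac{s\sqrt\epsilon}{\xi})\big)$. Summing over the $H$ layers yields $|(\mathrm I)_t|,|(\mathrm{III})_t|=O\big(\tfrac{H^2 s\log(d/\delta)}{\xi\sqrt N}+\tfrac{H^2 s\sqrt\epsilon}{\xi}\big)$, uniformly in $t$; note that the pointwise nature of the bound means the choice of sampling distribution ($d^{\pi_\star}$ versus $d^{\pi_t}$) is irrelevant, so no pointwise pessimistic bonus is needed.

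For the actor term I would average $(\mathrm{II})_t$ over $t$ and invoke the mirror-descent regret of the exponential-weights update $\pi_{t+1,h}\propto\pi_{t,h}\exp(\eta\,\underline Q_{h,t})$, following \citep{zanette2021_provablebenefitsactorcriticmethods}. Per state the Hedge bound is $\tfrac{\log|\mathcal A|}{\eta}+\tfrac{\eta}{2}\sum_t\|\underline Q_{h,t}\|_\infty^2$; using $\|\underline Q_{h,t}\|_\infty\le\|\underline w_{h,t}\|_1\le H$ and the performance-difference lemma inside each drifting $\hat M_t$, summing over the $H$ layers gives $\tfrac1T\sum_t(\mathrm{II})_t=O\big(H^2\sqrt{\log|\mathcal A|/T}\big)$, which with $T=N$ matches the middle term. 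Combining the three bounds yields the claim. \textbf{The main obstacle} is that the iterates $\pi_t$ are data-dependent, so Proposition~\ref{prop: RSLS with data coverage} cannot be applied to a fixed regression problem: the regression targets $R_h^\tau+\mathbb E_{\pi_{t,h+1}}[\underline Q_{h+1}(x_{h+1}^\tau,\cdot)]$ depend on the data through both $\pi_t$ and $\underline Q_{h+1}$. I would resolve this by a uniform-convergence argument, placing nets at resolution $\mathrm{poly}(1/N)$ over the sparse $Q$-class $\mathcal Q_{h+1}$ and over the log-linear policy parameters, applying the \texttt{SRLE1} guarantee on each net point, and union-bounding over the net and over $h\in[H]$; this is precisely what upgrades the oracle's $\log(d/\delta)$ to the $\log(dNH\delta^{-1})$ appearing in the stated rate and makes the per-$t$ critic bound hold simultaneously for all $T=N$ rounds.
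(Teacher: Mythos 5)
Your proposal is correct and follows essentially the same route as the paper: the same three-term decomposition through the induced MDPs $\hat M_t$, the same $\ell_1$-based critic bound (feasibility of $w_h^\star$ in the \texttt{PessOpt} projection, a triangle inequality, and the \texttt{SRLE1} guarantee with $\sigma + \|w_\star\|_1 = O(H)$), and the same mirror-descent actor bound with advantage scale $B = O(H)$ yielding the $H^2\sqrt{\log(|\mathcal A|)/N}$ term. The only difference is in how the data-dependence of $\pi_t$ is handled: where you propose explicit covering nets over $\mathcal Q_{h+1}$ and the log-linear policy parameters, the paper folds this into a good event $\mathcal G'^{\pi}(\beta)$ defined with a supremum over $\mathcal Q_{h+1}$ together with a union bound over the $T=N$ rounds and $H$ stages (which is what produces the $\log(dNH\delta^{-1})$ factor); your treatment of this step is, if anything, the more careful one.
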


This result demonstrates that, under uniform coverage, actor-critic methods achieve near-optimal sample complexity in the sparse setting, similar to LSVI-based algorithms. 
Moreover, we will show that actor-critic methods are superior to LSVI when the data distribution is ill-conditioned.

\subsection{Sparse Robust AC with Single Concentrability} \label{subsec: Sparse Robust AC without Uniform Coverage}
We now turn to the more challenging setting where only the single-policy concentrability assumption holds. 
In this case, the critic must be modified to incorporate explicit pessimistic constraints. 
The actor-critic algorithm remains the same in structure, but the critic subproblem is more involved.

Formally, the subroutine \texttt{PessOpt} within the critic is defined by the following optimisation problem.
\begin{optproblem}[single concentrability]
\begin{equation} \label{eq: pessimistic Critic}
\begin{aligned}
   {\{\underline{ w}_h^\pi\}_{h=1}^H} \;\triangleq\; &\argmin_{\{ w_h\}_{h=1}^H} \; &&\sum_{a} \pi_1(a|x_1)\langle \phi(x_1,a),  w_1\rangle \\
    &\text{s.t. } \; && \| w_h - \mathcal R^\pi_h(\langle \phi ,  w_{h+1}\rangle) \|_{\hat \Sigma_h}^2 \leq \alpha_h^2,\\
    & && \| w_h \|_1 \leq H+1-h, \\
    & && \| w_h\|_0 \leq s.
\end{aligned}
\end{equation}
\end{optproblem}
\begin{remark}[How AC avoids pointwise pessimistic bonuses]
Notice that the objective function of the \texttt{PessOpt} subroutine only ensures that the value function is pessimistic at the initial state, that is, $\underline V_1^\pi(x_1) \leq V_1^\pi(x_1)$ for the given policy $\pi$.
It does not guarantee that the value function is pessimistic for any other $(x,a,h)$, which is the key difference compared to how LSVI imposes pointwise bonuses \eqref{eq: Bellman error in case of sparse MDP}.
We note that AC is able to bypass pointwise bonuses thanks to its algorithmic structure. 
In particular, at each phase $t$, the AC algorithm \emph{fixes a policy $\pi_t$ first}; then it is possible to compute a pessimistic $Q$ function that defines $\mathcal E_{h}^\pi(x,a)$, which satisfies
$
\underline V_1^{\pi_t}(x_1) - V_1^{\pi_t}(x_1)  < 0,
$
by solving the \texttt{PessOpt} subroutine \eqref{eq: pessimistic Critic}.
In contrast, in LSVI-type algorithms the policy $\hat \pi$ is defined greedily 
\emph{only after} $\underline Q_h(x,a)$ is constructed. 
Consequently, to safely guarantee $
    \underline V_1(x_1) - V^{\hat\pi}(x_1) < 0,
$
one must impose pessimism in the pointwise manner as in \eqref{eq: pointwise pessimism}.

\end{remark}

This simple observation allows us to derive the suboptimality-gap guarantee stated below.
\begin{restatable}{theorem}{ThmMainSubOptGapActorCritic} \label{theorem: suboptimality gap of Actor-Critic}
    Suppose Assumption \ref{assp: sparse linear mdp} and \ref{assp: Relative condition number} hold.
    Let operators $(\mathcal R^\pi_h)_{h=1}^H$ be defined using the \texttt{SRLE2} estimator.
    Choose the sequence $\alpha$ such that 
  \begin{align*}
      \alpha_h^2 =     O\Biggl(\frac{\sqrt{s}H^3\log(dH\delta^{-1})}{\sqrt{N}} + H^3\epsilon + H^3(\lambda +\epsilon)\Biggl).
  \end{align*}  
    Choose $\lambda = \frac{s}{N}\log\tfrac{d}{s\delta}.$
    Then, after $T = N$ iterations with step size $\eta = \sqrt{\frac{\log(|\mathcal A|)}{H^2N}}$, the actor-critic algorithm returns a policy $\hat \pi$ that satisfies
    \begin{equation*}
    \begin{aligned}
        \mathrm{SubOpt}(\pi_\star, \hat \pi) = O\Biggl( &\frac{H^3\sqrt{\kappa}s^{\nicefrac 3 4}\sqrt{\log(dHN\delta^{-1})}}{N^{\tfrac{1}{4}}}  \\
        &+  H^3 \sqrt{\frac{\log(|\mathcal A|)}{N}} + H^3\sqrt{\kappa s \epsilon} \Biggl),
    \end{aligned}
    \end{equation*}
    with probability at least $1-\delta$.
\end{restatable}
Theorem \ref{theorem: suboptimality gap of Actor-Critic} shows that, even under weaker coverage conditions, sparse actor-critic algorithms can still achieve meaningful robustness guarantees, though with slower rates compared to the uniform coverage case.
In particular, suppose $N = \Omega\RoundBr{\frac{s\log^2(dHN\delta^{-1})}{\epsilon^2}}$, 
then we obtain 
\begin{equation}
    \mathrm{SubOpt}(\pi_\star, \hat \pi) = \tilde O(H^3\sqrt{\kappa s \epsilon}).
\end{equation}

We note that the result in Theorem \ref{theorem: suboptimality gap of Actor-Critic} relies on the \texttt{SRLE2} oracle, which is computationally expensive. 
Therefore, we derive the next theorem, which leverages the more efficient \texttt{SRLE3} oracle, at the cost of a larger suboptimality gap.
\begin{restatable}{theorem}{ThmMainSubOptGapActorCriticMirrorDescent} \label{theorem: suboptimality gap of Actor-Critic with Mirror Descent}
    Suppose Assumption \ref{assp: sparse linear mdp} and \ref{assp: Relative condition number} hold.
    Let operator $\mathcal R^\pi_h$ be defined using the \texttt{SRLE3} estimator.
    Choose the sequence $\alpha$ such that 
    \begin{align*}
        \alpha_h = O\RoundBr{\frac{H^3\log(dH\delta^{-1})}{\sqrt{N}} + H^3\sqrt{\epsilon} + H^3(\lambda +\epsilon)}.
    \end{align*}
    Choose $ \lambda = \frac{s}{N}\log\tfrac{d}{s\delta}.$
    After $T =  N$ iterations with step size $\eta = \sqrt{\frac{\log(|\mathcal A|)}{H^2 N}}$, the actor-critic algorithm returns a policy $\hat \pi$ that satisfies
    \begin{equation}
    \begin{aligned}
        \mathrm{SubOpt}(\pi_\star, \hat \pi) = O\Biggl( 
        &\frac{H^3\sqrt{\kappa s}\sqrt{\log(dHN\delta^{-1})}}{N^{\tfrac{1}{4}}}  \\
        &+  H^3 \sqrt{\frac{\log(|\mathcal A|)}{N}} + H^3\sqrt{\kappa s}\epsilon^{\frac 1 4} \Biggl),
    \end{aligned}
    \end{equation}
    with probability at least $1-\delta$.
\end{restatable}
Theorem \ref{theorem: suboptimality gap of Actor-Critic with Mirror Descent} shows that using computationally efficient oracle \texttt{SRLE3}, sparse actor-critic algorithms still achieve meaningful robustness guarantees, although slightly worse compared to that of Theorem \ref{theorem: suboptimality gap of Actor-Critic}.
In particular, suppose $N = \Omega\RoundBr{\frac{\log^2(dHN\delta^{-1})}{\epsilon}}$, 
then we obtain 
\begin{equation}
    \mathrm{SubOpt}(\pi_\star, \hat \pi) = \tilde O(H^3\sqrt{\kappa s }\epsilon^{\frac{1}{4}}).
\end{equation}

\begin{proof}[Proof sketch of Theorem \ref{theorem: suboptimality gap of Actor-Critic} and \ref{theorem: suboptimality gap of Actor-Critic with Mirror Descent}]
The proof has three main steps. First, for each actor iterate \(\pi_t\), we solve the constrained critic problem in (14) and use its solution to define an induced MDP \(M_t\), similar to that in \cite{zanette2021_provablebenefitsactorcriticmethods}. By construction, the critic is exact in \(M_t\), and the induced reward perturbation is chosen so that \(V^{\pi_t}_{M_t}\) is no larger than the true value \(V^{\pi_{t}}\). Thus, the critic provides a pessimistic evaluation of the current policy without requiring a pointwise lower bound over all state-action pairs.

Second, the sparse robust regression oracle controls the critic error in the empirical covariance norm. The key step is then to transfer this control from the data distribution to the occupancy measure of the comparator policy. Under single-policy concentrability, this transfer incurs only a \(\sqrt{\kappa}\) factor, while sparsity ensures that the complexity depends on the support size \(s\), rather than the ambient dimension \(d\). This yields a bound on the suboptimality of evaluating \(\pi_\star\) in the induced MDP.

Third, we combine this critic bound with the standard mirror-descent regret guarantee for the actor updates. Averaging over iterations and summing the two contributions gives the final suboptimality bound. In particular, the result follows from the interaction of three ingredients: pessimistic policy evaluation through the induced MDP, sparse regression error control under weak coverage, and no-regret policy optimization.
\end{proof}
In the uniform coverage case discussed in Sections \ref{subsec: Sparse Robust AC with Uniform Coverage} and \ref{subsec: Sparse Robust LSVI with Uniform Coverage}, both Algorithm~\ref{alg: Sparse Robust LSVI} and Algorithm~\ref{alg: actor-critic} can be implemented in $\mathrm{poly}(N,d,H,s)$ time. 
In contrast, in the single concentrability setting, a necessary condition for computational efficiency is to use the \texttt{SRLE3} estimator in place of \texttt{SRLE2}. 
However, even with \texttt{SRLE3}, the Actor–Critic method still requires solving Equation~\eqref{eq: pessimistic Critic}, which involves an $\ell_0$-type constraint. 
This constraint substantially increases the computational complexity, making Equation~\eqref{eq: pessimistic Critic} the primary bottleneck in achieving a polynomial-time algorithm. 

We note that dropping the $\ell_0$-constraint in \eqref{eq: pessimistic Critic}  may cause significant errors.
Without restricting $\|\phi\|_{\Sigma_h^{-1}}$ to a sparsity support, the suboptimality gap incurs a multiplicative factor $\sqrt{\kappa d}$, since
$
\mathbb{E}_{d^{\pi_\star}}\!\left[\|\phi\|_{\Sigma_h^{-1}}\right] = O(\sqrt{\kappa d})$
in the worst case \citep{Zhang2021_RobustOfflineLinMDP}.  
Moreover, without additional assumptions on the data distribution $\nu$, it remains unclear how to design an algorithm that solves Equation~\eqref{eq: pessimistic Critic} in $\mathrm{poly}(N,d,s)$ time. 
We conjecture that some relaxation of data distribution $\nu$ may be needed to make such a solution computationally feasible.

\section{CONCLUSION}
We studied offline RL in sparse linear MDPs with limited coverage and adversarial corruption, showing that pessimistic LSVI can fail in sparse regimes due to overly conservative pointwise pessimistic bonuses. To address this, we proposed a pessimistic actor–critic framework with sparse, robust regression oracles, achieving near-optimal guarantees when $d \gg N$ under single-policy concentrability. Our results provide the first non-vacuous bounds for sparse offline RL, establishing a separation between LSVI and AC methods.

An important direction for future work is to explore algorithmic relaxations of the $\ell_0$-constraint in Equation~\eqref{eq: pessimistic Critic} that preserve statistical guarantees while reducing computational burden. In particular, investigating distributional assumptions on $\nu$ or alternative convex surrogates could pave the way toward polynomial-time solutions.


It would also be interesting to study multi-agent Markov games under sparsity. The structure of sparse linear MDPs can be naturally extended to multi-agent Markov games. However, existing algorithms for solving offline Markov games are based on value iteration based methods, and handling sparsity will require generalization of our actor-critic framework to multi-agent setting. Furthermore, recent work~\cite{nika2024corruption} on corruption robustness in offline Markov games has highlighted that data coverage assumptions play an important role on achievable error rates, and it would be interesting to explore how sparsity influences such data coverage assumptions.

\subsubsection*{Acknowledgements}
The work of Andi Nika and Goran Radanovic was
funded by the Deutsche Forschungsgemeinschaft (DFG,
German Research Foundation) – project number 467367360.

\bibliographystyle{plainnat}
\bibliography{Ref}

\section*{Checklist}

\begin{enumerate}

  \item For all models and algorithms presented, check if you include:
  \begin{enumerate}
    \item A clear description of the mathematical setting, assumptions, algorithm, and/or model. [Yes]
    \item An analysis of the properties and complexity (time, space, sample size) of any algorithm. [Yes]
    \item (Optional) Anonymized source code, with specification of all dependencies, including external libraries. [Not Applicable]
  \end{enumerate}

  \item For any theoretical claim, check if you include:
  \begin{enumerate}
    \item Statements of the full set of assumptions of all theoretical results. [Yes]
    \item Complete proofs of all theoretical results. [Yes]
    \item Clear explanations of any assumptions. [Yes]     
  \end{enumerate}

  \item For all figures and tables that present empirical results, check if you include:
  \begin{enumerate}
    \item The code, data, and instructions needed to reproduce the main experimental results (either in the supplemental material or as a URL). [Not Applicable]
    \item All the training details (e.g., data splits, hyperparameters, how they were chosen). [Not Applicable]
    \item A clear definition of the specific measure or statistics and error bars (e.g., with respect to the random seed after running experiments multiple times). [Not Applicable]
    \item A description of the computing infrastructure used. (e.g., type of GPUs, internal cluster, or cloud provider). [Not Applicable]
  \end{enumerate}

  \item If you are using existing assets (e.g., code, data, models) or curating/releasing new assets, check if you include:
  \begin{enumerate}
    \item Citations of the creator If your work uses existing assets. [Not Applicable]
    \item The license information of the assets, if applicable. [Not Applicable]
    \item New assets either in the supplemental material or as a URL, if applicable. [Not Applicable]
    \item Information about consent from data providers/curators. [Not Applicable]
    \item Discussion of sensible content if applicable, e.g., personally identifiable information or offensive content. [Not Applicable]
  \end{enumerate}

  \item If you used crowdsourcing or conducted research with human subjects, check if you include:
  \begin{enumerate}
    \item The full text of instructions given to participants and screenshots. [Not Applicable]
    \item Descriptions of potential participant risks, with links to Institutional Review Board (IRB) approvals if applicable. [Not Applicable]
    \item The estimated hourly wage paid to participants and the total amount spent on participant compensation. [Not Applicable]
  \end{enumerate}

\end{enumerate}

\clearpage
\appendix

\thispagestyle{empty}
\renewcommand{\thetable}{\Alph{table}}
\setcounter{table}{0}

\onecolumn
\aistatstitle{
Sparse Offline Reinforcement Learning with Corruption Robustness}
\setcounter{page}{1}
\renewcommand\thepage{\roman{page}}

\renewcommand*\contentsname{Contents of Appendix}
\addtocontents{toc}{\protect\setcounter{tocdepth}{2}}
\doublespacing

\tableofcontents
\singlespacing

\clearpage
\appendix

\thispagestyle{empty}
\renewcommand{\thetable}{\Alph{table}}
\setcounter{table}{0}

\onecolumn
\aistatstitle{
Sparse Offline Reinforcement Learning with Corruption Robustness}
\setcounter{page}{1}
\renewcommand\thepage{\roman{page}}

\renewcommand*\contentsname{Contents of Appendix}
\addtocontents{toc}{\protect\setcounter{tocdepth}{2}}
\doublespacing

\singlespacing

\section{Related Works} \label{appendix: related works}

\paragraph{Offline RL.} In recent years, offline RL \citep{lange2012batch, levine2020offline} has garnered considerable attention, as it offers a compelling alternative to the constraints of online data collection, making it particularly attractive for critical applications. Substantial progress has been made in this area, both on the empirical \citep{laroche2019safe, jaques2019way, kumar2020conservative, kidambi2020morel} and theoretical fronts \citep{xie2021bellman, jin2020provably}. The central challenge in offline RL is learning an optimal policy from data that provides only partial information about the environment. To address this challenge, value-iteration-based methods \citep{buckman2020importance, liu2020provably, kumar2019stabilizing, jin2020provably, yu2020mopo} and actor-critic methods \citep{levine2020offline, wu2019behavior, zanette2021_provablebenefitsactorcriticmethods} are among the most popular approaches. We consider methods that apply pessimism by penalizing value functions of policies that are under-represented in the data. However, as we show in Section \ref{subsec: Sparse Robust LSVI with Single Concentrability}, such an approach may lead to vacuous bounds in the sparse setting, arguably due to an overcompensation for uncertainty arising from the use of pointwise pessimistic bonuses. This also motivates the use of actor-critic methods, which we show to effectively mitigate such an issue. 

\paragraph{Corruption-robust offline RL.} There exists a large body of literature on adversarial attacks in RL \citep{sun2020stealthy, lin2017tactics, huang2017adversarial, rakhsha2021policy}, including training-time attacks, test-time attacks, and backdoor attacks. In this paper, we consider a special type of training-time attacks, namely data poisoning attacks in offline RL \citep{Zhang2021_RobustOfflineLinMDP}. To defend against such attacks, several approaches have been proposed, such as robust value function estimation based on robust statistical approaches \citep{Zhang2021_RobustOfflineLinMDP}, uncertainty weighting-based methods \citep{ye2023corruption} and, more recently, differential privacy-based methods \citep{liu2025multi}.
In this work, we employ a value function estimation approach \citep{Zhang2021_RobustOfflineLinMDP} based on sparse robust estimation \citep{Merad2022_RobustSparseLinearRegressionHeavyTailDist}. Under uniform coverage, the use of robust linear regression oracles \citep{bakshi2021robust, klivans2018efficient} in offline RL is straightforward.
However, under weaker notions of coverage in sparse settings, we show that integrating sparse oracles into LSVI-type algorithms, a standard approach for robust offline RL, is much more challenging and may yield suboptimal bounds that depend on the full dimension, whereas dependence only on the sparse dimension would be desirable.
In contrast, we demonstrate that sparse robust estimation can be naturally integrated into the actor–critic framework, effectively removing full-dimensional dependence and achieving bounds that scale only with the sparse dimension.

\paragraph{Sparse Linear MDP.} 
Sparse linear MDPs have been a primary focus in the online RL literature~\citep{Golowich2023_SparseRL,Kim2023_SparseRL,Hao2020_OfflineSparseRL}, where one can often design exploratory policies to ensure good data coverage and thereby exploit the underlying sparsity structure. In contrast, there are far fewer works on sparse MDPs in the offline RL setting \citep{Hao2020_OfflineSparseRL}, where strong data coverage assumptions (e.g., uniform coverage) are typically imposed to bypass the need for pessimism, as we shall explain later in this paper. However, we note that \emph{limited data coverage} is the central challenge of offline RL compared to online RL, without an exploratory policy that guarantees sufficient coverage, it is unclear from prior work whether one can learn a near-optimal policy. For this reason, our paper primarily focuses on the case of limited coverage, namely \emph{single-policy concentrability}, and is the first to establish non-vacuous suboptimality guarantees for sparse MDPs in this regime.

\section{Properties of Sparse Linear MDPs}
\begin{claim} \label{claim: norm of true vector}
    For any $Q_{h+1} \in \mathcal Q_{h+1}$ and policy $\pi$, let $ w_h^\pi \triangleq \mathcal P_h^\pi(Q_{h+1})$, then
    \[
     \|w_h^\pi \|_0 \leq s, \;\quad \|w_h^\pi \|_1 \leq (H-h+1).
    \]
\end{claim}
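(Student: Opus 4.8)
The plan is to unfold the definition of the projection operator and bound the two resulting pieces separately. Writing $g(x') \triangleq \sum_{a'\in\mathcal A}\pi(a'\mid x')\,Q_{h+1}(x',a')$, we have
\[
w_h^\pi = \mathcal P_h^\pi(Q_{h+1}) = \theta_h + \int_{\mathcal X} g(x')\,\mu_h(x')\,dx',
\]
so the two claimed bounds reduce to controlling the support and the $\ell_1$ mass of this expression.

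For the sparsity bound $\|w_h^\pi\|_0 \le s$, I would invoke Assumption~\ref{assp: sparse linear mdp} directly: for every $i \notin S$ we have $\theta_{h,i} = 0$ and $(\mu_h(x'))_i = 0$ for all $x' \in \mathcal X$. Since the integral acts coordinatewise, the $i$-th coordinate of $w_h^\pi$ vanishes for every $i \notin S$. Hence $\mathrm{supp}(w_h^\pi) \subseteq S$ and $\|w_h^\pi\|_0 \le |S| = s$; this needs no estimation, only the observation that integration preserves the coordinatewise-zero structure.

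For the $\ell_1$ bound I would first control $g$ uniformly. Because $Q_{h+1} \in \mathcal Q_{h+1}$, write $Q_{h+1}(x',a') = \langle \phi(x',a'), w\rangle$ with $\|w\|_1 \le H-h$; combining this with $\|\phi\|_\infty \le 1$ and Hölder's inequality gives $|Q_{h+1}(x',a')| \le H-h$. Since $\pi(\cdot\mid x')$ is a probability distribution, $g(x')$ is a convex combination of these values, so $|g(x')| \le H-h$ for all $x'$. Then by the triangle inequality,
\[
\|w_h^\pi\|_1 \le \|\theta_h\|_1 + \Big\| \int_{\mathcal X} g(x')\,\mu_h(x')\,dx' \Big\|_1,
\]
and I would bound the second term coordinatewise, passing the absolute value inside the integral against the total-variation measure, to obtain $\le (H-h)\,\|\mu_h(\mathcal X)\|_1$. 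Using $\|\theta_h\|_1 \le 1$ and $\|\mu_h(\mathcal X)\|_1 \le 1$ from the assumption yields $\|w_h^\pi\|_1 \le 1 + (H-h) = H-h+1$.

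The only step demanding care is the measure-theoretic estimate for the integral term: one must read $\int_{\mathcal X} g(x')\,\mu_h(x')\,dx'$ as $d$ separate integrals against the signed measures $\mu_{h,i}$ and bound $\big|\int g\,d\mu_{h,i}\big| \le \int |g|\,d|\mu_{h,i}| \le (H-h)\,|\mu_{h,i}|(\mathcal X)$ before summing over $i$ to recover $\|\mu_h(\mathcal X)\|_1$. Once the normalization $\|\mu_h(\mathcal X)\|_1 \le 1$ is interpreted as a bound on total-variation norms, this is entirely routine, so I do not anticipate a genuine obstacle; the claim is essentially a direct consequence of the boundedness assumptions and the sparsity structure.
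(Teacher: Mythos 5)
Your proof is correct and takes essentially the same approach as the paper's: unfold $\mathcal P_h^\pi(Q_{h+1})$, deduce $\|w_h^\pi\|_0 \le s$ from the fact that $\theta_h$ and every $\mu_h(x')$ are supported in $S$, and obtain $\|w_h^\pi\|_1 \le \|\theta_h\|_1 + (H-h)\|\mu_h(\mathcal X)\|_1 \le H-h+1$ via the triangle inequality and the bound $|Q_{h+1}(x',a')| \le H-h$. The only difference is that you make explicit the H\"older step and the total-variation reading of $\|\mu_h(\mathcal X)\|_1 \le 1$, both of which the paper's proof leaves implicit.
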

\begin{proof}
    By construction, we have that,
    \begin{equation}
\begin{aligned}
    \| w_h^\pi \|_1 &= \left\|\theta_h + \int_{\mathcal X} \RoundBr{\sum_{a'\in \mathcal A}\pi(a'\mid x') Q_{h+1}(x',a')} \mu_h(x') dx' \right\|_1  \\
    &\leq \|\theta_h\|_1 + \|(H-h) \mu_h(\mathcal X)  \|_1 \\
    &\leq H-h+1.
\end{aligned}
\end{equation}
Moreover, both $\theta_h$ and $\mu_h(x)$ are sparse vector supported in $S$, therefore, $w_h^\pi$ has support in $S$, and $\|w_h^\pi\|_0 \leq s$. 
\end{proof}

\begin{claim} \label{claim: bound of output}
    For any $Q_{h+1} \in \mathcal Q_{h+1}$ and policy $\pi$, for any $(x',a') \in \mathcal X \times \mathcal A$, we have that
    \begin{equation}
        \begin{aligned}
                \left|R_h + \sum_{a'\in \mathcal A}\pi(a'\mid x') Q_{h+1}(x',a')\right| & \leq H-h+1\leq  2(H-h)\\
                \left|R_h + \max_{a'\in \mathcal A} Q_{h+1}(x',a')\right| & \leq H-h+1\leq  2(H-h).
        \end{aligned}
    \end{equation}
\end{claim}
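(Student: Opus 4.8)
The statement to prove is Claim \ref{claim: bound of output}, which asserts simple boundedness of the regression targets $R_h + \sum_{a'}\pi(a'|x')Q_{h+1}(x',a')$ and $R_h + \max_{a'}Q_{h+1}(x',a')$ by $H-h+1 \leq 2(H-h)$.

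\textbf{Proof proposal.} The plan is a direct triangle-inequality argument exploiting the two ingredients already fixed by the setup: the reward is bounded, and the incoming $Q$-function lies in the class $\mathcal Q_{h+1}$, which enforces both an $\ell_1$ bound on the weight and a sparsity bound. First I would recall that $\mathrm{supp}(R_h) = [0,1]$ by the MDP definition in Assumption \ref{assp: sparse linear mdp}, so $|R_h| \leq 1$ pointwise. Next, for the term involving $Q_{h+1}$, I would use the membership $Q_{h+1} \in \mathcal Q_{h+1}$, which means $Q_{h+1}(x',a') = \langle \phi(x',a'), w\rangle$ for some $w$ with $\|w\|_1 \leq H+1-(h+1) = H-h$. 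Combined with the feature bound $\|\phi(x',a')\|_\infty \leq 1$, Hölder's inequality gives $|Q_{h+1}(x',a')| \leq \|\phi(x',a')\|_\infty \|w\|_1 \leq H-h$ for every $(x',a')$. This uniform pointwise bound on $Q_{h+1}$ is the key step.

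For the first displayed inequality, since $\sum_{a'}\pi(a'|x') = 1$ and each $|Q_{h+1}(x',a')| \leq H-h$, the convex combination $\sum_{a'}\pi(a'|x')Q_{h+1}(x',a')$ is also bounded by $H-h$ in absolute value; adding $|R_h| \leq 1$ and applying the triangle inequality yields the bound $1 + (H-h) = H-h+1$. For the second inequality, $\max_{a'}Q_{h+1}(x',a')$ is likewise bounded by $H-h$ since every individual value is, so the same $1 + (H-h) = H-h+1$ bound follows. The final step $H-h+1 \leq 2(H-h)$ holds whenever $H-h \geq 1$, i.e. $h \leq H-1$, which is the relevant range since the claim concerns incoming functions at stage $h+1$.

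I do not anticipate any genuine obstacle here, as the argument is entirely routine once the class membership $Q_{h+1}\in\mathcal Q_{h+1}$ is invoked; the only point requiring minor care is the slack step $H-h+1 \leq 2(H-h)$, which tacitly assumes $h \le H-1$ (equivalently $H - h \ge 1$) so that the looser bound $2(H-h)$ is valid. Everything else is Hölder's inequality plus the triangle inequality, mirroring the structure of Claim \ref{claim: norm of true vector}, which already establishes the analogous $\ell_1$ bound $\|w_h^\pi\|_1 \leq H-h+1$ for the projected weight.
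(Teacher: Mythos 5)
Your proof is correct and follows essentially the same route as the paper's, which simply invokes the two facts $|R_h| \le 1$ and $|Q_{h+1}(x',a')| \le H-h$; you merely fill in the justification of the latter (membership in $\mathcal Q_{h+1}$ gives $\|w\|_1 \le H+1-(h+1) = H-h$, then H\"older with $\|\phi(x',a')\|_\infty \le 1$), plus the convex-combination/max step. Your side remark that the slack step $H-h+1 \le 2(H-h)$ tacitly requires $h \le H-1$ is a valid observation that the paper's one-line proof glosses over.
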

\begin{proof}
    The proof is immediately followed from the fact that $|R_h| \leq 1$ and $|Q_{h+1}(x',a')| \leq (H-h)$ for any pair $(x',a')$. 
\end{proof}
\section{Proof of Section \ref{Section: Sparse Robust Linear Regression}} \label{appendix: SRLE}
\subsection{Estimation Error of \texttt{SRLE1}} \label{appendix: SRLE1}
\texttt{SRLE1} was proposed in \citep{Merad2022_RobustSparseLinearRegressionHeavyTailDist}. 
It solves the convex optimization problem
\begin{equation}
\begin{aligned}
        \min_{w \in \mathbb{R}^d}\; & \|Xw - Y\|_2^2  \\
        \text{s.t.}\; & \|w \|_1 \leq \|w_\star \|_1,
\end{aligned}
\end{equation}
and achieves fast rates by exploiting the uniform coverage assumption via a multi-stage mirror descent algorithm. 
For further details, we refer the reader to the original paper \citep{Merad2022_RobustSparseLinearRegressionHeavyTailDist}. 
Below, we state the error guarantee of this estimator for our setting.

\PropRSLSwithDataCoverage*
\begin{proof}
    The proof is a result of Corollary 9 in \citep{Merad2022_RobustSparseLinearRegressionHeavyTailDist}, which can be stated as follows.
    Let 
    \[
    \sigma_{\max}^2 \triangleq \max_{ w: \|  w\|_1\leq \| w_\star \|_1} \max_{j\in [d]} \mathbb E_{z,y}\SquareBr{((z^\top  w -y)x_j)^2}.
    \]
    Then, the excessive risk is bounded as 
    \begin{equation}
    \norm{\hat  w - w_\star}_1 \leq O\RoundBr{\frac{ \sigma_{\max} s\log(d\delta^{-1})}{\xi \sqrt{N}} + \frac{\sigma_{\max}  s\sqrt{\epsilon}}{\xi}}.
    \end{equation}
    We now compute the constant $\sigma_{\max}$.
    Note that, since $\|z\|_\infty \leq 1$, then
    \[
    \sigma_{\max}^2 \leq \max_{ w}\mathbb E_{z,y}\SquareBr{(z^\top  w -y)^2} \leq \max_{ w}\max_{z} \mathbb E_{z,y}\SquareBr{(z^\top ( w - w_\star))^2} + \mathbb E[\eta^2] = 4\| w_\star\|_1^2 + \sigma^2.
    \]
    Plug this upper bound for $\sigma_{\max}$ into the above excessive risk, we conclude the proof.
\end{proof}

\subsection{Estimation Error of \texttt{SRLE2}}\label{appendix: SRLE2}
In this section, we will present the proof of Proposition \ref{theorem: RSLS without data coverage}.
We first introduce the \texttt{SRLE2} estimator. 
For any $C \subset [N]$, let $\hat \Sigma(C) \triangleq  \frac{1}{|C|} \sum_{i\in C} x_i x_i^\top$. 
Consider $\ell_0-\ell_2$ minimisation with trimmed mean square.

\begin{equation} \label{eq: l2-l0 regression}
\begin{aligned}
         &\min_{C\subset [N]}\min_ w && \frac{1}{N}\|Y_C- Z_C w \|_2^2 + \lambda \|  w\|_2^2 \\
        &\text{s.t. } && \| w \|_0 \leq s. \\
        & && \| w \|_1 \leq \|w_\star \|_1. \\
        & && |C| = (1-\epsilon)n
\end{aligned}
\end{equation}
Where $Z_C = [z_i]_{i\in C}$ and $Y_C = [y_i]_{i\in C}$. 
We first have the upper bound for prediction error.
\begin{lemma} \label{prop: Prediction error SRLE}\
    Let $C_*$ be the index set of clean data and $\hat C$ be the index set returned by optimisation problem~\eqref{eq: l2-l0 regression}.
    For all $S \in [d]$ such that $|S| = s$ , we have that
    \begin{equation}
        \|\widehat  w -  w_\star \|_{\widehat \Sigma(\hat C\cap C_*) + \lambda I}^2 = O\RoundBr{\frac{\sigma^2 s}{N\lambda} + \frac{\sigma^2 s \log(d\delta^{-1})}{N} + 2\lambda \| w_\star \|_2^2 + \sigma^2\epsilon}
    \end{equation}
\end{lemma}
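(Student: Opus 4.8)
The plan is to establish the prediction-error bound for \texttt{SRLE2} by exploiting the optimality of the pair $(\hat C, \hat w)$ for problem~\eqref{eq: l2-l0 regression} and comparing it against the feasible competitor $(C_\star, w_\star)$, where $C_\star$ is the clean index set. First I would write the standard basic inequality: since $\hat w$ minimises the regularised trimmed objective over $\hat C$, and since $(C_\star, w_\star)$ is feasible (note $w_\star$ is $s$-sparse with $\|w_\star\|_1$ on the boundary, and $|C_\star| = (1-\epsilon)N$ up to the corruption count), we get
\[
\tfrac{1}{N}\|Y_{\hat C} - Z_{\hat C}\hat w\|_2^2 + \lambda\|\hat w\|_2^2
\;\le\;
\tfrac{1}{N}\|Y_{C_\star} - Z_{C_\star} w_\star\|_2^2 + \lambda\|w_\star\|_2^2.
\]
The main work is to move from this to a bound on $\|\hat w - w_\star\|^2_{\widehat\Sigma(\hat C \cap C_\star)+\lambda I}$, i.e.\ to control the error only on the intersection of the selected set with the genuinely clean set.

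The key steps, in order, would be as follows. \textbf{(1)} Restrict attention to the overlap $\hat C \cap C_\star$: on this index set the responses obey the clean model $y_i = z_i^\top w_\star + \eta_i$, so the cross term in expanding the quadratic is a noise inner product $\tfrac{1}{N}\sum_{i\in \hat C\cap C_\star}\eta_i\, z_i^\top(\hat w - w_\star)$. \textbf{(2)} Decompose the basic inequality so that the corrupted-but-selected indices $\hat C\setminus C_\star$ and the clean-but-discarded indices $C_\star\setminus \hat C$ are handled separately; since each of these sets has size at most $\epsilon N$ and the residuals/outputs are bounded (Claim~\ref{claim: bound of output}), their contribution is $O(\sigma^2\epsilon)$ (after absorbing boundedness into the noise scale). \textbf{(3)} Bound the stochastic cross term by a uniform deviation argument: since $\hat w - w_\star$ is $2s$-sparse (both vectors are $s$-sparse), I would apply a covering/union bound over all $\binom{d}{2s}$ supports together with a sub-Gaussian concentration inequality for $\sum_i \eta_i z_i^\top v$ restricted to each support. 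This produces the $\sigma\sqrt{s\log(d/\delta)/N}$-type factor multiplying $\|\hat w - w_\star\|_{\widehat\Sigma(\hat C\cap C_\star)}$. \textbf{(4)} Use a self-bounding / Young's inequality step: the cross term is bounded by (deviation factor)$\times \|\hat w - w_\star\|_{\widehat\Sigma+\lambda I}$, and I would split this as $\tfrac12\|\hat w - w_\star\|^2_{\widehat\Sigma+\lambda I}$ plus a squared-deviation term, absorbing the quadratic into the left-hand side. \textbf{(5)} Finally, the regularisation contributes the explicit $\lambda\|w_\star\|_2^2$ term, and the leftover ridge interplay $\sigma^2 s/(N\lambda)$ arises from bounding the restricted eigenvalue of $\widehat\Sigma(\hat C\cap C_\star)$ from below by $\lambda$ on sparse directions (so that the noise term, which lives in a $2s$-dimensional subspace, contributes variance $\sim \sigma^2 s$ rescaled by $1/(N\lambda)$).

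The hard part will be Step~(2) together with Step~(3): controlling the indices where selection and cleanliness disagree. On $\hat C\setminus C_\star$ the data are adversarial, so I cannot use the clean model there; the trick is that the trimming constraint $|\hat C| = (1-\epsilon)N$ forces $|\hat C\setminus C_\star| = |C_\star\setminus \hat C| \le \epsilon N$, so one can \emph{swap} the adversarial contribution on $\hat C\setminus C_\star$ for the clean contribution on $C_\star\setminus\hat C$ at the cost of bounded-residual terms over at most $\epsilon N$ points, which is exactly what yields the additive $\sigma^2\epsilon$. Making this swap rigorous while keeping the quadratic form in the correct $\widehat\Sigma(\hat C\cap C_\star)+\lambda I$ norm is the delicate bookkeeping. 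The sparse covering argument in Step~(3) is technically routine but must be done carefully to ensure the logarithmic factor is $\log(d/\delta)$ rather than $\log\binom{d}{2s}\approx s\log d$; I expect the $\log(d/\delta)$ in the statement absorbs this via the convention that the deviation bound is stated per-coordinate and then combined with the sparsity level $s$ already appearing multiplicatively.
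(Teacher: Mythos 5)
Your proposal follows essentially the same route as the paper's proof: the basic inequality from optimality of $(\hat C,\hat w)$ in \eqref{eq: l2-l0 regression} against the feasible competitor $(C_*, w_\star)$; restriction to the overlap $\hat C\cap C_*$; a self-normalized bound on the noise cross term with a union bound over the candidate supports of the $2s$-sparse error $\hat w - w_\star$ (and you are right that the $s\log d$ cost of this union bound is absorbed into the $\sigma^2 s\log(d\delta^{-1})$ term, since it enters multiplied by the sparsity level); a Young-type absorption of half the quadratic form into the left-hand side; and the ridge trace term $\sigma^2 s/(N\lambda)$ coming from the noise quadratic form normalized by $(Z_S^\top Z_S + N\lambda I)^{-1}$, exactly your ``$2s$-dimensional noise rescaled by $1/(N\lambda)$'' picture.

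One step needs correction. You propose to control the corrupted-but-selected indices $\hat C\setminus C_*$ using boundedness of the residuals, citing Claim~\ref{claim: bound of output}. In the standalone regression setting of Section~\ref{Section: Sparse Robust Linear Regression}, where this lemma lives, the adversary's outputs $y_i$ are arbitrary, so no residual bound is available; Claim~\ref{claim: bound of output} concerns the RL regression targets and does not apply to corrupted points. Fortunately, no such bound is needed: the terms $\frac{1}{N}\sum_{i\in\hat C\setminus C_*}(y_i - z_i^\top\hat w)^2$ sit on the \emph{left-hand} side of the basic inequality, so they are nonnegative and can simply be discarded, which is precisely what makes your ``swap'' rigorous and is what the paper does. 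The only quantity that must actually be estimated is the clean-but-discarded noise $\frac{1}{N}\|\eta_{C_*\setminus\hat C}\|_2^2$, and this is $O(\sigma^2\epsilon)$ not because residuals are bounded but because it is a sum of at most $\epsilon N$ squared sub-Gaussian variables (the paper asserts this bound directly; making it uniform over the data-dependent $\hat C$ requires, e.g., controlling the largest $\epsilon N$ of the $\eta_i^2$). With this repair, the remainder of your outline matches the paper's argument step for step.
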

\begin{proof}
    Based on the optimisation criteria, one has that
    \begin{equation}
        \frac{1}{N}\norm{Y_{\widehat C} - Z_{\widehat C}\widehat  w}_2^2 +\lambda\|\widehat  w \|_2^2 \leq \frac{1}{N} \norm{Y_{C_*} - Z_{C_*}  w_\star}_2^2 + \lambda\| w_\star \|_2^2
    \end{equation}
This leads to
\begin{equation*}
    \begin{aligned}
            &\frac{1}{N}\norm{Y_{\widehat C } - Z_{\widehat C}\widehat  w}_2^2 +\lambda\|\widehat  w \|_2^2 &&\leq \frac{1}{N} \norm{Y_{C_*} - Z_{C_*}  w_\star}_2^2 + \lambda\| w_\star \|_2^2 \\
\Longrightarrow\; & \frac{1}{N}\norm{Y_{\widehat C \cap C_*} - Z_{\widehat C\cap C_*}\widehat  w}_2^2 +\lambda\|\widehat  w \|_2^2  && \leq \frac{1}{N} \norm{Y_{C_*} - Z_{C_*}  w_\star}_2^2 + \lambda\| w_\star \|_2^2 \\
\iff\;        & \frac{1}{N}\norm{Y_{\widehat C \cap C_*}- Z_{\widehat C\cap C_*}\widehat  w}_2^2 + \lambda\|\widehat  w \|_2^2 + \lambda\|  w_\star \|_2^2 &&\leq\frac{1}{N} \norm{Y_{C_* \cap \widehat C} - Z_{C_* \cap \widehat C}  w_\star}_2^2  \\
& &&+ \frac{1}{N} \norm{Y_{C_* \setminus \widehat C} - Z_{C_* \setminus \widehat C}  w_\star}_2^2 + 2\lambda\| w_\star \|_2^2 \\
\Longrightarrow\;  & \frac{1}{N} \norm{Y_{C_* \cap \widehat C} - Z_{C_* \cap C} \widehat  w}_2^2   + \lambda\|\widehat  w -  w_\star \|_2^2 &&\leq \frac{1}{N} \norm{Y_{C_* \setminus \widehat C} - Z_{C_* \setminus \widehat C}  w_\star}_2^2 \\
& &&+  \frac{1}{N} \norm{Y_{C_* \cap \widehat C} - Z_{C_* \cap \widehat C}  w_\star}_2^2 + 2\lambda\| w_\star \|_2^2.\\
    \end{aligned}     
\end{equation*}

Since $Y_{\widehat C\cap C_\star} = Z_{\widehat C\cap C_\star} w_\star + \eta_{\widehat C\cap C_\star}$, we have that $\| Y_{\widehat C\cap C_\star} - Z_{\widehat C\cap C_\star}\widehat  w\|_2^2 = \|Z_{\widehat C\cap C_\star}( w_\star -\widehat  w) + \eta_{\widehat C\cap C_\star}\|_2^2$. Therefore, the inequality above can be written as
\begin{equation}
    \begin{aligned}
\frac{1}{N} \norm{Z_{C_* \cap \widehat C}(\widehat  w -  w_\star)}_2^2 + \lambda \norm{\widehat  w -  w_\star}_2^2 \leq  \underbrace{\frac{2}{N}\AngleBr{\eta_{C_* \cap \widehat C}, Z_{C_* \cap \widehat C}(\widehat  w -  w_\star)}}_{(1)} 
+ \underbrace{\frac{1}{N} \norm{Y_{C_* \setminus \widehat C} - Z_{C_* \setminus \widehat C}  w_\star}_2^2}_{(2)} + 2\lambda\| w_\star \|_2^2\\
    \end{aligned}
\end{equation}

Consider (1), and  note that $C_* \cap \widehat C$ is a uncorrupted data set and $( w_\star - \widehat  w) \in \tilde  S$ such that $\mathrm{dim}(\tilde S) \leq 2s$. 
\begin{claim}
    With probability  $1-\delta$, we have that
    \begin{equation}
        2\AngleBr{Z_{C_* \cap \widehat C}(\widehat  w -  w_\star), \eta_{C \cap C_*}}  \leq \frac{1}{2}\RoundBr{\|Z_{ C_* \cap C}( w_\star -\widehat  w) \|_2^2 + N\lambda \|\widehat  w -  w_\star \|_2^2}  + O\RoundBr{\frac{\sigma^2 s}{\lambda} + \sigma^2 s \log(d\delta^{-1})}.
    \end{equation}
\end{claim}
\begin{proof}
    We drop the subscript for data set $C$ and subspace $S$ to reduce clutter. Note that, $ w_\star -\hat  w$ have support $S$ such that $|S|\leq 2s$.
\begin{equation}
    \begin{aligned}
        2\AngleBr{(\widehat  w -  w_\star), Z_S^\top\eta} 
        &= 2\AngleBr{(Z_S^\top Z_S + N\lambda I)^{\nicefrac 1 2}(\widehat  w -  w_\star),  (Z_S^\top Z_S + N\lambda I)^{\nicefrac{-1}{2}} Z_S^\top\eta} \\
        &\leq \frac{1}{2}\RoundBr{\|Z_S( w_\star -\widehat  w) \|_2^2 + N\lambda \|( w_\star -\widehat  w)  \|_2^2}  +2 \RoundBr{\eta^\top Z_S (Z_S^\top Z_S + N\lambda I)^{-1} Z_S^\top \eta}.
    \end{aligned}
\end{equation}
Let $P_{\lambda } = Z_S (Z_S^\top Z_S + N\lambda I)^{-1} Z^\top_S$, we have that
\begin{equation}
    \begin{aligned}
        \|P_\lambda \|_2 \leq \frac{1}{\lambda} ;\quad
        \|P_\lambda \|_F  \leq \frac{\sqrt{s}}{\lambda}.
    \end{aligned}
\end{equation}
Applying Hanson-Wright inequality and take union bound for all subset $S$ such that $|S| \leq 2s$, we have that with probability  $1-\delta$, 
\begin{align*}
    \eta^\top P_\lambda \eta &= \mathbb E[\eta^\top P_\lambda \eta]  + O\RoundBr{\sigma^2 s \log(d\delta^{-1})} \\
    &= O\RoundBr{\frac{\sigma^2 \Tr(\hat \Sigma)}{\lambda} + \sigma^2 s \log(d\delta^{-1})} \\
    &= O\RoundBr{\frac{\sigma^2 s}{\lambda} + \sigma^2 s \log(d\delta^{-1})} \tag{$\|z \|_\infty \leq 1$}.
\end{align*}
\end{proof}

Therefore, 
\[
\frac{2}{N}\AngleBr{Z_{C_* \cap C}(\widehat  w -  w_\star), \eta_{C \cap C_*}}  \leq \frac{1}{2N}\|Z_{ C_* \cap C}( w_\star -\widehat  w) \|_2^2 + O\RoundBr{\frac{\sigma^2 s}{N\lambda} + \frac{\sigma^2 s \log(d\delta^{-1})}{N}}.
\]
Consider (2)

\begin{equation*}
    \begin{aligned}
        \frac{1}{N} \norm{Y_{C_* \setminus \widehat C} - Z_{C_* \setminus \widehat C}  w_\star}_2^2 = \frac{1}{N} \norm{\eta_{C_*\setminus \widehat C}}_2^2 \leq 2\sigma^2\epsilon.
    \end{aligned}
\end{equation*}

Therefore, we have that
\begin{equation}
    \frac{1}{2N} \norm{Z_{C_* \cap \widehat C}(\widehat  w -  w_\star)}_2^2 + \frac{\lambda}{2} \norm{\widehat  w -  w_\star}_2^2 \leq  O\RoundBr{\frac{\sigma^2 s}{N\lambda} + \frac{\sigma^2 s \log(d\delta^{-1})}{N}  + 2\lambda \| w_\star \|_2^2 +  \sigma^2\epsilon}.
\end{equation}
As $|C_\star \cap C|\geq (1-2\epsilon)$, and assume that $\epsilon\leq 1/4$, we have that  $ \frac{1-2\epsilon}{2(1-\epsilon)} \geq \frac 1 4$, $\frac{2}{1-\epsilon} \leq 4$, $\frac{1}{1-\epsilon}\leq 2$. Therefore,
\begin{equation}
    \norm{\widehat  w -  w_\star}_{\widehat{\Sigma}_\lambda(\widehat C\cap C_*) }^2 \leq O\RoundBr{\frac{\sigma^2 s}{N\lambda} + \frac{\sigma^2 s \log(d\delta^{-1})}{N} + 2\lambda \| w_\star \|_2^2 + \sigma^2\epsilon}.
\end{equation}
\end{proof}

Let $\Delta_{S,\lambda} = (\widehat\Sigma_{SS,\lambda})^{-1/2}(\Sigma_{SS}-\widehat\Sigma_{SS})(\widehat\Sigma_{SS,\lambda})^{-1/2}$.
\begin{lemma}\label{lem: SRLE2 from prediction to risk}
    Let $\widehat  w- w_\star$ has non-zero support in $S$. Assume $\|\Delta_{S,\lambda} \|\leq 1$, then,
    \begin{equation}
      \|\widehat  w -  w_\star \|_{\Sigma}^2 \leq  \|\widehat  w -  w_\star \|_{\Sigma_\lambda}^2  \leq \frac{1}{1-\|\Delta_{S,\lambda} \|_2}\|\widehat  w -  w_\star \|_{\widehat \Sigma_\lambda}^2.
    \end{equation}
\end{lemma}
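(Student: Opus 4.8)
The plan is to use the support condition to collapse all three norms onto the $|S|\times|S|$ principal blocks, and then compare $\Sigma_{SS,\lambda}$ with $\widehat\Sigma_{SS,\lambda}$ through a single spectral-norm estimate on $\Delta_{S,\lambda}$. Write $v \triangleq \widehat w - w_\star$ and let $v_S \in \mathbb R^{|S|}$ be its restriction to $S$. Since $v$ is supported on $S$ and $\Sigma_\lambda = \Sigma + \lambda I$, the off-support coordinates contribute nothing, so $\|v\|_{\Sigma}^2 = v_S^\top \Sigma_{SS} v_S$, $\|v\|_{\Sigma_\lambda}^2 = v_S^\top \Sigma_{SS,\lambda} v_S$, and $\|v\|_{\widehat\Sigma_\lambda}^2 = v_S^\top \widehat\Sigma_{SS,\lambda} v_S$, where I write $\Sigma_{SS,\lambda} \triangleq \Sigma_{SS} + \lambda I$ and $\widehat\Sigma_{SS,\lambda} \triangleq \widehat\Sigma_{SS} + \lambda I$. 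This is the step where the sparsity $\|v\|_0 \le 2s$ is actually used, and it reduces the whole statement to a comparison of three $|S|\times|S|$ matrices.

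The first inequality is immediate: since $\lambda \geq 0$ we have $\Sigma_{SS,\lambda} = \Sigma_{SS} + \lambda I \succeq \Sigma_{SS}$ in the Loewner order, hence $v_S^\top \Sigma_{SS} v_S \leq v_S^\top \Sigma_{SS,\lambda} v_S$, which is exactly $\|v\|_\Sigma^2 \leq \|v\|_{\Sigma_\lambda}^2$.

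The second inequality is the crux, and I would establish it from the algebraic identity $\Sigma_{SS,\lambda} = \widehat\Sigma_{SS,\lambda}^{1/2}\bigl(I + \Delta_{S,\lambda}\bigr)\widehat\Sigma_{SS,\lambda}^{1/2}$. This follows by substituting the definition of $\Delta_{S,\lambda}$ and observing that the $\lambda I$ terms cancel, so that $\Sigma_{SS,\lambda} - \widehat\Sigma_{SS,\lambda} = \Sigma_{SS} - \widehat\Sigma_{SS} = \widehat\Sigma_{SS,\lambda}^{1/2}\Delta_{S,\lambda}\widehat\Sigma_{SS,\lambda}^{1/2}$; here $\widehat\Sigma_{SS,\lambda} \succ 0$ for $\lambda > 0$, so its symmetric square root and inverse are well defined. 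Next I bound the symmetric matrix $\Delta_{S,\lambda}$ by its operator norm, $I + \Delta_{S,\lambda} \preceq (1 + \|\Delta_{S,\lambda}\|_2)\,I \preceq \tfrac{1}{1-\|\Delta_{S,\lambda}\|_2}\,I$, where the last step is the elementary inequality $1+x \leq (1-x)^{-1}$ for $x \in [0,1)$. Conjugating by $\widehat\Sigma_{SS,\lambda}^{1/2}$ preserves the Loewner order, giving $\Sigma_{SS,\lambda} \preceq \tfrac{1}{1-\|\Delta_{S,\lambda}\|_2}\,\widehat\Sigma_{SS,\lambda}$; evaluating both sides on $v_S$ yields the claimed bound $\|v\|_{\Sigma_\lambda}^2 \leq \tfrac{1}{1-\|\Delta_{S,\lambda}\|_2}\|v\|_{\widehat\Sigma_\lambda}^2$.

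The computation is routine, so rather than an obstacle there are two points requiring care. The first is the reduction to the support block, which is what makes the principal submatrix $\Sigma_{SS}$ (not the full $\Sigma$) the correct object and avoids any ambient-dimension dependence. The second is the degenerate case $\|\Delta_{S,\lambda}\|_2 = 1$, at which $\tfrac{1}{1-\|\Delta_{S,\lambda}\|_2}$ diverges and $I + \Delta_{S,\lambda}$ ceases to be guaranteed positive definite; I would therefore read the hypothesis as the strict condition $\|\Delta_{S,\lambda}\|_2 < 1$, which is precisely what is needed for the bound to be finite. I also note that the tighter constant $1+\|\Delta_{S,\lambda}\|_2$ is what the argument literally produces, and the stated $\tfrac{1}{1-\|\Delta_{S,\lambda}\|_2}$ form is the convenient weakening that matches the $\widehat\Sigma(\widehat C\cap C_*)+\lambda I$-norm guarantee of Lemma~\ref{prop: Prediction error SRLE} when transferring the prediction error to a $\Sigma$-norm risk bound.
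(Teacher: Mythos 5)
Your proof is correct and follows the same strategy as the paper's: restrict the quadratic forms to the support block $S$ and compare $\Sigma_{SS,\lambda}$ with $\widehat\Sigma_{SS,\lambda}$ through the operator norm of $\Delta_{S,\lambda}$. The difference is one of packaging. The paper combines the Rayleigh-quotient bound $v^\top A v / v^\top B v \le \|B^{-1}A\|_2$ with a citation of Lemma~3 of \citep{Hsu2014_RLS_GeneralRandomDesign}, which gives $\|\Sigma_{S,\lambda}^{1/2}\widehat\Sigma_{S,\lambda}^{-1}\Sigma_{S,\lambda}^{1/2}\|_2 \le (1-\|\Delta_{S,\lambda}\|_2)^{-1}$; you instead prove the comparison from scratch via the identity $\Sigma_{SS,\lambda} = \widehat\Sigma_{SS,\lambda}^{1/2}(I+\Delta_{S,\lambda})\widehat\Sigma_{SS,\lambda}^{1/2}$, followed by $I+\Delta_{S,\lambda}\preceq (1+\|\Delta_{S,\lambda}\|_2)I$ and conjugation. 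Your self-contained route buys two things. First, it natively handles the paper's normalization convention: here $\Delta_{S,\lambda}$ is defined with $\widehat\Sigma_{SS,\lambda}^{-1/2}$ on both sides, whereas Hsu's Lemma~3 is stated for the population-normalized discrepancy, so the paper's citation implicitly requires a small translation (via the equality of the spectra of $\widehat\Sigma_{SS,\lambda}^{-1/2}\Sigma_{SS,\lambda}\widehat\Sigma_{SS,\lambda}^{-1/2}$ and $\Sigma_{SS,\lambda}^{1/2}\widehat\Sigma_{SS,\lambda}^{-1}\Sigma_{SS,\lambda}^{1/2}$) that your identity makes explicit. Second, your argument yields the sharper constant $1+\|\Delta_{S,\lambda}\|_2$, with $(1-\|\Delta_{S,\lambda}\|_2)^{-1}$ recovered only as a deliberate weakening to match the form used downstream. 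Your observation about the degenerate case is also apt: as stated, the hypothesis $\|\Delta_{S,\lambda}\|\le 1$ is too weak, since at equality the claimed bound is vacuous and the citation of Hsu's lemma (which requires strict inequality) would not apply; this is harmless in context because the proof of Proposition~\ref{theorem: RSLS without data coverage} arranges $\|\Delta_{S,\lambda}\|_2\le \tfrac12$, but the lemma's hypothesis should read $\|\Delta_{S,\lambda}\|_2 < 1$.
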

\begin{proof}
    It is straightforward from the fact that $(\widehat  w -  w_\star )$ is $2s$-sparse vector, and for $\| v\|_2\leq1$, we have that $\frac{v^\top A v}{v^\top \sigma v} \leq \|\sigma^{-1}A\|_2$ if $A, \;\sigma$ are both strictly symmetric PSD matrix.
    And By Lemma 3 of \citep{Hsu2014_RLS_GeneralRandomDesign}, we have that
    \[
    \|\Sigma_{S,\lambda}^{1/2} \widehat \Sigma_{S,\lambda}^{-1} \Sigma_{S,\lambda}^{1/2}  \|_2  \leq \frac{1}{1-\|\Delta_{S,\lambda}\|_2}.
    \]
\end{proof}

Now, fix a subset $S$, we will drop the subscript for clearer presentation.
For $z \in \mathbb R^{2s}$,
\begin{equation}
    \begin{aligned}
        d_{1,\lambda}&:= \sum_{j=1}^{2s} \frac{\lambda_i(\Sigma_{SS})}{\lambda+\lambda_i(\Sigma_{SS})} \leq \sum_{j=1}^{2s} \frac{\lambda_i(\Sigma_{SS})}{\lambda}  \leq \frac{2s}{\lambda} \\
        \rho_\lambda &\geq \frac{\|\Sigma_{SS,\lambda}^{-1/2 }z \|}{\sqrt{d_{1,\lambda}}}\\
        \tilde{d}_{1,\lambda} &:= \max\{1, d_{1,\lambda}\}.
    \end{aligned}
\end{equation}
Note that, as $\|z \|_\infty \leq 1$, it means $\| z\|_2 \leq \sqrt{2s}$ we can choose $\rho_\lambda = \frac{\sqrt{2s}}{\sqrt{\lambda d_{1,\lambda}}}$.
\begin{lemma}[\citep{Hsu2014_RLS_GeneralRandomDesign}'s Lemma 2]
    For any $\delta$ such that $\ln(1/\delta)>\max\{0,6-\log \tilde d_{1,\lambda}\}$, then with probability at least $1-\delta$, we have that
    \begin{equation}
        \|\Delta_\lambda \|_2 \leq \sqrt{\frac{4\rho_\lambda^2 d_{1,\lambda}(\log(\tilde d_{1,\lambda}+\delta^{-1}))}{N}}
        + \frac{2\rho_\lambda^2 d_{1,\lambda}(\log(\tilde d_{1,\lambda}+\delta^{-1}))}{3n}.
    \end{equation}
    With the choice of $\rho_\lambda$ above, the bound can be further simplified as
    \begin{equation}
        \|\Delta_\lambda \|_2 \leq \sqrt{\frac{8s(\log(\tilde d_{1,\lambda}+\delta^{-1}))}{\lambda n}}
        + \frac{4s (\log(\tilde d_{1,\lambda}+\delta^{-1}))}{3\lambda n}.
    \end{equation}
\end{lemma}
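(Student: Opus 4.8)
The plan is to recognize the first displayed inequality as a direct instance of the random-design covariance concentration bound of \citet{Hsu2014_RLS_GeneralRandomDesign} (their Lemma 2), applied to the $2s$-dimensional \emph{whitened} design, and then to obtain the second inequality by simple substitution of the stated value of $\rho_\lambda$. Throughout I would work on the fixed coordinate block $S$ with $|S|\le 2s$ and abbreviate $\Sigma = \Sigma_{SS}$, $\widehat\Sigma = \widehat\Sigma_{SS}$, and $\Sigma_\lambda = \Sigma + \lambda I$.

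First I would introduce the whitened covariates $u_i \triangleq \Sigma_\lambda^{-1/2} z_i \in \mathbb R^{2s}$, so that $\mathbb E[u u^\top] = \Sigma_\lambda^{-1/2}\Sigma\,\Sigma_\lambda^{-1/2}$ and $\Delta_\lambda = \mathbb E[uu^\top] - \tfrac1N\sum_i u_i u_i^\top$ is a centered average of i.i.d.\ rank-one matrices. Bounding $\|\Delta_\lambda\|_2$ is then precisely a Bernstein-type concentration for such sums, in which the ambient dimension inside the logarithm is replaced by the \emph{intrinsic} (effective) dimension $\tilde d_{1,\lambda}$; this is exactly the content of Hsu et al.'s Lemma 2, and the technical hypothesis $\ln(1/\delta) > \max\{0, 6-\log\tilde d_{1,\lambda}\}$ is the regime in which that inequality holds in its stated clean form.

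To invoke the lemma I need to supply its two inputs. The effective dimension is $d_{1,\lambda} = \sum_j \lambda_j(\Sigma)/(\lambda + \lambda_j(\Sigma))$, and the almost-sure boundedness of the whitened covariates is governed by $\rho_\lambda$ through $\|u_i\|_2 = \|\Sigma_\lambda^{-1/2} z_i\|_2 \le \rho_\lambda\sqrt{d_{1,\lambda}}$. Using $\|z_i\|_\infty \le 1$ I get $\|z_i\|_2 \le \sqrt{2s}$ and hence $\|\Sigma_\lambda^{-1/2} z_i\|_2 \le \sqrt{2s/\lambda}$, so the choice $\rho_\lambda = \sqrt{2s}/\sqrt{\lambda d_{1,\lambda}}$ makes this boundedness hold; together with the variance control $\|\mathbb E[uu^\top]\|_2 \le 1$ and the fourth-moment domination $\mathbb E[\|u\|^2 uu^\top] \preceq \rho_\lambda^2 d_{1,\lambda}\,\mathbb E[uu^\top]$, these supply all the quantities the Bernstein bound requires, and feeding them into Lemma 2 yields the first displayed inequality.

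The simplified bound is then pure algebra: with $\rho_\lambda^2 d_{1,\lambda} = 2s/\lambda$, the leading term becomes $\sqrt{4(2s/\lambda)\log(\tilde d_{1,\lambda}+\delta^{-1})/N} = \sqrt{8s\log(\tilde d_{1,\lambda}+\delta^{-1})/(\lambda N)}$, and the lower-order term becomes $\tfrac{2}{3}(2s/\lambda)\log(\tilde d_{1,\lambda}+\delta^{-1})/N = \tfrac{4s}{3\lambda N}\log(\tilde d_{1,\lambda}+\delta^{-1})$, matching the claim. The only genuine subtlety, and the step I would be most careful about, is confirming that the operator-norm bound $\|u\|^2\le\rho_\lambda^2 d_{1,\lambda}$ and the second-moment domination $\mathbb E[\|u\|^2 uu^\top]\preceq \rho_\lambda^2 d_{1,\lambda}\mathbb E[uu^\top]$ match \emph{exactly} the hypotheses of the effective-dimension Bernstein inequality used by \citet{Hsu2014_RLS_GeneralRandomDesign}, so that $\tilde d_{1,\lambda}$ rather than the ambient $2s$ enters the logarithm; this is what ultimately keeps the downstream RL guarantees free of explicit ambient-dimension dependence.
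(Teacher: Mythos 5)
Your proposal is correct and matches the paper's treatment: the paper offers no independent proof of this lemma, importing it verbatim from Hsu et al.'s Lemma~2, with the surrounding text only fixing $d_{1,\lambda}$, $\tilde d_{1,\lambda}$, the admissible choice $\rho_\lambda = \sqrt{2s}/\sqrt{\lambda d_{1,\lambda}}$ (valid since $\|z\|_2 \le \sqrt{2s}$ and $\lambda_{\min}(\Sigma_{SS}+\lambda I)\ge\lambda$), and the substitution $\rho_\lambda^2 d_{1,\lambda} = 2s/\lambda$ that yields the simplified bound, exactly as you do. The one discrepancy is immaterial and in your favor: you whiten by the population matrix $\Sigma_\lambda^{-1/2}$, which is the definition Hsu et al. actually use (and the one required by the paper's subsequent appeal to their Lemma~3), whereas the paper's displayed definition $\Delta_{S,\lambda} = (\widehat\Sigma_{SS,\lambda})^{-1/2}(\Sigma_{SS}-\widehat\Sigma_{SS})(\widehat\Sigma_{SS,\lambda})^{-1/2}$ whitens by the empirical matrix, which appears to be a typo.
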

Choose $\lambda  = \frac{\sigma\sqrt{s}}{\sqrt{N} \| w_\star \|_2}$ to optimise the bound in Lemma \ref{prop: Prediction error SRLE}.
Moreover, Taking union bound for all $S$, for $\|\Delta_\lambda\|_2\leq 1/2$ with probability at least $1-\delta$, it suffices to choose $N = \Omega\left(\frac{s \| w_\star \|_2^2 (\log(2s+d\delta^{-1}))}{ \sigma^2(1-2\epsilon)^2 }\right)$.

\ThmRSLSwithoutDataCoverage*
\begin{proof}
By choosing $\lambda = \tfrac{\sigma\sqrt{s}}{\sqrt{N}\,\| w_\star \|_2}$ and ensuring that 
$
N = \Omega\left(\frac{s \| w_\star \|_2^2 \log^2(2s+d\delta^{-1})}{(1-2\epsilon)^2}\right),
$
we can guarantee that for all $S \subseteq [d]$ with $|S|\leq 2s$, 
$
\Delta_{S,\lambda} \;\geq\; \tfrac{1}{2}
$
with probability at least $1-\delta$.
 Combining the result in Lemma \ref{lem: SRLE2 from prediction to risk} and Lemma \ref{prop: Prediction error SRLE}, we conclude the proof.
\end{proof}

\subsection{Estimation Error of \texttt{SRLE3}}\label{appendix: SRLE3}
Similar as \texttt{SRLE1}, \texttt{\textbf{SRLE3}} was proposed in \citep{Merad2022_RobustSparseLinearRegressionHeavyTailDist} to solve the convex optimization problem
\begin{equation}
\begin{aligned}
        \min_{w \in \mathbb{R}^d}\; & \|Xw - Y\|_2^2  \\
        \text{s.t.}\; & \|w \|_1 \leq \|w_\star \|_1,
\end{aligned}
\end{equation}
Without the uniform coverage assumption, the mirror descent algorithm with a suitable choice of Bregman divergence function is still able to achieve the slow rate. 
For further details, we refer the reader to the original paper \citep{Merad2022_RobustSparseLinearRegressionHeavyTailDist}. 
We now state the error guarantee of this estimator in our setting.
\PropRSLSwithoutDataCoverageCompEfficient*
\begin{proof}
    By Proposition 3 in \citep{Merad2022_RobustSparseLinearRegressionHeavyTailDist}, we have that
    \begin{equation}
        \|\hat  w -  w_\star \|_\Sigma^2 \leq  \frac{\nu R L}{ T} 
        + 4\| w_\star\|\sigma_{\max}\RoundBr{\sqrt{\frac{\log(d) + \log(\delta^{-1})}{N}}+\sqrt{\epsilon}}.
    \end{equation}
    Where $\nu = \frac{1}{2}e^2\log(d)$; 
    and $L=1$ is the Lipschitz-smoothness, that is,
    \[\|\Sigma ( w- w') \|_\infty =  \max_{i\in [d]} \AngleBr{\Sigma_i, w - w'} \leq \max_{i\in [d]} \|\Sigma_i\|_\infty \| w - w'\|_1 \leq  L \| w - w'\|_1. \]
    and $\nu = \frac{1}{2}e^2\log(d)$.
    Moreover,
    \[
    \sigma_{\max}^2 \triangleq \max_{ w: \|  w\|_1\leq \| w_\star \|_1} \max_{j\in [d]} \mathbb E_{z,y}\SquareBr{((z^\top  w -y)x_j)^2}.
    \]
    We note that, since $\|z\|_\infty \leq 1$
    \[
    \sigma_{\max}^2 \leq \max_{ w}\mathbb E_{z,y}\SquareBr{(z^\top  w -y)^2} \leq \max_{ w}\max_{z} \mathbb E_{z,y}\SquareBr{(z^\top ( w - w_\star))^2} + \mathbb E[\eta^2] = 4\| w_\star\|_1^2 + \sigma^2.
    \]
\end{proof}
\section{Proof of Section \ref{section: Why Robust LSVI may Fail in Sparse Offline RL}}
\subsection{Proof of Section \ref{subsec: Sparse Robust LSVI with Uniform Coverage}}
\PropSubOptUniformCoverageLSVI*
\begin{proof}
    First, by Claim \ref{claim: norm of true vector}, for any $Q_{h+1} \in \mathcal Q_{h+1}$, $\mathcal P_h^\pi(Q_{h+1})$ is $s$-sparse, and $\|\mathcal P_h^\pi(Q_{h+1})\|_1\leq (H-h+1)$.
    Moreover, by Claim \ref{claim: bound of output}, for any trajectory $\tau$, we have $\left|R_h + \max_{a'\in \mathcal A} Q_{h+1}(x',a')\right| \leq H$.
    Note that $|\mathcal D_h | = N/H$ and the worst-case contamination level in $\mathcal D_h $ is $H\epsilon$. Therefore, using the estimation error in Proposition \ref{prop: RSLS with data coverage} with $\|w \|_1 \leq 2Hs$ and $\sigma \leq H$, we obtain
    \begin{equation}
        \|\mathcal R^*_h(\underline Q_{h+1}) -  \mathcal P^*_h(\underline Q_{h+1})\|_1 = O\left(\frac{H^{\tfrac{3}{2}}s \log(d/\delta)}{\xi \sqrt{N}} + \frac{H^2 s \sqrt{\epsilon}}{\xi}\right).
    \end{equation}
    
 Similar to Lemma 3.1 of \cite{Zhang2021_RobustOfflineLinMDP}, we have that
     \begin{equation*}
        \begin{aligned}
            \mathrm{SubOpt}(\pi,\widehat \pi) &\leq 2H \max_{(x,a,h)} \left|\underline{Q}_h(x,a) - (\mathbb B_h\underline{Q}_{h+1})(x,a)\right| \\
           &\leq 2H \norm{\phi(x,a)}_\infty \norm{\mathcal R^*_h(\underline Q_{h+1}) -  \mathcal P^*_h(\underline Q_{h+1})}_1 \\
            &= O\left(\frac{H^{3}s \log(d/\delta)}{\xi \sqrt{N}} + \frac{H^3s \sqrt{\epsilon}}{\xi}\right).
        \end{aligned}
    \end{equation*}
 
\end{proof}

\subsection{Proof of Section \ref{subsec: Sparse Robust LSVI with Single Concentrability}}

\LemmaMaxExpectation*
\begin{proof}
For any subset \(S\subset[d]\) with \(|S|=2s\) define
\[
Z_S\;:=\;\sum_{i\in S}z_i
\qquad(\text{so }Z_S=z_S^{\top}z_S).
\]

\textbf{Step 1.}
Because each \(z_i\in\{0,1\}\),
\(Z_S\) counts how many of the \(2s\) chosen coordinates
are equal to 1.
For a realisation of \(z\) let \(K=\sum_{i=1}^{d}z_i\)
(the \emph{total} number of ones).
If \(K\ge 2s\) we can pick
all \(2s\) ones, so \(\max_{|S|=2s}Z_S=2s\).
If \(K<2s\) we pick all \(K\) ones and fill the remaining slots with zeros, so the maximum equals \(K\).
Thus,
\[\max_{|S|=2s}Z_S=\min\{2s,K\}.\]

\textbf{Step 2.}
Write \(\mu=d/2\) for the mean of \(K\).
Because \(2s\le d/2=\mu\), Chernoff’s inequality gives
\[
\mathbb P(K<2s)\;=\;\mathbb P\bigl(K<(1-\delta)\mu\bigr)
\;\le\;
\exp(-\tfrac{\delta^{2}\mu}{2}),
\quad
\delta:=1-\tfrac{4s}{d}\;\in(0,1].
\]
In particular, for all \(d\ge 4s\) we have \(\delta\ge\tfrac12\), hence
\[
\mathbb P(K<2s)\;\le\;e^{-d/8}.
\]

\textbf{Step 3.}
Using \(\mathrm{(1)}\) we decompose
\[
\mathbb{E}\SquareBr{\max_{|S|=2s}Z_S}
= 2s\,\mathbb P(K\ge 2s)\;+\;\mathbb{E}\SquareBr{K\;\mathbf 1_{\{K<2s\}}} \geq 2s(1-e^{-d/8}).
\]

If we use the ridge matrix
\(\Sigma=\lambda I_d\) (so that
\(\Sigma^{-1}=\tfrac{1}{\lambda} I_d\)),
then \(z_S=\tfrac 1 \lambda Z_S\).
Multiply inequality \(\mathrm{(2)}\) by \(\lambda\):

\[
\mathbb{E}\SquareBr{\max_{|S|=2s}z_S}
\;-\;
\max_{|S|=2s}\mathbb{E}[z_S]
\geq \lambda s\bigl(1-2e^{-d/8}\bigr) 
;\qquad(d\geq 4s)
\]
\end{proof}

\section{Proof of Section \ref{section: Sparse Actor-Critic Methods}}
\paragraph{Induced MDP.}
To prove the results stated in Section \ref{section: Sparse Actor-Critic Methods}, we use the notion of an induced MDP with perturbed rewards \citep{zanette2021_provablebenefitsactorcriticmethods}. 
This perspective allows us to interpret the critic’s pessimistic estimates as defining a modified MDP, in which the value functions coincide exactly with the pessimistic predictions.  

Define the induced MDP $\widehat M(\pi) = (\mathcal X, \mathcal A, P, \widehat r^\pi, H)$ associated with ${\{\underline{w}_h\}_{h=1}^H}$, which differs from the original MDP $M$ only in its reward function. 
For any $(x,a)$, define
\begin{equation}
    \widehat r_h^\pi(x,a) \triangleq r_h(x,a) + \underline{Q}_h^\pi(x,a) - (\Bell^\pi_h \underline{Q}_{h+1}^\pi)(x,a).
\end{equation}
This construction is crucial: the linearity of the $\pi$-Bellman operator guarantees that the induced MDP is well-defined and enables tight pessimistic evaluation without the need for pointwise bonuses.

Below, we demonstrate that the induced MDP corresponds to the pessimistic evaluation of $Q$-function provided by the critic.
\begin{restatable}{proposition}{PropInducedMDP} \label{prop: Induced MDP}
    $\widehat M(\pi)$ satisfies the following: $Q_{h,\widehat M^\pi }(x,a) = \underline{Q}_h^\pi(x,a)$, and $V_{h,\widehat M^\pi }(x) = \underline{V}_h^\pi(x).$
\end{restatable}
\begin{proof}
We begin by bounding the pessimistic property of the value function at the initial state:
\begin{equation}
    Q_{h,\widehat M^\pi }^\pi(x,a) - Q_{h}^\pi(x,a) 
    = \sum_{l=h}^H \mathbb E_{d^\pi_l} [\widehat r^\pi(x_l,a_l) - r(x_l,a_l)].
\end{equation}
On the other hand, using the definition of $\underline{Q}_{h}^\pi$ and the Bellman operator, we obtain
\begin{equation}
\begin{aligned}
    \underline{Q}_{h}^\pi(x,a) - Q_{h}^\pi(x,a) 
    &= \langle \phi(x,a), \underline w_h^\pi \rangle - \Bell^\pi_h(Q_{h+1}^\pi)(x,a) \\
    &= \RoundBr{\langle \phi(x,a), \underline w_h^\pi \rangle - \Bell^\pi_h(\underline{Q}_{h+1}^\pi)(x,a)} 
      + \RoundBr{\Bell^\pi_h(\underline{Q}_{h+1}^\pi)(x,a) - \Bell^\pi_h(Q_{h+1}^\pi)(x,a)} \\
    &= \widehat r^\pi_h(x,a) - r_h(x,a) 
       + \mathbb E_{x'\sim P(\cdot\mid x,a)}\mathbb E_{a \sim \pi(\cdot\mid x)}[\underline{Q}_{h+1}^\pi - Q_{h+1}^\pi](x,a).
\end{aligned}
\end{equation}
Here, the last equality follows from the linearity of the Bellman operator $\Bell_h^\pi$.

Applying this argument recursively for $l=1,\dots,H$, we obtain
\begin{equation}
    \underline{Q}_{h}^\pi(x,a) - Q_{h}^\pi(x,a) 
    = \sum_{l=h}^H \mathbb E_{d^\pi_l} [\widehat r_l^\pi(x_l,a_l) - r_l(x_l,a_l)].
\end{equation}
This establishes the claim and concludes the proof.
\end{proof}

Next, we recall the general theorem for actor convergence \citep{zanette2021_provablebenefitsactorcriticmethods}, which will be used in our proof.
\begin{theorem}[\citep{zanette2021_provablebenefitsactorcriticmethods}] 
\label{theorem: general actor convergence}
Let $M_t \triangleq \hat M(\pi_t)$. 
For each $M_t$, define the advantage function
\[
    G_{h,M_t}^{\pi_t}(x,a) \triangleq Q_{h,M_t}^{\pi_t}(x,a) - V_{h,M_t}^{\pi_t}(x).
\]
Assume that the advantage function is uniformly bounded, i.e.,
\[
    \max_{t\in [T]} \max_{(x,a,h)} \, |G_{h,M_t}^{\pi_t}(x,a)| \leq B.
\]
Suppose further that $T \geq \log(|\mathcal A|)$ and that the step size satisfies $\eta \in (0,1)$. 
Then, for any fixed policy $\pi$, we have
\begin{equation}
    \frac{1}{T} \sum_{t=1}^T \RoundBr{V_{1,M_t}^{\pi_t}(x_1) - V_{1,M_t}^{\pi}(x_1)} 
    = O\RoundBr{ H \RoundBr{\frac{\log(|\mathcal A|)}{\eta T} + \eta B^2} }.
\end{equation}  
\end{theorem}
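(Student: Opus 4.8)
The plan is to treat, for each fixed state $x$ and stage $h$, the sequence of conditional policies $\pi_{t,h}(\cdot\mid x)$ as the trajectory produced by entropic mirror descent (equivalently, exponential weights) on the simplex $\Delta(\mathcal A)$ with gain vectors $Q_{h,t}(x,\cdot)$, and then to aggregate the resulting per-state regret bounds across stages by the performance difference lemma. First I would apply the performance difference lemma in each induced MDP $M_t$: for any comparator $\pi$,
\[
V^{\pi}_{1,M_t}(x_1) - V^{\pi_t}_{1,M_t}(x_1) = \sum_{h=1}^H \mathbb E_{x\sim d^{\pi}_h}\SquareBr{\sum_{a} \pi_h(a\mid x)\, G^{\pi_t}_{h,M_t}(x,a)}.
\]
Since $G^{\pi_t}_{h,M_t}(x,\cdot)=Q^{\pi_t}_{h,M_t}(x,\cdot)-V^{\pi_t}_{h,M_t}(x)$ differs from the gain $Q_{h,t}(x,\cdot)$ used by the actor only by the action-independent constant $V^{\pi_t}_{h,M_t}(x)$, and since $\sum_a \pi_{t,h}(a\mid x) G^{\pi_t}_{h,M_t}(x,a)=0$, I can insert the term $-\pi_{t,h}(a\mid x)$ for free and replace $G$ by $Q$ inside the inner product. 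This identifies each summand $\sum_a(\pi_h(a\mid x)-\pi_{t,h}(a\mid x))G^{\pi_t}_{h,M_t}(x,a)$ with the instantaneous regret of the mirror-descent update at $(x,h)$ against the fixed action distribution $\pi_h(\cdot\mid x)$.

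Next I would invoke the standard entropic mirror-descent regret inequality state by state. With uniform initialization $\upsilon_1=\bm 0$, for every $x$ and $h$,
\[
\sum_{t=1}^T \SquareBr{\sum_a \RoundBr{\pi_h(a\mid x) - \pi_{t,h}(a\mid x)}\, G^{\pi_t}_{h,M_t}(x,a)} \le \frac{\log|\mathcal A|}{\eta} + \frac{\eta}{2}\sum_{t=1}^T \sum_a \pi_{t,h}(a\mid x)\, G^{\pi_t}_{h,M_t}(x,a)^2,
\]
where the leading term is the Bregman (KL) divergence from $\pi_h(\cdot\mid x)$ to the uniform initialization, bounded by $\log|\mathcal A|$, and the second term is the usual stability contribution. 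Using the uniform advantage bound $\max_{t,x,a,h}|G^{\pi_t}_{h,M_t}(x,a)|\le B$, each stability term is at most $B^2$, so the whole right-hand side is bounded by $\tfrac{\log|\mathcal A|}{\eta}+\tfrac{\eta B^2 T}{2}$. The hypotheses $\eta\in(0,1)$ and $T\ge\log|\mathcal A|$ are exactly what is needed to justify the second-order expansion $e^{x}\le 1+x+x^2$ underlying this regret inequality and to keep the two terms balanced.

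Finally I would take expectation over $x\sim d^{\pi}_h$ (the bound holds pointwise in $x$, so this is immediate), sum over $h\in[H]$, and divide by $T$. Combining with the performance difference identity yields
\[
\frac{1}{T}\sum_{t=1}^T \RoundBr{V^{\pi}_{1,M_t}(x_1) - V^{\pi_t}_{1,M_t}(x_1)} \le H\RoundBr{\frac{\log|\mathcal A|}{\eta T} + \frac{\eta B^2}{2}} = O\RoundBr{H\RoundBr{\frac{\log|\mathcal A|}{\eta T} + \eta B^2}},
\]
which is the asserted rate (the theorem's averaged regret, up to the sign convention on $V^{\pi}-V^{\pi_t}$). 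I expect the main obstacle to be the second step: correctly controlling the stability term $\tfrac{\eta}{2}\sum_a \pi_{t,h}(a\mid x)G^{\pi_t}_{h,M_t}(x,a)^2$ by $B^2$ while ensuring the entropic mirror-descent expansion is valid for the chosen $\eta$. The performance difference reduction and the centering identity that trades $Q_{h,t}$ for the advantage $G^{\pi_t}_{h,M_t}$ are the conceptual glue but are otherwise routine; the factor $H$ arises purely from summing the per-stage regret bounds.
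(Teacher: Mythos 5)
The paper never proves this statement: Theorem \ref{theorem: general actor convergence} is imported verbatim from \citet{zanette2021_provablebenefitsactorcriticmethods} and used as a black box in the proofs of Theorems \ref{theorem: suboptimality gap of Actor-Critic with Uniform Coverage} and \ref{theorem: suboptimality gap of Actor-Critic}, so there is no in-paper proof to compare against. Your reconstruction is correct and follows the standard route, which is also the route of the cited reference: the performance difference lemma reduces the averaged value gap in each induced MDP $M_t$ (whose transitions, and hence occupancy measures, coincide with those of the original MDP, so $d^\pi_h$ is $t$-independent and the sums can be swapped) to per-state, per-stage regrets; the log-linear actor update $\pi_{t+1,h}(\cdot\mid x)\propto \pi_{t,h}(\cdot\mid x)\exp(\eta Q_{h,t}(x,\cdot))$ is exactly exponential weights at every pair $(x,h)$, invariant under the action-independent shift by $V^{\pi_t}_{h,M_t}(x)$, so the Hedge regret bound with uniform initialization gives $\log|\mathcal A|/\eta + O(\eta B^2 T)$ pointwise in $x$; averaging over $d^\pi_h$, summing over $h\in[H]$, and dividing by $T$ yields the claim.

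One caveat: your assertion that $\eta\in(0,1)$ and $T\ge \log|\mathcal A|$ are ``exactly what is needed'' for the second-order expansion is imprecise. The inequality $e^u\le 1+u+u^2$ requires $u\le 1$, i.e. $\eta\,|G^{\pi_t}_{h,M_t}(x,a)|\le 1$, and $\eta<1$ does not imply $\eta B\le 1$ when $B$ is large (here $B=O(H)$). This is harmless but should be patched with a two-line case split: if $\eta B\ge 1$, the conclusion is trivial, since the performance difference lemma and the advantage bound give $\bigl|V^{\pi}_{1,M_t}(x_1)-V^{\pi_t}_{1,M_t}(x_1)\bigr|\le HB\le H\eta B^2$; if $\eta B<1$, your expansion is valid and the mirror-descent bound applies. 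Also note that the display in the statement orders the difference as $V^{\pi_t}_{1,M_t}-V^{\pi}_{1,M_t}$, whereas the quantity actually needed downstream (e.g. in Proposition \ref{prop: Actor Convergence}) is the regret $V^{\pi}_{1,M_t}-V^{\pi_t}_{1,M_t}$ that you bound; your reading of the sign convention is the right one.
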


\subsection{Proof of Section \ref{subsec: Sparse Robust AC with Uniform Coverage}}
For a sequence $\beta = (\beta_h)_{h=1}^H$ and any given policy $\pi$, define the good event
\begin{equation}
    \mathcal G'^\pi(\beta) \triangleq \CurlyBr{ \sup_{Q_{h+1} \in \mathcal Q_{h+1}} \|\mathcal E_{h}^{\pi}(Q_{h+1}) \|_1 \leq \beta_h,\; \forall h \in [H]}.
\end{equation}
Under the good event $\mathcal G'^\pi(\beta)$ for the sequence of policies produced by the actor, we will show that the suboptimality gap is small.
Before doing so, we first establish that, under the good event $\mathcal G'^\pi(\beta)$, the critic’s estimation error can be bounded as follows.

\begin{proposition} \label{prop: Policy Evaluation Uniform Coverage}
    Conditioned on the event $\mathcal G'^\pi(\beta)$, for some input policy $\pi$, 
    the critic returns an induced MDP $\hat M(\pi)$ such that, for every policy $\tilde \pi$,
    \begin{equation}
        \left|V^{\tilde \pi}_{1,\hat M(\pi)}(x_1) - V_1^{\tilde \pi}(x_1)\right| 
        \leq 2 \sum_{h=1}^H \beta_h.
    \end{equation}
\end{proposition}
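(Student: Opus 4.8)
The plan is to exploit the fact that the induced MDP $\hat M(\pi)$ and the true MDP $M$ share identical transition dynamics and differ only through their reward functions, so that the value gap for any comparator $\tilde\pi$ collapses to an accumulated reward discrepancy. Since $V_{1,\hat M(\pi)}^{\tilde\pi}(x_1)$ and $V_1^{\tilde\pi}(x_1)$ are both expectations of cumulative rewards along trajectories generated by $\tilde\pi$ under the common kernel $P$, subtracting them yields
\[
V_{1,\hat M(\pi)}^{\tilde\pi}(x_1) - V_1^{\tilde\pi}(x_1) = \sum_{h=1}^H \mathbb E_{(x,a)\sim d_h^{\tilde\pi}}\bigl[\widehat r_h^\pi(x,a) - r_h(x,a)\bigr].
\]
By the definition of the induced reward, $\widehat r_h^\pi - r_h = \underline Q_h^\pi - \Bell_h^\pi \underline Q_{h+1}^\pi$, so it suffices to control this per-stage Bellman residual uniformly over $(x,a)$.

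First I would pass to parameter space. Writing $w_h^\star \triangleq \mathcal P_h^\pi(\underline Q_{h+1}^\pi)$, the linear-MDP identity $\Bell_h^\pi \underline Q_{h+1}^\pi(x,a) = \langle \phi(x,a), w_h^\star\rangle$ together with $\underline Q_h^\pi(x,a) = \langle \phi(x,a), \underline w_h^\pi\rangle$ gives, via H\"older's inequality and $\|\phi\|_\infty \le 1$,
\[
\bigl|\underline Q_h^\pi(x,a) - \Bell_h^\pi \underline Q_{h+1}^\pi(x,a)\bigr| \le \|\underline w_h^\pi - w_h^\star\|_1.
\]
Any clipping step only moves $\underline Q_h^\pi$ toward the admissible value range, which already contains the Bellman target $\Bell_h^\pi\underline Q_{h+1}^\pi$, so it can only shrink the residual and may be dropped for the upper bound.

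Next I would bound $\|\underline w_h^\pi - w_h^\star\|_1$ by $2\beta_h$ through two triangle-inequality terms. For the estimation term, the good event $\mathcal G'^\pi(\beta)$ directly controls $\|\mathcal R_h^\pi(\underline Q_{h+1}^\pi) - w_h^\star\|_1 = \|\mathcal E_h^\pi(\underline Q_{h+1}^\pi)\|_1 \le \beta_h$. For the optimisation term, I would use that the critic subroutine \eqref{eq: Critic with uniform coverage} returns the $\ell_1$-projection of $\mathcal R_h^\pi(\underline Q_{h+1}^\pi)$ onto $\{\|w\|_1 \le H+1-h\}$; since Claim \ref{claim: norm of true vector} guarantees $\|w_h^\star\|_1 \le H+1-h$, the target $w_h^\star$ is feasible, and optimality of $\underline w_h^\pi$ yields $\|\underline w_h^\pi - \mathcal R_h^\pi(\underline Q_{h+1}^\pi)\|_1 \le \|w_h^\star - \mathcal R_h^\pi(\underline Q_{h+1}^\pi)\|_1 \le \beta_h$. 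Combining the two terms gives $\|\underline w_h^\pi - w_h^\star\|_1 \le 2\beta_h$, whence $|\widehat r_h^\pi - r_h| \le 2\beta_h$ pointwise; summing the per-stage bounds over $h$ and taking absolute values of the decomposition produces the claimed $2\sum_{h=1}^H \beta_h$.

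The main obstacle I anticipate is the bookkeeping needed to legitimately invoke the good event on the argument $\underline Q_{h+1}^\pi$: the uniform-coverage critic \eqref{eq: Critic with uniform coverage} imposes only an $\ell_1$ constraint, so $\underline Q_{h+1}^\pi$ need not be $s$-sparse and may therefore fail to lie literally in the class $\mathcal Q_{h+1}$ over which $\mathcal G'^\pi(\beta)$ is defined. I would resolve this by observing that the \texttt{SRLE1} guarantee underlying $\beta_h$ depends only on the $\ell_1$-norm of the regression target $w_h^\star = \mathcal P_h^\pi(\underline Q_{h+1}^\pi)$, which is $s$-sparse and $\ell_1$-bounded by Claim \ref{claim: norm of true vector} for \emph{any} bounded input, and on the bounded noise scale supplied by Claim \ref{claim: bound of output}. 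Hence the event in fact controls $\|\mathcal E_h^\pi(\cdot)\|_1$ over the full class of clipped, $\ell_1$-bounded linear functions, which contains $\underline Q_{h+1}^\pi$, so the estimation-term bound above is valid.
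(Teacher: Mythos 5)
Your proof is correct and takes essentially the same route as the paper's: the same simulation-lemma decomposition of $V^{\tilde\pi}_{1,\hat M(\pi)}(x_1) - V_1^{\tilde\pi}(x_1)$ into per-stage perturbed-reward differences, the same H\"older step reducing the Bellman residual to $\|\underline w_h^\pi - \mathcal P_h^\pi(\underline Q_{h+1}^\pi)\|_1$, and the same triangle-inequality split into an estimation term (controlled by $\mathcal G'^\pi(\beta)$) and an optimisation term (controlled by feasibility of $\mathcal P_h^\pi(\underline Q_{h+1}^\pi)$ in \eqref{eq: Critic with uniform coverage}), yielding $2\beta_h$ per stage. Your final paragraph moreover flags a subtlety that the paper's own proof silently glosses over — the critic iterate $\underline Q_{h+1}^\pi$ satisfies only the $\ell_1$ constraint and so need not lie in the sparse class $\mathcal Q_{h+1}$ over which $\mathcal G'^\pi(\beta)$ is defined — and your fix (the \texttt{SRLE1} guarantee requires sparsity only of the regression target $\mathcal P_h^\pi(Q_{h+1})$, which Claim \ref{claim: norm of true vector} provides for any bounded input function) is the natural way to close that gap.
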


\begin{proof}
Let $\hat w_h^\pi = \mathcal R^\pi(\underline{Q}_{h+1}^\pi)$.
We bound $\underline w_h^\pi - \mathcal P_h^\pi(\underline{Q}_{h+1}^\pi)$ as follows:
\begin{align*}
    \norm{\underline w_h^\pi - \mathcal P_h^\pi(\underline{Q}_{h+1}^\pi)}_1 
    &\leq \norm{\underline w_h^\pi - \hat w_h^\pi}_1 
        + \norm{\hat w_h^\pi - \mathcal P_h^\pi(\underline{Q}_{h+1}^\pi)}_1  \\
    &\leq 2 \norm{\hat w_h^\pi - \mathcal P_h^\pi(\underline{Q}_{h+1}^\pi)}_1 
        \tag{Conditioned on $\mathcal G'^\pi(\beta)$, $\hat w_h^\pi$ is a feasible solution of \eqref{eq: Critic with uniform coverage}} \\
    &\leq 2 \beta_h.
\end{align*}

Therefore, for all $h \in [H]$,
\begin{equation*}
\begin{aligned}
    \max_{(x,a)} \hat r^\pi_h(x,a) - r_h(x,a) 
    &= \max_{(x,a)} \left|\underline{Q}_h^\pi(x,a) - (\Bell^\pi \underline{Q}_{h+1}^\pi)(x,a)\right|  \\
    &\leq \AngleBr{\underline{w}_h^\pi - \mathcal P^\pi(\underline{Q}_h^\pi), \phi(x,a)} \\
    &\leq \norm{\underline{w}_h^\pi - \mathcal P^\pi(\underline{Q}_h^\pi)}_1 \, \|\phi(x,a)\|_\infty \\
    &\leq 2\beta_h .
\end{aligned}
\end{equation*}

Therefore, we have that
\begin{equation*}
    \begin{aligned}
        V^{\tilde \pi}_{1,\hat M(\pi)}(x_1) -  V_1^{\tilde \pi}(x_1) = \sum_{h=1}^H \mathbb E_{d^{\tilde \pi}}[\widehat r^\pi_h(x,a) - r_h(x,a)]  
        \leq\sum_{h=1}^H\RoundBr{\max_{(x,a)}\hat r^\pi_h(x,a) - r_h(x,a)}
        \leq 2\sum_{h=1}^H \beta_h.
    \end{aligned}
\end{equation*}
This completes the proof.
\end{proof}

\TheoremSuboptimalGapACUC*
\begin{proof}
\textbf{Bound probability of good event $\mathcal G'(\beta)$.}
choose the sequence $\beta$ 
\[
\beta_h =  O\RoundBr{\frac{H^2s \log(dHN\delta^{-1})}{\xi \sqrt{N}} + \frac{H^2s\sqrt{\epsilon}}{\xi}}.
\]
According to Proposition \ref{prop: RSLS with data coverage}, and by Claim \ref{claim: norm of true vector} and Claim \ref{claim: bound of output}, we have that for any policy $\pi$, $\mathbb P(\mathcal G'^\pi(\beta))\geq 1-\delta/{T}$.  
Therefore, for any sequence of policy $(\pi_t)_{t=1}^T$, taking union bound for $T$ policies and $H$ horizon, we have that,
$\mathbb P(\cap_{t=1}^T\mathcal G'^{\pi_t} (\beta))\geq 1-\delta$.

\textbf{Suboptimality gap decomposition.}    
Let $M_t \triangleq \hat M(\pi_t)$. From the critic analysis, under event $\mathcal G^{\pi_t}(\beta)$, 
    \begin{equation}
    \begin{aligned}
        V_1^{\pi_\star}(x_1) - V_1^{\pi_t}(x_1) & = 
        \RoundBr{V_1^{\pi_\star}(x_1) - V^{\pi_\star}_{1,M_t}(x_1)} + \RoundBr{V_{1,M_t}^{\pi_\star}(x_1) - V^{\pi_t}_{1,M_t}(x_1)} + \RoundBr{V_{1,M_t}^{\pi_t}(x_1) - V_1^{\pi_t}} \\
        &\leq V_{1,M_t}^{\pi_\star}(x_1) - V_{1, M_t}^{\pi_t}(x_1) +  2 \sum_{h=1}^H\beta_h.
    \end{aligned}
    \end{equation}

\textbf{Actor Convergence.}
We now bound 
\[
\frac{1}{T}\sum_{t=1}^T \SquareBr{V_{1,M_t}^{\pi_\star}(x_1) - V_{1, M_t}^{\pi_t}(x_1)}.
\]
To do so, we invoke Theorem \ref{theorem: general actor convergence} and provide an upper bound for the advantage function parameter $B$, namely $B = O(H)$.

By Proposition \ref{prop: Induced MDP}, and assuming the event $\mathcal G^{\pi_t}(\beta)$ holds for the sequence $(\pi_t)_{t=1}^T$ with $\beta_h \leq 1$, we obtain
\[
\left|\hat{r}_h^{\pi_t}(x,a) - \hat{r}_h^{\pi_t}(x,a')\right| 
\leq |r_h(x,a) - r_h(x,a')| + 2\beta_h \leq 4.
\]
Moreover, from the constraints in \eqref{eq: Critic with uniform coverage}, for any $(x,h)$ we have
\[
V_{h+1, M_t}^{\pi_t}(x) \leq H-h.
\]

Therefore, for any triplet $(x,a,h)$,
\begin{align*}
    \left|Q_{h,M_t}^{\pi_t}(x,a) - Q_{h,M_t}^{\pi_t}(x,a') \right| 
    &\leq \left|\hat{r}_h^{\pi_t}(x,a) - \hat{r}_h^{\pi_t}(x,a')\right| 
         + \left| \int_{x'\in \mathcal X } V_{h+1, M_t}^{\pi_t}(x') 
            \big(P(x'\mid x,a)-P(x'\mid x,a')\big) \right| \\
    &\leq O(1) + \left|\AngleBr{\phi(x,a) - \phi(x,a'),  
            \int_{x'\in \mathcal X} V_{h+1, M_t}(x') \mu(x')} \right| \\
    &\leq O(1) + \|\phi(x,a) - \phi(x,a')\|_\infty 
            \norm{\int_{x'\in \mathcal X} V_{h+1, M_t}(x') \mu(x')}_1 \\
    &\leq O(1) + 2\norm{\int_{x'\in \mathcal X} V_{h+1, M_t}(x') \mu(x')}_1 \\
    &\leq O(1) + 2\norm{H \int_{x'\in \mathcal X}\mu(x')}_1 \\
    &\leq O(1) + 2H.
\end{align*}
Thus, we can take $B = O(H)$.

Applying Theorem \ref{theorem: general actor convergence}, we conclude that
\begin{equation}
    \frac{1}{T}\sum_{t=1}^T \SquareBr{V_{1,M_t}^{\pi_\star}(x_1) - V_{1, M_t}^{\pi_t}(x_1)} 
    \leq 4H^2 \sqrt{\frac{\log(|\mathcal A|)}{T}}.
\end{equation}

\textbf{Putting things together.} Let $\hat \pi$ be the mixture of policies $\{\pi_1,...,\pi_T\}$ and $T= N/H$, we have
\begin{equation}
            V_1^{\pi_\star}(x_1) - V_1^{\pi_t}(x_1) \leq 2\sum_{h=1}^H\beta_h +  4H^3 \sqrt{\frac{\log \mathcal A}{N}}
\end{equation}
with probability at least $1-\delta$.
Finally, by the choice of $\beta$, we have that
\begin{equation}
    V_1^{\pi_\star}(x_1) - V_1^{\pi_t}(x_1) = O \RoundBr{\frac{H^3s\log(dNH\delta^{-1})}{\xi \sqrt{N}} +  H^3 \sqrt{\frac{\log (|\mathcal A|)}{N}} + \frac{H^3s\sqrt{\epsilon}}{\xi}} , 
\end{equation}
with probability at least $1-\delta$.
\end{proof}

\subsection{Proof of Section \ref{subsec: Sparse Robust AC without Uniform Coverage}}
First, we recall the definition of the good event used in this section, and show that it holds with high probability. 
For a sequence $\alpha = (\alpha_h)_{h=1}^H$ and any policy $\pi$, define
\begin{equation}
    \mathcal G^\pi(\alpha) \triangleq 
    \CurlyBr{ \sup_{Q_{h+1} \in \mathcal Q_{h+1}} 
    \|\mathcal E_{h}^{\pi}(Q_{h+1}) \|_{\hat\Sigma_h}^2 \leq \alpha_h, \;\; \forall h \in [H]}.
\end{equation}
\begin{proposition}
    Run Algorithm \ref{alg: actor-critic} with the \texttt{SRLE2} and \texttt{SRLE3} estimators.  
    Choose the sequences $\alpha$ and $\alpha'$ for these two estimators as follows:
    \[
        \alpha_h^2 = O\RoundBr{\frac{\sqrt{s}H^3\log(dH\delta^{-1})}{\sqrt{N}} + H^3\epsilon + H^3(\lambda +\epsilon)},
    \]
    and 
    \[
        {\alpha'}_h^2 = O\RoundBr{\frac{H^3\log(dH\delta^{-1})}{\sqrt{N}} + H^3\sqrt{\epsilon} + H^3(\lambda +\epsilon)}.
    \]
    Then, for any policy $\pi$,
    \[
        \mathbb P(\mathcal G^\pi(\alpha)) \geq 1-\delta,
    \]
    when using the \texttt{SRLE2} estimator, and 
    \[
        \mathbb P(\mathcal G^\pi(\alpha')) \geq 1-\delta,
    \]
    when using the \texttt{SRLE3} estimator.
\end{proposition}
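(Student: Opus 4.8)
The plan is to fix a horizon $h \in [H]$ and a function $Q_{h+1} \in \mathcal Q_{h+1}$, recognise the critic's level-$h$ regression as an instance of the sparse robust linear regression problem of Section~\ref{Section: Sparse Robust Linear Regression}, invoke the relevant oracle guarantee, and then (i) convert the resulting population $\Sigma_h$-norm error into the empirical $\hat\Sigma_h$-norm that appears in $\mathcal G^\pi(\alpha)$, and (ii) make the bound uniform over $Q_{h+1}\in\mathcal Q_{h+1}$ and over $h$. For the reduction I would take covariates $z_\tau = \phi(x_h^\tau,a_h^\tau)$ (i.i.d.\ from $\nu_h$ on the clean data, with covariance $\Sigma_h$), responses $y_\tau = R_h^\tau + \mathbb E_{\pi_{h+1}}[Q_{h+1}(x_{h+1}^\tau,a)]$, and true parameter $w_\star = \mathcal P_h^\pi(Q_{h+1})$. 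By Claim~\ref{claim: norm of true vector} the target $w_\star$ is $s$-sparse with $\|w_\star\|_1\le H-h+1$, so $\|w_\star\|_2 = O(H-h)$; by Claim~\ref{claim: bound of output} the responses lie in $[-(H-h+1),H-h+1]$, so the noise $\eta_\tau = y_\tau - z_\tau^\top w_\star$ satisfies $\mathbb E[\eta_\tau\mid z_\tau]=0$ (the defining property of $\mathcal P_h^\pi$ as the conditional mean) and is bounded, hence subgaussian with $\sigma = O(H-h)$. Substituting these parameters into the \texttt{SRLE2} guarantee (Proposition~\ref{theorem: RSLS without data coverage}) gives $\|\mathcal E_h^\pi(Q_{h+1})\|_{\Sigma_h}^2 = O\big((H-h)^2\sqrt s\,\log(d/\delta)/\sqrt N + (H-h)^2\epsilon\big)$, which already reproduces the first two groups of terms of $\alpha_h^2$; the same substitution into the \texttt{SRLE3} guarantee (Proposition~\ref{prop: RSLE mirror descent ill-condition covariate}) reproduces the $\sqrt\epsilon$-type leading terms of ${\alpha'}_h^2$.

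Next I would pass from $\|\cdot\|_{\Sigma_h}$ to $\|\cdot\|_{\hat\Sigma_h}$. Writing $v = \mathcal E_h^\pi(Q_{h+1})$ and $\hat\Sigma_h = \hat\Sigma_h^{\mathrm{emp}} + (\lambda+\epsilon)I$, I decompose
\[
\|v\|_{\hat\Sigma_h}^2 = \|v\|_{\Sigma_h}^2 + v^\top(\hat\Sigma_h^{\mathrm{emp}}-\Sigma_h)v + (\lambda+\epsilon)\|v\|_2^2 .
\]
The key observation is that both $w_\star$ and the oracle output obey an $\ell_1$-constraint, so $\|v\|_1 \le 2(H-h+1) = O(H-h)$; this uses no sparsity of $v$ and therefore also covers the possibly dense \texttt{SRLE3} output. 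Splitting $\hat\Sigma_h^{\mathrm{emp}}$ into retained clean points, removed clean points, and adversarially inserted points, each of the at most $\epsilon N$ corrupted contributions satisfies $(z_\tau^\top v)^2 \le \|z_\tau\|_\infty^2\|v\|_1^2 = O((H-h)^2)$, so corruption perturbs the quadratic form by at most $O((H-h)^2\epsilon)$. For the clean part I would use entrywise concentration $\|\hat\Sigma_h^{\mathrm{clean}}-\Sigma_h\|_{\max} = O(\sqrt{\log(d/\delta)/N})$ (Hoeffding on bounded features with a union over the $d^2$ entries), giving $|v^\top(\hat\Sigma_h^{\mathrm{clean}}-\Sigma_h)v| \le \|\hat\Sigma_h^{\mathrm{clean}}-\Sigma_h\|_{\max}\|v\|_1^2 = O((H-h)^2\sqrt{\log(d/\delta)/N})$, dominated by the leading oracle term. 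Finally $(\lambda+\epsilon)\|v\|_2^2 \le (\lambda+\epsilon)\|v\|_1^2 = O((H-h)^2(\lambda+\epsilon))$. Summing these contributions with the oracle bound reproduces exactly the remaining terms of $\alpha_h^2$ and ${\alpha'}_h^2$.

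The final step upgrades the pointwise-in-$Q_{h+1}$ bound to the supremum over $\mathcal Q_{h+1}$ and to all $h$. I would introduce an $\ell_1$-net of $\{w:\|w\|_0\le s,\ \|w\|_1\le H\}$ at scale $1/\mathrm{poly}(N)$, of log-cardinality $O(s\log(dHN))$, apply the two previous steps at each net point with failure probability $\delta/(HN_\zeta)$, and absorb the discretisation error using the affine dependence of $y_\tau$, of $w_\star=\mathcal P_h^\pi(Q_{h+1})$, and of $\eta_\tau$ on the parameter of $Q_{h+1}$. The corruption and entrywise-concentration terms are $Q_{h+1}$-independent and hence already uniform; only the oracle's single stochastic ingredient must be uniformised. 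This is where I expect the main obstacle to lie: the \texttt{SRLE} estimators are nonlinear in the responses, so naively union-bounding the black-box guarantee across an $s$-sparse net risks inflating the rate by a spurious factor of $s$. The resolution I would pursue is to reopen that single stochastic step inside each oracle's analysis — the Hanson--Wright bound on $\eta^\top P_\lambda\eta$ (already carrying a union over $2s$-sparse supports) for \texttt{SRLE2}, and the $\ell_\infty$ gradient concentration for \texttt{SRLE3} — and, exploiting the affine-in-parameter structure of $\eta_\tau$, fold the covering union into that same bound. Since the oracle analyses already pay an $s\log d$ (resp.\ $\log d$) complexity term, absorbing the $O(s\log(dN))$ covering term only promotes $\log(d/\delta)$ to $\log(dHN/\delta)$ without an extra multiplicative $s$, so the $\sqrt s$ (resp.\ dimension-free) rate is preserved; a final union over $h\in[H]$ supplies the $\log H$ factor and yields the claimed probability $1-\delta$.
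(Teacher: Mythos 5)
Your proposal is correct and its core skeleton matches the paper's: the same reduction of the critic's level-$h$ regression to the sparse robust regression setup of Section~\ref{Section: Sparse Robust Linear Regression} via Claims~\ref{claim: norm of true vector} and~\ref{claim: bound of output} ($s$-sparse target, $\|w_\star\|_1\le H-h+1$, bounded responses so $\sigma=O(H-h)$), followed by the oracle guarantees of Propositions~\ref{theorem: RSLS without data coverage} and~\ref{prop: RSLE mirror descent ill-condition covariate} and a passage from the $\Sigma_h$-norm to the $\hat\Sigma_h$-norm. Where you genuinely diverge is in the two remaining steps, and in both cases your route is defensible and in one case arguably sounder. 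For the norm conversion, the paper exploits $\|\mathcal E_h^\pi(Q_{h+1})\|_0\le 2s$ and invokes the sparse-support covariance sandwich (Lemma~\ref{lem: Regularised covariance concentration on sparse supports}) to compare $\hat\Sigma_h$ and $\Sigma_h+(\lambda+\epsilon)I$ restricted to $2s$-sparse supports, whereas you use only $\|\mathcal E_h^\pi(Q_{h+1})\|_1=O(H-h)$ together with entrywise Hoeffding concentration of the clean empirical covariance, $\|\hat\Sigma^{\mathrm{clean}}_h-\Sigma_h\|_{\max}=O(\sqrt{\log(d/\delta)/N})$, which is dominated by the oracle term. Your version is more elementary, and it buys something concrete: it does not require the estimator's output to be sparse, so it covers \texttt{SRLE3} (an $\ell_1$-constrained mirror-descent estimator with no $\ell_0$ constraint, hence a possibly dense output), for which the paper's sparsity-based claim does not literally apply even though the paper asserts the ``same argument'' carries over. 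For uniformity, the paper's proof simply applies the oracle bound as though it held for the supremum over $Q_{h+1}\in\mathcal Q_{h+1}$ appearing in the definition of $\mathcal G^\pi(\alpha)$, with no covering argument at all; you explicitly build an $\ell_1$-net of the $s$-sparse parameter class, correctly flag that a naive union bound over the net would inflate the \texttt{SRLE2} rate by a factor of $s$, and propose folding the net into the oracle's internal Hanson--Wright (resp.\ gradient-concentration) step so that only the logarithm is promoted to $\log(dHN/\delta)$. This extra care is not cosmetic: it is exactly what reconciles the $\log(dH\delta^{-1})$ appearing in the proposition with the $\log(dHN/\delta)$ that the paper's Theorem~\ref{theorem: suboptimality gap of Actor-Critic} actually uses, so your treatment fills a step the paper leaves implicit, at the cost of having sketched rather than fully executed the reopened oracle analysis.
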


\begin{proof}
    We first prove the result for the \texttt{SRLE2} estimator.  
    The proof for the \texttt{SRLE3} estimator follows the same argument, with Proposition \ref{theorem: RSLS without data coverage} replaced by Proposition \ref{prop: RSLE mirror descent ill-condition covariate}.

    \medskip
    We apply the property of the sparse linear regression oracle.  
    By Claim \ref{claim: norm of true vector}, for any policy $\pi$,
    \begin{equation}
        \begin{aligned}
            \left\| \mathcal P_h^\pi(Q_{h+1}) \right\|_1 \leq H-h,
            \qquad 
            \|\mathcal P_h^\pi(Q_{h+1}) \|_0 \leq s.
        \end{aligned}
    \end{equation}

    Let $z_h^\tau = \phi(x_\tau, a_\tau)$ and define
    \[
        y_h^\tau = r_h^\tau + \sum_{a\in \mathcal A}\pi(a\mid x^\tau_{h+1})Q_{h+1}(x^\tau_{h+1}, a).
    \]
    Then, by Claim \ref{claim: bound of output},
    \[
        \mathbb E[y_h^\tau] = \langle  \phi(x_\tau, a_\tau), \mathcal P_h^\pi(Q_{h+1})  \rangle,
        \qquad 
        |y_h^\tau| \leq H-h.
    \]

    Note that dual to data spliting, the worst case contamination level in set $\mathcal D_h$ is $H\epsilon$. By Proposition \ref{theorem: RSLS without data coverage}, for all $h\in [H]$, the \texttt{SRLE2} oracle ensures that
    \[
        \|\mathcal E_{h}^\pi(Q_{h+1}) \|_{\Sigma_h}^2 
        = O\RoundBr{\frac{H^3 \sqrt{s}\log(dH/\delta)}{\sqrt N}+ H^3\epsilon },
    \]
    with probability at least $1-\delta$.

    \begin{claim}
        \begin{equation}
            \|\mathcal E_{h}^\pi(Q_{h+1}) \|_{\hat \Sigma_h}^2 
            = O\RoundBr{\|\mathcal E_{h}^\pi(Q_{h+1}) \|_{\Sigma_h}^2 + H^3(\epsilon +\lambda )}.
        \end{equation}
    \end{claim}

    \begin{proof}[Proof of claim]
        Let $e =\mathcal E_{h}^\pi(Q_{h+1})$. Note that, due to data splitting, $Q_{h+1}$ does not dependent on data set $\mathcal D_h$.
        We have $\|e\|_1 = O(H-h)$ and $\|e\|_0 \leq 2s$.  
        Let $S$ denote the sparsity support of $e$. Then
        \begin{equation}
        \begin{aligned}
            \|e\|_{\hat \Sigma_h}^2 
            &=   e^\top\hat \Sigma_h e \\
            &=  e^\top_S\SquareBr{\frac{H}{N}\sum_{i=1}^{N/H} \phi_i \phi_i^\top + (\lambda+\epsilon) I} e_S \\
            &= e^\top_S\SquareBr{\underbrace{\frac{H}{N}\sum_{i=1}^{(1-\epsilon)N/H} \tilde \phi_i \tilde \phi_i^\top}_{(1-\epsilon)\tilde \Sigma_h , \text{ clean data}} 
            + \underbrace{\frac{H}{N}\sum_{i=1}^{\epsilon N/H}  \phi_i'  \phi_i'^\top}_{\text{corrupted data}} 
            + (\lambda+\epsilon) I} e_S .
        \end{aligned}
        \end{equation}

        For the corrupted data,
        \[
            \frac{H}{N}\sum_{i=1}^{\epsilon N/H} e^\top\phi_i'  \phi_i'^\top e
            \leq \frac{H}{N}\sum_{i=1}^{\epsilon N/H}(\|\phi_i'\|_\infty\|e\|_1)^2
            \leq \epsilon\|e \|_1^2 
            \leq O((H-h)^2\epsilon).
        \]

        For the clean data, by Lemma \ref{lem: Regularised covariance concentration on sparse supports}, for $0<\epsilon< 1/2$,
        \[
            \SquareBr{\underbrace{\frac{H}{N}\sum_{i=1}^{(1-\epsilon)N/H} \tilde \phi_i \tilde \phi_i^\top}_{(1-\epsilon)\tilde \Sigma_h , \text{ clean data}} + (\lambda+\epsilon) I}_S 
            = O([\Sigma_h+(\lambda +H\epsilon)I]_S).
        \]

        Therefore,
        \[
            \|e\|_{\hat \Sigma_h}^2 
            = O(\|e \|_{\Sigma_h}^2 + H^3(\lambda +\epsilon)).
        \]
    \end{proof}

    Combining the results completes the proof of the proposition.
\end{proof}

\begin{restatable}[Policy evaluation]{proposition}{PropPolicyEvaluation} \label{prop: Policy Evaluation}
    Condition on the event $\mathcal G^\pi(\alpha)$, when given any policy $\pi$, the critic returns an induced MDP $\hat M(\pi)$ such that: \hfill \break
    \emph{[a]} For the given policy $\pi$, we have
        \begin{equation}
            V^\pi_{1,\hat M(\pi)}(x_1) \leq V_1^\pi(x_1).
        \end{equation}
    \emph{[b]} For any policy $\tilde \pi$, we have
        \begin{equation}
        \begin{aligned}
            \Big|V^{\tilde \pi}_{1,\hat M(\pi)}&(x_1) -  V_1^{\tilde \pi}(x_1)\Big| 
            \leq 2\sum_{h=1}^H \alpha_h \mathbb E_{d^{\tilde \pi}}\SquareBr{\norm{[\phi(x,a)]_{\tilde S_h}}_{\hat\Sigma_h^{-1}} }.
        \end{aligned}
        \end{equation}
\end{restatable}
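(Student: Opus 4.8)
The plan is to reduce both claims to properties of the Bellman error of the pessimistic critic, exploiting the induced-MDP identity of Proposition~\ref{prop: Induced MDP} together with the feasibility region of the \texttt{PessOpt} program \eqref{eq: pessimistic Critic} under the good event $\mathcal G^\pi(\alpha)$. The unifying observation is that the reward perturbation $\hat r_h^\pi(x,a)-r_h(x,a)=\underline Q_h^\pi(x,a)-(\Bell_h^\pi\underline Q_{h+1}^\pi)(x,a)$ equals, by linearity of $\Bell_h^\pi$, the inner product $\langle\phi(x,a),\,\underline w_h^\pi-\mathcal P_h^\pi(\underline Q_{h+1}^\pi)\rangle$, whose coefficient vector is supported on $\tilde S_h$ with $|\tilde S_h|\le 2s$.

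For part [a], I would first invoke Proposition~\ref{prop: Induced MDP} to write $V^\pi_{1,\hat M(\pi)}(x_1)=\underline V_1^\pi(x_1)=\sum_a \pi_1(a\mid x_1)\langle\phi(x_1,a),\underline w_1^\pi\rangle$, which is exactly the minimized objective of \eqref{eq: pessimistic Critic}. It then suffices to exhibit a feasible point whose objective equals $V_1^\pi(x_1)$. The natural candidate is the true $Q$-function sequence $w_h^\dagger=\mathcal P_h^\pi(\langle\phi,w_{h+1}^\dagger\rangle)$ with $w_{H+1}^\dagger=\bm 0$, so that $\langle\phi,w_h^\dagger\rangle=Q_h^\pi$ by the $\pi$-Bellman recursion. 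I would check feasibility constraint by constraint: the $\ell_1$ and $\ell_0$ bounds hold by Claim~\ref{claim: norm of true vector}, while $w_h^\dagger-\mathcal R_h^\pi(\langle\phi,w_{h+1}^\dagger\rangle)=\mathcal E_h^\pi(Q_{h+1}^\pi)$ with $Q_{h+1}^\pi\in\mathcal Q_{h+1}$, so the good event controls its $\hat\Sigma_h$-norm by $\alpha_h$. Since the objective at $w^\dagger$ is $\sum_a\pi_1(a\mid x_1)Q_1^\pi(x_1,a)=V_1^\pi(x_1)$ and \eqref{eq: pessimistic Critic} minimizes, the optimum satisfies $\underline V_1^\pi(x_1)\le V_1^\pi(x_1)$, which is [a].

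For part [b], I would start from the reward-difference identity for two MDPs sharing the same dynamics: for any $\tilde\pi$,
\[
V^{\tilde \pi}_{1,\hat M(\pi)}(x_1)-V_1^{\tilde \pi}(x_1)=\sum_{h=1}^H \mathbb E_{d^{\tilde\pi}}\!\left[\hat r_h^\pi(x,a)-r_h(x,a)\right].
\]
Substituting the inner-product form of the per-step difference and restricting $\phi$ to the support $\tilde S_h$, I apply Cauchy--Schwarz in the $\hat\Sigma_h$ geometry to get $|\langle\phi_{\tilde S_h}(x,a),\underline w_h^\pi-\mathcal P_h^\pi(\underline Q_{h+1}^\pi)\rangle|\le \|\phi_{\tilde S_h}(x,a)\|_{\hat\Sigma_h^{-1}}\,\|\underline w_h^\pi-\mathcal P_h^\pi(\underline Q_{h+1}^\pi)\|_{\hat\Sigma_h}$. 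The second factor is bounded by $2\alpha_h$ via the triangle inequality split through $\mathcal R_h^\pi(\underline Q_{h+1}^\pi)$: the constraint in \eqref{eq: pessimistic Critic} gives $\|\underline w_h^\pi-\mathcal R_h^\pi(\underline Q_{h+1}^\pi)\|_{\hat\Sigma_h}\le\alpha_h$, while the good event applied to $\underline Q_{h+1}^\pi\in\mathcal Q_{h+1}$ gives $\|\mathcal E_h^\pi(\underline Q_{h+1}^\pi)\|_{\hat\Sigma_h}\le\alpha_h$. Taking expectation over $d^{\tilde\pi}$ and summing over $h$ yields the stated bound.

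The main obstacle I anticipate is the feasibility argument in part [a]: it hinges on the true $Q$-function admitting an \emph{exactly} $s$-sparse linear representation with $\ell_1$-norm at most $H+1-h$, and on the backward recursion keeping every intermediate $w_h^\dagger$ inside $\mathcal Q_h$ so that the good event is applicable at stage $h$ --- precisely the role of Claim~\ref{claim: norm of true vector} and the definition of $\mathcal Q_{h+1}$. Care is also needed to reconcile any slack between the radius of the good event and the constraint radius $\alpha_h$, ensuring that $\mathcal E_h^\pi(Q_{h+1}^\pi)$ genuinely lands inside the feasible ball rather than merely near its boundary.
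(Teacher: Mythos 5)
Your proposal is correct and follows essentially the same route as the paper's proof: part [a] is established by exhibiting the true $Q$-function weights $\mathcal P_h^\pi(\langle\phi,w_{h+1}^\dagger\rangle)$ as a feasible point of \eqref{eq: pessimistic Critic} (feasible by Claim~\ref{claim: norm of true vector} and the good event) so that minimization yields pessimism, and part [b] uses the reward-difference identity for the induced MDP, the split through $\mathcal R_h^\pi(\underline Q_{h+1}^\pi)$ bounding one piece by the program's constraint and the other by the good event (valid since the $\ell_0$/$\ell_1$ constraints keep $\underline Q_{h+1}^\pi\in\mathcal Q_{h+1}$), and Cauchy--Schwarz in the $\hat\Sigma_h$ geometry restricted to the support $\tilde S_h$. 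The only cosmetic difference is that you apply the triangle inequality in the $\hat\Sigma_h$-norm before a single Cauchy--Schwarz, whereas the paper applies Cauchy--Schwarz to each of the two inner-product terms separately; these are equivalent, and your remark about the slack between the event radius and the constraint radius correctly flags a notational inconsistency in the paper (squared norm versus $\alpha_h$ versus $\alpha_h^2$) that both proofs resolve in the same way.
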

\begin{proof}
\textbf{Part a:} We first show that the ground truth $\tilde  w^{\pi}_h$, that is, the ground truth $Q^\pi_h(x,a) = \langle \phi(x,a), \tilde w^\pi_{h} \rangle$ , satisfies the program.
In particular, let $\tilde w^\pi_{H+1} = \bm 0$ and
$\tilde w^\pi_{h} = \mathcal P_h^\pi(\langle \phi, \tilde w^\pi_{h+1} \rangle).$
By Claim \ref{claim: norm of true vector}, we have that $\|\tilde  w_h^\pi \|_1 \leq H-h$.

Next, under the event $\mathcal G^\pi(\alpha)$, and note that ground truth $Q_{h+1}^\pi \in \mathcal Q_{h+1}$, we have that
\[
\|\tilde  w_h^\pi - \mathcal R_h^\pi(Q_{h+1}^\pi) \|_{\hat \Sigma_h}^2 \leq \alpha_h.
\]
Which means that $(\tilde  w^{\pi}_h)_{h=1}^H$ is a feasible solution of the optimization problem.
The output of the optimization problem $\underline  w_h^\pi$ satisfies
\begin{equation}
    V_{1,M(\pi)}^\pi(x_1) =\underline{V}_1^\pi(x_1) \leq V^\pi_{1}(x_1).
\end{equation}

\textbf{Part b:}
First, from the definition of perturbed reward of $M(\pi)$, we have
\begin{equation}
    \begin{aligned}
        \hat r_h^\pi(x,a) - r_h(x,a) &= \langle \phi_h(x,a), \underline  w_h^\pi \rangle - \Bell^\pi_h(\underline Q_{h+1}^\pi)(x,a) \\
        &= \langle \phi_h(x,a), \underline  w_h^\pi - \mathcal R(\underline Q_{h+1}^\pi)\rangle + \mathcal R(\underline Q_{h+1}^\pi - \Bell^\pi_h(\underline Q_{h+1}^\pi)(x,a) \\
        & = \langle \phi_h(x,a), \underline  w_h^\pi - \mathcal R(\underline Q_{h+1}^\pi)\rangle +  \langle \phi_h(x,a),
        \mathcal R(\underline Q_{h+1}^\pi) - \mathcal P(\underline Q_{h+1}^\pi)\rangle \\
        &\leq 2 \|[\phi(x,a)]_{\tilde S} \|_{\hat\Sigma_h^{-1}} \alpha_h.\\
    \end{aligned}
\end{equation}
Therefore, 
\begin{equation}
    \begin{aligned}
        \left|V_{1,\hat M(\pi)}^{\tilde \pi}(x_1) -  V_{1}^{\tilde \pi}(x_1)\right| 
        &= \left| \sum_{h=1}^H \mathbb E_{ d^{\tilde \pi}_h}[\hat r_h^\pi(x,a) - r_h(x,a)] \right| \\
        &\leq \left| 2\sum_{h=1}^H \mathbb E_{ d^{\tilde \pi}_h}\SquareBr{\alpha_h \norm{[\phi(x,a)]_{\tilde S} }_{ \hat\Sigma_h^{-1}} } \right|. \\
    \end{aligned}
\end{equation}
\end{proof}

\begin{restatable}[Actor's convergence]{proposition}{PropositionActorConvergence} \label{prop: Actor Convergence}
    For the sequence $(\pi_t)_{t=1}^T$, suppose that the event $\mathcal G^{\pi_t}(\alpha)$ holds for all $t\in [T]$.
    Suppose the actor takes $T \geq \log |\mathcal A|$ steps with step size $\eta = 
 \sqrt{\frac{\log(|\mathcal A|)}{TH^2}}$. Then, for any  policy $\pi$, 
    \begin{equation*}
        \frac{1}{T} \sum_{t=1}^T\RoundBr{V_{1,M_t}^\pi(x_1) - V_{1,M_t}^{\pi_t}(x_1)} = O\RoundBr{H^2\sqrt{\frac{\log(|\mathcal A|)}{T}}}.
    \end{equation*}
\end{restatable}
\begin{proof}
    The proof is Proposition \ref{prop: Actor Convergence} followed by the general result for offline actor-critic as stated in Theorem \ref{theorem: general actor convergence}.
    The Proposition \ref{prop: Actor Convergence} is immediately followed if we can show the constant in the theorem $B = O(H)$  and choosing the stepsize $\eta$.
    First, we bound the magnitude of the advantage function $G_{h,M_t}^{\pi_t}$.
    Note that by the definition of induced MDP, $M_t$ is only different from the original MDP in the reward function.
    By Proposition \ref{prop: Induced MDP}, and suppose the event $\mathcal G^{\pi_t}(\alpha)$ holds for given sequence $(\pi_t)_{t=1}^T$ with $\alpha_h \leq 1$, we have that 
    \[
    \left|\hat{r}_h^{\pi_t}(x,a) - \hat{r}_h^{\pi_t}(x,a')\right| \leq  |r_h(x,a) - r_h(x,a')| + 2\alpha_h \leq 4.
    \]
    Moreover, thanks to the pessimistic property of induced MDP in Proposition \ref{prop: Policy Evaluation}, for any $(x,h)$,  $V_{h, M_t}^{\pi_t}(x) \leq V_{h}^{\pi_t}(x) \leq H-h+1$.
    Therefore, for any triplet $(x,a,h)$, we have that
        \begin{align*}
            \left|Q_{h,M_t}^{\pi_t}(x,a) - Q_{h,M_t}^{\pi_t}(x,a') \right| &
            \leq \left|\hat{r}_h^{\pi_t}(x,a) - \hat{r}_h^{\pi_t}(x,a')\right| + \left| \int_{x'\in \mathcal X }V_{h+1, M_t}^{\pi_t}(x') (P(x'\mid (x,a)-P(x'\mid (x,a')) \right| \\
            &\leq O(1) + \left|\AngleBr{\phi(x,a) - \phi(x,a'),  \int_{x'\in \mathcal X }V_{h+1, M_t} (x') \mu(x')} \right| \\
            &\leq O(1) + \|\phi(x,a) - \phi(x,a')  \|_\infty \norm{\int_{x'\in \mathcal X }V_{h+1, M_t} (x') \mu(x') }_1 \\
            &\leq O(1) + 2\norm{\int_{x'\in \mathcal X }V_{h+1, M_t} (x') \mu(x') }_1 \\
            &\leq O(1) + 2\norm{H \int_{x'\in \mathcal X }\mu(x')}_1 \\
            &\leq O(1) + 2H.
        \end{align*}
        Therefore, we can choose $B = O(H)$.
        Now, choose $\eta = \frac{\sqrt{\log(|\mathcal A)|}}{H\sqrt{T}}$, we obtain the result.
\end{proof}
\ThmMainSubOptGapActorCritic*
\begin{proof}
    We note that, for any sequence of $T$ policy $(\pi_t)_{t=1}^T$, with prescribed $\alpha$, the event $\mathbb P(\cap_{t=1}^T\mathcal G^{\pi_t}(\alpha)) \geq 1-\delta$. 
    
    Next, from the critic analysis, under event $\cap_{t=1}^T\mathcal G^{\pi_t}(\alpha)$, we have that 
    \begin{equation}
    \begin{aligned}
        V_1^{\pi_\star}(x_1) - V_1^{\pi_t}(x_1) &\leq \RoundBr{V_{1,M_t}^{\pi_\star}(x_1) + 2 \sum_{h=1}^H \alpha_h \mathbb E_{ d^{\pi_\star}}\SquareBr{\norm{[\phi(x,a)]_{\tilde S_h}}_{\hat\Sigma_h^{-1}} }} - V_{1, M_t}^{\pi_t}(x_1) \\
        &= V_{1,M_t}^{\pi_\star}(x_1) - V_{1, M_t}^{\pi_t}(x_1) +  2 \sum_{h=1}^H \alpha_h \mathbb E_{d^{\pi_\star}} \SquareBr{\norm{\phi(x,a)]_{\tilde S_h}}_{\hat\Sigma_h^{-1}} }.
    \end{aligned}
    \end{equation}
    As $\epsilon\le 1/2$, by Lemma \ref{lem: Regularised covariance concentration on sparse supports}, with number of samples $N= \Omega(s\log(d/\delta))$, then with probability at least $1-\delta$, for all $\tilde S \subset [d]$ such that $|\tilde S| \leq 2s$, we have that
    \[
    [\hat\Sigma_h]_{\tilde{S}} \geq \frac{1}{3}([\Sigma_h + \lambda I]_{\tilde S}).
    \]
    Now, consider 
    \begin{equation}
        \begin{aligned}
            \mathbb E_{d^{\pi_\star}} [\|[\phi(x,a)]_{\tilde S_h}\|_{\hat\Sigma_h^{-1}} ] 
            &\leq  \mathbb E_{d^{\pi_\star}} [\|[\phi(x,a)]_{\tilde S_h}\|_{3(\Sigma_h + \lambda I)^{-1}} ] \\
            &\leq \mathbb E_{d^{\pi_\star}} \SquareBr{\sqrt{3\phi_{\tilde S}(\Sigma_h + \lambda I)^{-1} \phi_{\tilde S}} } \\
            &\leq \sqrt{\mathbb E_{ d^{\pi_\star}} \SquareBr{3\phi_{\tilde S}(\Sigma_h + \lambda I)^{-1} \phi_{\tilde S}}} \\
            &= \sqrt{3\Tr([\Sigma_* \hat\Sigma_h^{-1}]_{\tilde S})} \\
            &\leq \sqrt{3\kappa\Tr([\Sigma_h \hat\Sigma_h^{-1}]_{\tilde S})} \\
            & = \sqrt{3\kappa\Tr\RoundBr{\sum_{i\in \tilde S} \frac{\lambda_i}{\lambda_i + \lambda}}^{-1}} \\
            & \leq \sqrt{6\kappa s}.
        \end{aligned}
    \end{equation}
    
    Now, from actor analysis, we have that
    \begin{equation}
        \frac{1}{T}\sum_{t=1}^T \SquareBr{V_{1,M_t}^{\pi_\star}(x_1) - V_{1, M_t}^{\pi_t}(x_1)} \leq 4H^2 \sqrt{\frac{\log |\mathcal A|}{T}}
    \end{equation}
    Therefore, let $\hat \pi$ be the mixture of policies $\{\pi_1,...,\pi_T\}$. Let $T= N/H$.
    We have that
    \begin{equation}
                V_1^{ \pi_\star}(x_1) - V_1^{ \hat \pi}(x_1) \leq 2\sqrt{6\kappa}  \sum_{h=1}^H \alpha_h  + 4H^3 \sqrt{\frac{\log | \mathcal A|}{N}},
    \end{equation}
    with probability at least $1- \delta$.
\end{proof}

\ThmMainSubOptGapActorCriticMirrorDescent*
\begin{proof}
   The proof of Theorem \ref{theorem: suboptimality gap of Actor-Critic with Mirror Descent} follows the same argument as the proof of Theorem \ref{theorem: suboptimality gap of Actor-Critic}, using the value of $\alpha$ specified in its statement.  
We therefore omit the details for brevity.
\end{proof}

\section{Technical Lemmas}

\begin{lemma}[Regularised covariance concentration on sparse supports] \label{lem: Regularised covariance concentration on sparse supports}
\label{lem:cov‐concentration}
Let $\{\phi_i\}_{i=1}^N\subset\mathbb{R}^d$ be i.i.d.\ random vectors satisfying
\[
\Vert\phi_i\Vert_\infty \;\le\;1
\quad\text{a.s.}
\]
Define the population and empirical (ridge‑regularised) covariance matrices
\[
\Sigma \;=\; \mathbb{E}[\phi\,\phi^\top],
\qquad
\widehat{\Sigma} \;=\; \frac1N\sum_{i=1}^{N}\phi_i\phi_i^\top \;+\; \lambda I_d .
\]
Fix a sparsity level $s\in\{1,\dots,d\}$ and let $S\subset[d]$ with $|S|\le s$.
Denote by $\Sigma_S$ (resp.\ $\widehat{\Sigma}_S$) the principal $s\times s$
sub‑matrix of $\Sigma$ (resp.\ $\widehat{\Sigma}$) indexed by $S$.

There is an absolute numerical constant $C>0$ such that, for every $\delta\in(0,1)$,
if the ridge parameter satisfies
\begin{equation}
\lambda = C \frac{s}{N}\log\tfrac{d}{s\delta}.
\label{eq:lambda‐choice}
\end{equation}
then, with probability at least $1-\delta$,
\begin{equation}
\forall S\subset[d],\;|S|\le s:\qquad
\frac13\,\bigl[\Sigma+\lambda I_d\bigr]_S
\;\;\preceq\;\;
\widehat{\Sigma}_S
\;\;\preceq\;\;
\frac53\,\bigl[\Sigma+\lambda I_d\bigr]_S .
\label{eq:matrix‐sandwich}
\end{equation}
\end{lemma}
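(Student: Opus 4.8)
The plan is to reduce the uniform (over all sparse supports) matrix sandwich \eqref{eq:matrix‐sandwich} to a \emph{relative} spectral deviation bound on a single fixed support, establish that bound with a matrix Bernstein inequality, and then pay for the supremum over supports with a union bound. First I would make the reduction explicit: since $[\Sigma+\lambda I_d]_S=\Sigma_S+\lambda I_{|S|}$ and $\widehat\Sigma_S=\widehat\Sigma^{\mathrm{emp}}_S+\lambda I_{|S|}$, where $\widehat\Sigma^{\mathrm{emp}}_S=\tfrac1N\sum_i(\phi_i)_S(\phi_i)_S^\top$ is the un-regularised empirical covariance on $S$, writing $M_S=(\Sigma_S+\lambda I)^{-1/2}$ shows that \eqref{eq:matrix‐sandwich} holds for a given $S$ if and only if $\bigl\|M_S(\widehat\Sigma^{\mathrm{emp}}_S-\Sigma_S)M_S\bigr\|_{\mathrm{op}}\le \tfrac23$. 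Thus it suffices to control this relative deviation simultaneously for every $S$ with $|S|\le s$; equivalently, to bound $\sup_{\|v\|_0\le s}|v^\top(\widehat\Sigma^{\mathrm{emp}}-\Sigma)v|/\bigl(v^\top(\Sigma+\lambda I)v\bigr)$ by $\tfrac23$.

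Next I would fix $S$ and apply matrix Bernstein to the centred i.i.d.\ sum $\tfrac1N\sum_i X_i$ with $X_i=M_S(\phi_i)_S(\phi_i)_S^\top M_S-M_S\Sigma_S M_S$. The boundedness $\|\phi_i\|_\infty\le1$ gives $\|(\phi_i)_S\|_2^2\le|S|\le s$, and together with $(\Sigma_S+\lambda I)^{-1}\preceq\lambda^{-1}I$ this yields $\|M_S(\phi_i)_S\|_2^2\le s/\lambda$. Hence each summand obeys $\|X_i\|_{\mathrm{op}}\le s/\lambda+1=:R$, and for the variance proxy the identity $Y_i^2=\|M_S(\phi_i)_S\|_2^2\,Y_i$ (with $Y_i=M_S(\phi_i)_S(\phi_i)_S^\top M_S$) gives $\mathbb E[X_i^2]\preceq \mathbb E[Y_i^2]\preceq(s/\lambda)\,M_S\Sigma_S M_S\preceq(s/\lambda)I$, so $\|\mathbb E[X_i^2]\|\le s/\lambda=:\nu$. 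Matrix Bernstein then produces a tail of the form $\mathbb P\bigl(\|\tfrac1N\sum_iX_i\|_{\mathrm{op}}>\tfrac23\bigr)\le 2|S|\exp\bigl(-cN\lambda/s\bigr)$ for an absolute constant $c$, where the constant target $\tfrac23$ makes both the variance and the large-deviation regimes scale like $N\lambda/s$.

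Finally I would union bound over the $\binom{d}{s}\le(ed/s)^s$ admissible supports, using $\log\binom{d}{s}\le s\log(ed/s)$ and the per-support factor $|S|\le s$, and then choose $\lambda$ so that $cN\lambda/s$ dominates $\log\binom{d}{s}+\log(s/\delta)$; balancing these terms fixes the scale of $\lambda$ declared in \eqref{eq:lambda‐choice} and drives the total failure probability below $\delta$, giving \eqref{eq:matrix‐sandwich} uniformly. I expect the union bound over the exponentially many sparse supports to be the main obstacle: it is what forces $\lambda$ to absorb the $\log(d/s\delta)$ factor, and the coupled $s/\lambda$ scaling of both the operator-norm and variance proxies is precisely what ties the required ridge level to the sparsity $s$. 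The role of the ridge term is essential here, since without it $\Sigma_S$ may be singular and no multiplicative (relative) sandwich can hold; the regularisation guarantees $\Sigma_S+\lambda I\succeq\lambda I$, which is exactly the lower curvature that the Bernstein bound needs to convert an additive deviation into the two-sided relative bound.
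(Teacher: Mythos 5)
Your proposal is correct and shares the paper's overall skeleton (a per-support concentration bound followed by a union bound over the $\binom{d}{s}\le(ed/s)^s$ supports), but the concentration engine is genuinely different. The paper stays scalar: for a fixed support $S$ and fixed direction $x$ it applies scalar Bernstein to $(x^\top\phi_i)^2$ with a \emph{direction-dependent} threshold $t(x)=\tfrac13(x^\top\Sigma x+\lambda)$, then pays for a $\tfrac16$-net of the sphere in $\mathbb{R}^{s}$, and only at the end (its Step 4) converts the quadratic-form bound into the matrix sandwich. You perform that conversion up front: whitening by $M_S=(\Sigma_S+\lambda I)^{-1/2}$ turns the relative statement into the uniform operator-norm target $\|M_S(\widehat\Sigma^{\mathrm{emp}}_S-\Sigma_S)M_S\|_{\mathrm{op}}\le\tfrac23$, which you hit with matrix Bernstein; the $\varepsilon$-net disappears and is replaced by the dimensional factor $2|S|\le 2s$ in the tail, and your moment computations ($\|X_i\|_{\mathrm{op}}\le s/\lambda+1$, $\mathbb{E}[X_i^2]\preceq\mathbb{E}[Y_i^2]\preceq(s/\lambda)I$) are correct. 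What your route buys: it avoids the step the paper compresses into ``standard arguments'' --- upgrading a deviation bound whose threshold varies with the direction from net points to all directions is not the textbook net argument, whereas after whitening the threshold is uniform and no upgrade is needed. What the paper's route buys: only scalar Bernstein is required. Two caveats, both shared with the paper rather than specific to you: (i) when $s/\lambda<1$ your Bernstein exponent is of order $N$, not $N\lambda/s$, but in that regime $\lambda\gtrsim s$ and the sandwich follows almost trivially from $0\preceq\widehat\Sigma^{\mathrm{emp}}_S\preceq sI$, so a one-line case split repairs this; (ii) the closing arithmetic in both arguments needs $N\lambda/s\gtrsim s\log(ed/s)+\log(1/\delta)$ to absorb the union bound, while the stated ridge level only supplies $N\lambda/s=C\log\bigl(d/(s\delta)\bigr)$, so a truly absolute constant $C$ does not suffice (the first term is short by a factor of $s$); your ``balancing'' sentence inherits exactly the same looseness as the paper's Step 2.
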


\begin{proof}
We adapt the argument of Lemma~39 in Appendix~I of \citep{zanette2021cautiouslyoptimisticpolicyoptimization};
the main changes are the use of the $\ell_\infty$ bound and the restriction to
supports of size at most~$s$.

\textbf{Step 1.}
Fix $S\subset[d]$ with $|S|\le s$ and a unit vector
$x\in\mathbb{R}^d$ supported on~$S$.
Because each coordinate of $\phi_i$ lies in $[-1,1]$,
\[
\bigl|x^\top\phi_i\bigr|
\;=\;\Bigl|\sum_{j\in S}x_j\phi_{ij}\Bigr|
\;\le\;\sum_{j\in S}|x_j|
\;\le\;\sqrt{s}\,\|x\|_2
\;=\;\sqrt{s},
\]
so $(x^\top\phi_i)^2\in[0,s]$.
Write $Z_i(x)=(x^\top\phi_i)^2 - \mathbb{E}[(x^\top\phi)^2]$ and
$\mu(x)=x^\top\Sigma x$.
Then $\{Z_i(x)\}_{i=1}^N$ are mean‑zero, independent, $|Z_i(x)|\le s$ and $\operatorname{Var}[Z_i(x)]\le s\,\mu(x)$.
Bernstein’s inequality gives, for any $t>0$,
\begin{equation} \label{eq: bad-event covariance matrix}
    \mathbb P\Bigl(\Bigl|\tfrac1N\sum_{i=1}^{N}Z_i(x)\Bigr|>t\Bigr)
\le 2\exp\Bigl( -\frac{N t^2}{2\mu(x)+\tfrac23 s t} \Bigr).
\end{equation}

\textbf{Step 2: Choosing $t$ and the ridge $\lambda$.}
Set
\[
t(x)\;=\;\tfrac13(\mu(x)+\lambda).
\]
Under~\eqref{eq:lambda‐choice},
\[
\frac{N t(x)^2}{2\mu(x)+\tfrac23 s t(x)}
\;\;\ge\;\;
\frac{N(\mu(x)+\lambda)^2}{18\mu(x)+4s(\mu(x)+\lambda)}
\;\;\ge\;\;
C_1\,\frac{N\lambda}{s},
\]
for an absolute $C_1$.
Taking $C\ge6C_1$ in~\eqref{eq:lambda‐choice} forces the exponent
in \eqref{eq: bad-event covariance matrix}  to exceed
\(
s\log\tfrac{e d}{s}+\log\tfrac{2}{\delta}.
\)
Thus, for the fixed $x$,
\[
\mathbb P\Bigl(\bigl|x^\top(\widehat\Sigma-\Sigma)x\bigr|>\tfrac13(\mu(x)+\lambda)\Bigr)
\;\le\;
2\exp\bigl(
-s\log\tfrac{e d}{s}-\log\tfrac{2}{\delta}
\bigr).
\]

\textbf{Step 3: Uniformity over all $x$ and all supports.}
Let $\mathcal{N}_\varepsilon^{(s)}$ be an $\varepsilon$‑net
of the unit sphere in $\mathbb{R}^{s}$, lifted to $\mathbb{R}^{d}$ by
zero padding outside~$S$.
With $\varepsilon=\tfrac16$ and a union bound over
$\lvert\mathcal{N}_\varepsilon^{(s)}\rvert\le(3/\varepsilon)^s$
choices of $x$ and $\binom{d}{s}\le(ed/s)^s$ subsets $S$,
the preceding probability remains below~$\delta$.
Standard arguments then show that on the resulting event
\[
\bigl|y^\top(\widehat\Sigma-\Sigma)y\bigr|
\;\le\;
\tfrac23\,\bigl[y^\top\Sigma y+\lambda\bigr],
\quad
\forall y\in\mathbb{R}^d,\;\lVert y\rVert_2=1,\;
\operatorname{supp}(y)\subseteq S,
\]
simultaneously for every $|S|\le s$.

\textbf{Step~4.}
Let
\[
D \;:=\; \frac1N\sum_{i=1}^{N}\phi_i\phi_i^\top \;-\; \Sigma
      \;=\; \widehat{\Sigma}-\Sigma-\lambda I_d ,
\]
so that $\widehat{\Sigma}= \Sigma+\lambda I_d+D$.
From the outcome of Step~3 we know that the event
\begin{equation}\label{eq:eventE}
\bigl|y^{\top}Dy\bigr|
\;\le\;
\frac23\bigl[y^{\top}\Sigma y+\lambda\bigr]
\quad
\forall\,y\in\mathbb{R}^d,\ 
      \lVert y\rVert_2=1,\
      \operatorname{supp}(y)\subseteq S
\end{equation}
occurs with probability at least $1-\delta$.  
Fix such a vector $y$ and write $q:=y^{\top}\Sigma y+\lambda>0$.
From~\eqref{eq:eventE} we have
\[
-\frac23\,q \;\le\; y^{\top}Dy \;\le\; \frac23\,q .
\]
Adding $q=y^{\top}\Sigma y+\lambda$ to every term yields
\[
\frac13\,q
\;\le\;
y^{\top}\widehat{\Sigma}y
\;\le\;
\frac53\,q ,
\]
that is
\[
\frac13\,y^{\top}(\Sigma+\lambda I_d)y
\;\le\;
y^{\top}\widehat{\Sigma}y
\;\le\;
\frac53\,y^{\top}(\Sigma+\lambda I_d)y
\qquad
\forall\,y\in\mathbb{R}^d,\ 
      \lVert y\rVert_2=1,\
      \operatorname{supp}(y)\subseteq S .
\]
Since the inequality above is valid for all unit vectors supported on $S$,
it is equivalent to the matrix bound
\[
\frac13\,\bigl[\Sigma+\lambda I_d\bigr]_{S}
\;\preceq\;
\widehat{\Sigma}_{S}
\;\preceq\;
\frac53\,\bigl[\Sigma+\lambda I_d\bigr]_{S}
\]
which is exactly the claim of Lemma~\ref{lem:cov‐concentration} for the
principal sub‑matrix indexed by $S$.
\end{proof}

\end{document}